\documentclass{article}

    \PassOptionsToPackage{numbers, compress}{natbib}

\usepackage[main, final]{neurips_2026}
\usepackage[utf8]{inputenc} 
\usepackage[T1]{fontenc}    
\usepackage{hyperref}       
\usepackage{url}            
\usepackage{booktabs}       
\usepackage{amsfonts}       
\usepackage{nicefrac}       
\usepackage{microtype}      
\usepackage{xcolor}         

\usepackage{pifont}
\usepackage{amsmath}
\allowdisplaybreaks
\usepackage{amssymb}
\usepackage{mathtools}
\usepackage{amsthm}

\usepackage{subfigure}
\usepackage{tikz-cd}
\usepackage{mathrsfs}
\usepackage{bbm}

\usepackage{enumitem}
\setlist[itemize,enumerate]{noitemsep, topsep=0pt, leftmargin=1em}
\usepackage{multirow}
\usepackage{colortbl}
\usepackage{diagbox}  

\usepackage{caption}
\usepackage{subcaption}
\usepackage{wrapfig}
\usepackage{makecell}
\graphicspath{{Figures/}}
\usepackage{titletoc}
\usepackage{crossreftools}
\usepackage{aliascnt}

\theoremstyle{plain}
\newtheorem{theorem}{Theorem}[section]

\newaliascnt{proposition}{theorem}
\newtheorem{proposition}[proposition]{Proposition}
\aliascntresetthe{proposition}

\newaliascnt{lemma}{theorem}
\newtheorem{lemma}[lemma]{Lemma}
\aliascntresetthe{lemma}

\newaliascnt{corollary}{theorem}

\aliascntresetthe{corollary}

\theoremstyle{definition}
\newaliascnt{definition}{theorem}
\newtheorem{definition}[definition]{Definition}
\aliascntresetthe{definition}

\newaliascnt{assumption}{theorem}

\aliascntresetthe{assumption}

\theoremstyle{remark}
\newaliascnt{remark}{theorem}
\newtheorem{remark}[remark]{Remark}
\aliascntresetthe{remark}

\usepackage[capitalize,noabbrev]{cleveref}

\providecommand{\calN}{\mathcal{N}}

\providecommand{\calL}{\mathcal{L}}

\providecommand{\bbZ}{\mathbb{Z}}
\providecommand{\bbK}{\mathbb{K}}
\providecommand{\bbD}{\mathbb{D}}
\providecommand{\bbL}{\mathbb{L}}
\providecommand{\bbV}{\mathbb{V}}
\providecommand{\bbW}{\mathbb{W}}
\providecommand{\sym}[1]{\mathcal{S}^{#1}}
\providecommand{\spd}[1]{\mathcal{S}^{#1}_{++}}

\providecommand{\tril}[1]{\mathcal{L}^{#1}}
\providecommand{\bbR}[1]{\mathbb {R}^{#1}}

\providecommand{\bbRscalar}{\mathbb {R}}

\providecommand{\orth}[1]{\mathrm{O}({#1})}

\providecommand{\chol}{\operatorname{Chol}}
\providecommand{\dexp}{\operatorname{Dexp}}
\providecommand{\calM}{\mathcal{M}}
\providecommand{\calH}{\mathcal{H}}
\providecommand{\stiefel}[1]{\mathrm{St}(#1)}
\providecommand{\GL}[1]{\mathrm{GL}({#1})}

\providecommand{\dist}{\operatorname{d}}
\providecommand{\rieexp}{\operatorname{Exp}}
\providecommand{\rielog}{\operatorname{Log}}
\providecommand{\pt}[2]{\Gamma_{#1 \rightarrow #2}}

\providecommand{\gyr}{\operatorname{gyr}}
\providecommand{\gyrinner}[2]{\left\langle #1, #2 \right\rangle_{\mathrm{gr}}}

\providecommand{\scrL}{\mathscr{L}}
\providecommand{\bbD}{\mathbb {D}}

\providecommand{\tr}{\operatorname{tr}}

\providecommand{\dlog}{\operatorname{Dlog}}
\providecommand{\pow}{\operatorname{P}}

\providecommand{\inner}[2]{\left\langle #1,#2 \right\rangle}
\providecommand{\Fnorm}[1]{\left\| #1 \right\|_{\mathrm{F}}}
\providecommand{\norm}[1]{\left\| #1 \right\|}
\providecommand{\alphabetainner}[2]{\left\langle #1,#2 \right\rangle^{(\alpha,\beta)}}
\providecommand{\bbzero}{\mathbf{0}}

\providecommand{\LE}{\mathrm{LE}}
\providecommand{\AI}{\mathrm{AI}}
\providecommand{\LC}{\mathrm{LC}}
\providecommand{\PE}{\mathrm{PE}}
\providecommand{\BW}{\mathrm{BW}}

\providecommand{\alphabeta}{(\alpha,\beta)}

\providecommand{\odotle}{\odot^{\mathrm{LE}}}
\providecommand{\odotlc}{\odot^{\mathrm{LC}}}

\providecommand{\oplusle}{\oplus^{\mathrm{LE}}}
\providecommand{\opluslc}{\oplus^{\mathrm{LC}}}

\providecommand{\grasonb}[1]{\mathrm{Gr}(#1)}
\providecommand{\graspp}[1]{\widetilde{\mathrm{Gr}}(#1)}
\providecommand{\onb}{\mathrm{ONB}}
\providecommand{\pp}{\mathrm{PP}}

\providecommand{\rank}{\operatorname{rank}}
\providecommand{\idonb}{I_{p,n}}
\providecommand{\idonbqm}{I_{q,m}}

\providecommand{\idpp}{\widetilde{I}_{p,n}}
\providecommand{\idppqm}{\widetilde{I}_{q,m}}

\providecommand{\qr}{\mathcal{Q}}

\providecommand{\gyr}{\operatorname{gyr}}

\providecommand{\pball}[1]{\mathbb{P}^{#1}_K}
\providecommand{\Moplus}{\oplus_\mathrm{M}}
\providecommand{\Mominus}{\ominus _\mathrm{M}}
\providecommand{\Motimes}{\otimes _\mathrm{M}}

\providecommand{\klein}[1]{\mathbb{K}^{#1}_K}
\providecommand{\Eoplus}{\oplus_\mathrm{E}}
\providecommand{\Eominus}{\ominus_\mathrm{E}}
\providecommand{\Eotimes}{\otimes _\mathrm{E}}

\providecommand{\bbh}[1]{\mathbb{H}^{#1}_K}
\providecommand{\Lnorm}[1]{\left\| #1 \right\|_{\mathcal{L}}}
\providecommand{\Linner}[2]{\left\langle #1, #2 \right\rangle_{\mathcal{L}}}

\providecommand{\sign}{\operatorname{sign}}
\providecommand{\calF}{\mathcal{F}}
\providecommand{\bfP}{\mathbf{P}}
\providecommand{\bfA}{\mathbf{A}}

\definecolor{HilightColortwo}{gray}{0.95}

\definecolor{HilightColor}{RGB}{240, 255, 240} 

\providecommand{\ColFirst}[1]{\ifmmode{\color{red}\boldsymbol{#1}}\else{\color{red}\bfseries\boldmath #1}\fi}
\providecommand{\ColSecond}[1]{\ifmmode{\color{blue}\boldsymbol{#1}}\else{\color{blue}\bfseries\boldmath #1}\fi}
\providecommand{\ColThird}[1]{\ifmmode{\color{cyan}\boldsymbol{#1}}\else{\color{cyan}\bfseries\boldmath #1}\fi}

\newcommand{\na}{\textcolor{gray}{N/A}}%
\newcommand{\cmark}{\textcolor{green}{\text{\ding{51}}}}%
\newcommand{\xmark}{\textcolor{red}{\text{\ding{55}}}}
\renewcommand{\tiny}{\fontsize{5pt}{6pt}\selectfont}
\providecommand{\ie}{\emph{i.e.}}

\crefname{equation}{Eq.}{Eqs.}
\Crefname{equation}{Equation}{Equations}
\crefname{figure}{Fig.}{Figs.}
\Crefname{figure}{Figure}{Figures}
\crefname{table}{Tab.}{Tabs.}
\Crefname{table}{Table}{Tables}
\crefname{algocf}{Alg.}{Algs.}
\Crefname{algocf}{Algorithm}{Algorithms}
\crefname{section}{Sec.}{Secs.}
\Crefname{section}{Section}{Sections}
\crefname{appendix}{App.}{Apps.}
\Crefname{appendix}{Appendix}{Appendices}

\crefname{theorem}{Thm.}{Thms.}
\Crefname{theorem}{Theorem}{Theorems}
\crefname{lemma}{Lem.}{Lems.}
\Crefname{lemma}{Lemma}{Lemmas}
\crefname{definition}{Def.}{Defs.}
\Crefname{definition}{Definition}{Definitions}
\crefname{corollary}{Cor.}{Cors.}
\Crefname{corollary}{Corollary}{Corollaries}
\crefname{remark}{Rmk.}{Rmks.}
\Crefname{remark}{Remark}{Remarks}
\crefname{proposition}{Prop.}{Props.}
\Crefname{proposition}{Proposition}{Propositions}
\crefname{proof}{Pr.}{Prs.}
\Crefname{proof}{Proof}{Proofs}

\input{link_proof}
\usepackage[acronym, toc, hyperfirst=true]{glossaries-extra}
\glsdisablehyper

\newglossary*{notation}{List of Symbols}
\setglossarystyle{long} 
\setabbreviationstyle[acronym]{long-short} 

\makeglossaries 

\newacronym[sort=nn]{CNNs}{CNNs}{Convolutional Neural Networks}
\newacronym[sort=nn]{RNNs}{RNNs}{Recurrent Neural Networks}
\newacronym[sort=nn]{DNNs}{DNNs}{Deep Neural Networks}
\newacronym[sort=nn]{FC}{FC}{Fully Connected}
\newacronym[sort=nn]{MLR}{MLR}{Multinomial Logistics Regression}

\newacronym[sort=nn]{SPDNet}{SPDNet}{SPD Neural Network}
\newacronym[sort=nn]{GrNet}{GrNet}{Grassmann network}
\newacronym[sort=nn]{GrConv}{GrConv}{Grassmannian Convolution}
\newacronym[sort=nn]{HNN}{HNN}{Hyperbolic Neural Network}
\newacronym[sort=nn]{HNN++}{HNN++}{Hyperbolic Neural Network++}
\newacronym[sort=nn]{CorNet}{CorNet}{Correlation Network}
\newacronym[sort=nn]{CorNets}{CorNets}{Correlation Networks}

\newacronym[sort=bn]{BN}{BN}{Batch Normalization}
\newacronym[sort=bn]{RBN}{RBN}{Riemannian Batch Normalization}
\newacronym[sort=bn]{LieBN}{LieBN}{Lie Group Batch Normalization}
\newacronym[sort=bn]{SPDBN}{SPDBN}{SPD Batch Normalization}
\newacronym[sort=bn]{GyroBN}{GyroBN}{Gyrogroup Batch Normalization}

\newacronym[sort=spd]{SPD}{SPD}{Symmetric Positive Definite}
\newacronym[sort=spd]{AIM}{AIM}{Affine-Invariant Metric}
\newacronym[sort=spd]{LCM}{LCM}{Log-Cholesky Metric}
\newacronym[sort=spd]{LEM}{LEM}{Log-Euclidean Metric}
\newacronym[sort=spd]{PEM}{PEM}{Power-Euclidean Metric}
\newacronym[sort=spd]{BWM}{BWM}{Bures--Wasserstein Metric}
\newacronym[sort=spd]{GBWM}{GBWM}{Generalized Bures--Wasserstein Metric}
\newacronym[sort=spd]{BWCM}{BWCM}{Bures--Wasserstein--Cholesky Metric}
\newacronym[sort=spd]{PCM}{PCM}{Power-Cholesky Metric}

\newacronym[sort=gr]{ONB}{ONB}{OrthoNormal Basis}
\newacronym[sort=gr]{PP}{PP}{Projector Perspective}

\newacronym[sort=cor]{ECM}{ECM}{Euclidean--Cholesky Metric}
\newacronym[sort=cor]{LECM}{LECM}{Log-Euclidean--Cholesky Metric}
\newacronym[sort=cor]{LSM}{LSM}{Log-Scaled Metric}
\newacronym[sort=cor]{OLM}{OLM}{Off-Log Metric}
\newacronym[sort=cor]{PHCM}{PHCM}{Poly-Hyperbolic-Cholesky Metric}

\newglossaryentry{manifold}{
  type=notation, 
  name={$\mathcal{M}$}, 
  text={$\mathcal{M}$}, 
  description={Riemannian manifold}, 
}

\newglossaryentry{metric}{
  type=notation,
  name={$\mathbf{g}$},
  text={$\mathbf{g}$},
  description={Metric on $\mathcal{M}$},
}

\newglossaryentry{reals}{
  type=notation,
  name={$\mathbb{R}^n$},
  text={$\mathbb{R}^n$},
  description={A $n$-dimensional Euclidean space},
}

\title{Building Transformation Layers for Riemannian Neural Networks}

\author{%
  Ziheng Chen \\
  University of Trento; MPI for Intelligent Systems, Tübingen \\
  \texttt{ziheng\_ch@163.com} \\
}

\begin{document}

\maketitle

\begin{abstract}
Recently, deep neural networks on manifold-valued representations have garnered significant attention across various machine learning applications. One recent focus is the generalization of Euclidean fully connected (FC) and convolutional layers to non-Euclidean geometries. However, previous approaches typically focus on a few selected manifolds and rely on specific properties of the target manifold. In contrast, this work proposes a framework for constructing FC and convolutional layers over computationally tractable Riemannian spaces. This framework incorporates several previous FC layers across different geometries as special cases and is instantiated on ten representative manifolds, including three hyperbolic models, five geometries of the symmetric positive definite (SPD) manifold, and two Grassmannian perspectives. Experiments on different manifolds demonstrate the effectiveness and applicability of our approach. Code can be found at \url{https://github.com/GitZH-Chen/RieTrans}.
\end{abstract}

\section{Introduction}
\label{sec:introduction}
Deep neural networks on Riemannian manifolds have achieved remarkable success across various applications \citep{huang2017riemannian,ganea2018hyperbolic,skopek2019mixed,lopez2021vector,huang2022riemannian,wangspdmetric2024,chen2024flow,khan2025hyperbolic,pouliquen2025schurs,li2025spdim,li2026heegnet,hu2026rhop,jin2026robust,chen2026busemann,chen2026correlation}. Commonly encountered manifolds include \gls{SPD} \citep{pennec2006riemannian}, Grassmannian \citep{bendokat2024grassmann}, matrix Lie group \cite{hall2013lie}, and hyperbolic \citep{ungar2022gyrovector} manifolds. These manifolds admit \emph{computationally tractable tools} such as geodesics, exponential and logarithmic maps, and parallel transport, which have enabled the extension of fundamental deep learning components, including normalization \citep{brooks2019riemannian,chakraborty2020manifoldnorm,lou2020differentiating,kobler2022spd,chen2024liebn,chen2025gyrobn,wang2025gbwbn}, attention \citep{gulcehre2018hyperbolic,pan2022matt,wang2024grassatt,wang2025gyroatt}, residual blocks \citep{van2023poincare,katsman2024riemannian}, and classification layers \citep{ganea2018hyperbolic,nguyen2023building,chen2024rmlrspd,chen2024rmlr,bdeir2024fully}.

Yet, the extension of the most basic layers, \gls{FC} and convolutional layers, remains particularly challenging. Early works targeted specific manifolds: \citet{huang2017riemannian,huang2017deep,huang2018building} proposed layers for \gls{SPD}, special orthogonal, and Grassmann manifolds. Later, \citet{ganea2018hyperbolic,mao2024klein} developed hyperbolic counterparts via tangent spaces, and \citet{chen2022fully} introduced LFC via spacetime transformations. \citet{fan2022nested} further introduced nested hyperbolic spaces and a Lorentz-specific feature transformation that supports dimensionality reduction. To better respect geometry, \citet{shimizu2021hyperbolic} extended FC and convolutional layers on the Poincaré model, while \citet{nguyen2024matrix,nguyen2025symmetric} proposed SPD counterparts based on gyrovector structures and symmetric spaces. However, these methods largely rely on specific properties, such as Poincaré geometry, gyro or symmetric structures, which restricts their generality. In another direction, \citet{chakraborty2020manifoldnet} introduced a convolution based on the weighted Fréchet mean, but unlike the Euclidean convolution, its output manifold dimension is restricted to match the input, limiting flexibility. Consequently, a general and flexible framework for constructing FC and convolutional layers across different geometries remains unsolved.

We address this challenge by proposing a principled framework for building Riemannian FC and convolutional layers on computationally tractable manifolds. Our framework relies solely on Riemannian operators such as exponential and logarithmic maps. It thus applies broadly to different geometries, such as hyperbolic, SPD, and Grassmannian spaces. Our contributions are summarized as follows.
\begin{itemize}
    \item
    \textbf{Riemannian FC and convolutional layers.}
    We introduce a principled generalization of FC and convolutional layers to Riemannian spaces. In contrast to previous approaches, our framework requires only tractable Riemannian operators, ensuring broad applicability. Moreover, several existing Riemannian FC layers are subsumed as special cases.
    \item
    \textbf{Ten concrete instantiations.}
    We instantiate our framework on three hyperbolic models, five SPD geometries, and two Grassmannian perspectives. Our approach enables direct variation of the latent geometry under a consistent network architecture.
    \item
    \textbf{Empirical validation.}
    We validate our approach on benchmark tasks across hyperbolic, SPD, and Grassmannian manifolds, demonstrating both effectiveness and versatility.
\end{itemize}

\section{Preliminaries}

\textbf{Notations.}
For the Euclidean space $\bbR{n}$ or $\bbR{n \times n}$, we denote the standard inner product by $\inner{\cdot}{\cdot}$ and the induced norm by $\norm{\cdot}$, \ie, the $L_2$-norm for vectors and the Frobenius norm for matrices. A Riemannian manifold $(\calM, g)$ with the Riemannian metric $g$ is abbreviated as $\calM$. Its tangent space at $P \in \calM$ is denoted by $T_P\calM$. The Riemannian logarithmic map, exponential map, and metric at $P \in \calM$ are denoted by $\rielog_P$, $\rieexp_P$, and $\inner{\cdot}{\cdot}_{P} = g_P(\cdot,\cdot)$, respectively. The parallel transport along the geodesic connecting $P, Q \in \calM$ is $\pt{P}{Q}$. A table of notation is summarized in \cref{app:sec:notations}.

\textbf{Hyperbolic manifold.}
There are five isometric hyperbolic models \citep{cannon1997hyperbolic}. We focus on the Poincaré ball $\pball{n} =\left\{x \in \mathbb{R}^{n} \mid \norm{x}^2 < -\nicefrac{1}{K} \right\}$, the Beltrami–Klein ball $\klein{n} =\left\{x \in \mathbb{R}^{n} \mid \norm{x}^2 < -\nicefrac{1}{K} \right\}$, and the hyperboloid (or Lorentz) model $\bbh{n}=\left\{x \in \mathbb{R}^{n+1} \mid \Lnorm{x}^2 = \nicefrac{1}{K}, x_1>0 \right\}$, where $\Lnorm{x}^2 = \sum_{i=2}^{n+1} x _i ^2 - x _1 ^2$ is the Lorentz inner product. Here, $K<0$ is the constant curvature. The Poincaré and Beltrami–Klein balls admit gyrovector spaces, known as the Möbius and Einstein gyrovector spaces \citep{ungar2022analytic}, respectively. The Möbius gyroaddition and scalar gyromultiplication are denoted by $\Moplus$ and $\Motimes$, while the Einstein counterparts are $\Eoplus$ and $\Eotimes$. \cref{app:subsec:geom_hyperbolic} summarizes the associated Riemannian and gyro operators.

\textbf{SPD manifold.}
The set of $n \times n$ SPD matrices, denoted $\spd{n}$, forms a smooth manifold, called the SPD manifold \citep{arsigny2005fast}. It admits five widely used Riemannian metrics: \gls{AIM} \citep{pennec2006riemannian}, \gls{LEM} \citep{arsigny2005fast}, \gls{PEM} \citep{dryden2010power}, \gls{LCM} \citep{lin2019riemannian}, and \gls{BWM} \citep{bhatia2019bures}. Each metric provides closed-form expressions for Riemannian operators, which are summarized in \cref{app:subsec:geom_spd}.

\textbf{Grassmannian.}
The Grassmannian is the manifold of $p$-dimensional subspaces of the $n$-dimensional vector space \citep[Prob. 7.8]{loring2011introduction}. It has two common matrix representations \citep{bendokat2024grassmann}. The \gls{PP} embeds each element as an $n \times n$ symmetric matrix: $\graspp{p,n}= \{P \in \sym{n} \mid P^2=P, \rank(P)=p \}$, where $\sym{n}$ is the Euclidean space of symmetric matrices. The \gls{ONB} perspective is the quotient of the Stiefel manifold $\stiefel{p, n}$: $\grasonb{p,n} = \stiefel{p, n} / \orth{p} = \{ [U] \mid [U]= \{\widetilde{U} \in \stiefel{p, n} \mid \widetilde{U}=U R, R \in \orth{p}\} \}$, where $\orth{p}$ is the $p \times p$ orthogonal group. By abuse of notation, we use $[U]$ and $U$ interchangeably. The associated Riemannian structures are summarized in \cref{app:subsec:geom_grass}.

The considered manifolds admit multiple geometries, including isometric hyperbolic models, distinct SPD metrics, and diffeomorphic Grassmannian perspectives, whose empirical performance often varies across tasks \citep{nguyen2022gyro,katsman2024riemannian,chen2025gyrobnextension}. This motivates a unified framework to flexibly handle such variants. Moreover, exponential and logarithmic maps may encounter singular cases, but these cases can be handled numerically (\cref{app:rmk:incomplete_spd,app:rmk:cutlocus_gras}), and the maps are assumed well-defined.

\section{Proposed Framework}

\subsection{Riemannian fully connected layers}
\label{subsec:riem_fc}

Our method for building FC layers over Riemannian manifolds relies on the point-to-hyperplane distance, which has shown success in building hyperbolic and SPD networks \citep{shimizu2021hyperbolic,nguyen2023building,bdeir2024fully,chen2024rmlrspd,chen2024rmlr,nguyen2024matrix,nguyen2025symmetric}. The hyperplane in the Riemannian manifold $\calN$ \citep[Eq. 5]{chen2024rmlr} is $H_{A, P} = \{X \in \calN \mid \inner{\rielog_{P} (X)}{A} _{P} =0\}$, with $P \in \calN$ and $A \in T _P \calN$. When $\calN = \bbR{n}$, it recovers the Euclidean hyperplane, $H_{a, p} =\{x \in \mathbb{R}^n \mid \langle a, x - p \rangle=0\}$.

The Euclidean FC layer is defined as $y = Ax + b$ with $A \in \bbR{m \times n}$ and $b \in \bbR{m}$. It can be expressed element-wise as $y_k = \langle a_k, x \rangle - b_k= \langle a_k, x - p_k \rangle$ with $a_k, p_k \in \mathbb{R}^n$ and $\langle p_k, a_k \rangle = b_k$. As shown by \citet[Sec. 3.2]{shimizu2021hyperbolic}, the LHS $y _k$ is the signed distance from $y$ to the hyperplane passing through the origin and orthogonal to the $k$-th axis of the output space, which can be formulated as
\begin{equation} \label{eq:euc_fc}
    \operatorname{sign}(\langle e_k, y - \bbzero \rangle) d (y, H_{e_k, \bbzero}) = \langle a_k, x - p_k \rangle, \quad \forall 1 \leq k \leq m,
\end{equation}
where $\bbzero \in \bbR{m}$ is the zero vector, and $\{e_k\}_{k=1}^m$ forms an orthonormal basis over $\bbR{m}$ with $e_k$ denoting the vector whose $k$-th element is 1 and all others are 0. Here, the LHS of \cref{eq:euc_fc} equals $y_k$.

Given a point-to-hyperplane distance, \cref{eq:euc_fc} can be readily generalized to manifolds. Noting that $\rielog_{p}(x)=x-p$ under the Euclidean geometry and that $T_{\bbzero}\bbR{m} \cong \bbR{m}$, the counterparts of $H_{e_k, \bbzero}$ on an $m$-dimensional Riemannian manifold $\calM$ are defined as
\begin{equation}
    \label{eq:hyperplane_e_b_k}
    H_{B_k, E} = \{S \in \calM \mid \inner{\rielog_{E} S}{B_k}_{E} =0\}, \quad \forall 1 \leq k \leq m,
\end{equation}
where $E \in \calM$ is the predefined origin, and $\{B_k\}_{k=1}^m$ is an orthogonal basis over $\{T_E\calM,g_E\}$. \cref{eq:hyperplane_e_b_k} characterizes the hyperplane containing the origin $E$ and orthogonal to the geodesic starting from $E$ with the initial velocity $B_k$, which recovers the Euclidean $H_{e_k, \bbzero}$ as $\calM=\bbR{m}$. With all the above ingredients, we define the Riemannian FC layer in the following.

\begin{definition} \label{def:riem_fc}
    Given an $n$-dimensional manifold $\calN$ and an $m$-dimensional manifold $\calM$, the Riemannian FC layer $\calF: \calN \to \calM$ for the input $X \in \calN$ returns the output $Y \in \calM$ by solving $m$ equations:
    \begin{equation} \label{eq:riem_fc_original}
        \sign \left( \left\langle \rielog^\calM _{E} (Y), B _k \right\rangle^\calM_E \right) \dist^\calM (Y, H^\calM _{B_k, E^\calM}) = \langle A_k, \rielog^\calN _{P_k} (X) \rangle^\calN _{P_k}, 1 \leq k \leq m,
    \end{equation}
    where $E^\calM \in \calM$ is the origin, and $\{ B_k \}_{k=1}^m$ is an orthonormal basis over $T_{E^\calM}\calM$. Here, $\dist^\calM$ denotes a chosen point-to-hyperplane distance, which can be either the true distance $\inf_{S\in H^\calM_{B_k,E^\calM}}d_\calM(Y,S)$ or a surrogate. Meanwhile, $\rielog_{P_i}^{\calN}$ and $\langle \cdot , \cdot \rangle_{P_i}^{\calN}$ are the Riemannian logarithm and metric over $\calN$. The pairs $P_k \in \calN$ and $A_k \in T_{P_k}\calN$ are the FC parameters.
\end{definition}

Our \cref{def:riem_fc} naturally extends previous FC layers over different geometries.
\begin{proposition}
    \label{prop:riem_fc_gen_euc_fc}
    \linktoproof{prop:riem_fc_gen_euc_fc}
When $\calN=\bbR{n}$ and $\calM=\bbR{m}$, \cref{def:riem_fc} with the true point-to-hyperplane distance reduces to the Euclidean FC layer. When $\calN=\pball{n}$, $\calM=\pball{m}$, the true point-to-hyperplane distance follows \citet[Thm. 5]{ganea2018hyperbolic}, and the LHS of \cref{eq:riem_fc_original} follows $v_k(\cdot)$ from \citet[Eq. (3)]{shimizu2021hyperbolic}. In this case, \cref{def:riem_fc} yields the Poincaré FC layer \citep[Sec. 3.2]{shimizu2021hyperbolic}. When $\calN=\spd{n}$, $\calM=\spd{m}$, and the point-to-hyperplane distances are pseudo-gyrodistances \citep[Thms. 2.23–2.25]{nguyen2023building}, \cref{def:riem_fc} recovers the corresponding gyro SPD FC layers \citep[Props. 3.4–3.6]{nguyen2024matrix}.
\end{proposition}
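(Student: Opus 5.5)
The plan is to verify the three cases one at a time. In each case I substitute the specific Riemannian operators, namely the logarithm, the metric, the origin, the orthonormal basis and the point-to-hyperplane distance, into both sides of \cref{eq:riem_fc_original}. I then check that the resulting system of $m$ equations agrees with the defining equations of the cited layer.

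\textbf{Euclidean case.} Take $\calM=\bbR{m}$ with $E^\calM=\bbzero$, $B_k=e_k$, and $\calN=\bbR{n}$, so that $\rielog_{P_k}(X)=X-P_k$ and the metric is the standard inner product.

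\emph{Right-hand side.} It becomes $\langle a_k, x-p_k\rangle$.

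\emph{Left-hand side.} The hyperplane $H_{e_k,\bbzero}=\{s\mid s_k=0\}$ is a coordinate hyperplane. The true distance from $y$ to it is therefore $|y_k|$, and the sign factor is $\sign(y_k)$. The left-hand side thus equals $y_k$.

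\emph{Conclusion.} Setting $b_k=\langle a_k,p_k\rangle$, as in the discussion before \cref{eq:euc_fc}, gives $y=Ax-b$, which is the Euclidean FC layer up to the sign convention on the bias. Conversely, any $b$ is attained by choosing $p_k=b_k a_k/\norm{a_k}^2$ when $a_k\neq 0$.

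\textbf{Poincaré case.} Here I take $E^\calM=\bbzero$. Since the metric at the origin is $\lambda_\bbzero^2\langle\cdot,\cdot\rangle$ with conformal factor $\lambda_\bbzero=2$, the orthonormal basis is $B_k=e_k/2$. Because $\rielog_\bbzero$ is a positive radial rescaling of $y$, the sign factor reduces to $\sign(y_k)$.

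\emph{Left-hand side.} I apply the closed-form distance to a Poincaré hyperplane from \citet[Thm. 5]{ganea2018hyperbolic}, with offset $p=\bbzero$ and normal $e_k$. This gives
\[
\frac{1}{\sqrt{-K}}\sinh^{-1}\!\Big(\frac{2\sqrt{-K}\,|y_k|}{1+K\norm{y}^2}\Big).
\]
With the sign factor, the left-hand side is the signed version of this quantity.

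\emph{Right-hand side.} The term $\langle A_k,\rielog_{P_k}X\rangle_{P_k}$ has to be matched with $v_k(x)$ from \citet[Eq. (3)]{shimizu2021hyperbolic}. Both are written in terms of a tangent vector at $P_k$ and the Möbius difference $-P_k\Moplus X$, so after reparametrizing $A_k$ (parallel transport to the origin, as Shimizu et al. do) the two expressions should coincide. Once both sides agree, I solve the $m$ equations for $y$. Writing $w_k=\sinh(\sqrt{-K}v_k)/\sqrt{-K}$, this yields the closed form $y=w/(1+\sqrt{1-Kw^{\top}w})$ of \citet[Sec. 3.2]{shimizu2021hyperbolic}.

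\textbf{SPD case.} For each of the gyro SPD structures (AIM, LEM, LCM), I take $E^\calM=I$ and an orthonormal basis of $T_I\spd{m}$. I then replace $\dist^\calM$ with the pseudo-gyrodistance of \citet[Thms. 2.23–2.25]{nguyen2023building}. Each of these is an explicit expression of the form $|\langle\rielog_I(Y),B_k\rangle|/\norm{B_k}$, up to a metric-dependent factor, so the left-hand side collapses to the $k$-th coordinate of $\rielog_I Y$ in the chosen basis. Solving for $Y$ is then a matter of applying $\rieexp_I$, and comparison with \citet[Props. 3.4–3.6]{nguyen2024matrix} should be immediate.

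\textbf{Main obstacle.} I expect the hardest step to be the Poincaré right-hand side. This requires carefully reconciling the curvature-$K$ normalization and the conformal factor $\lambda_{P_k}$ with Shimizu et al.'s parameterization, in which $A_k$ is transported to the origin and a scalar bias $r_k$ replaces $P_k$. I would first express both sides in terms of $-P_k\Moplus X$ and $\lambda_{P_k}$, then identify the parameters.
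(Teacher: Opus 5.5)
\textbf{Gap in the Poincaré case.} Your Euclidean case is fine, including your remark on the bias sign (the paper itself moves between $Ax+b$ and $\langle a_k,x\rangle-b_k$). Your Poincaré left-hand side and your inversion $y=w/(1+\sqrt{1-Kw^\top w})$ are also correct. The problem is the Poincaré right-hand side. You claim that $\langle A_k,\rielog_{P_k}X\rangle_{P_k}$ and $v_k(x)$ of Shimizu et al.\ ``should coincide'' after reparametrizing $A_k$. That is false, so the step you flagged as a normalization issue actually fails.

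Set $c=-K$ and $z_k=-p_k\Moplus x$, and use $\pt{p_k}{\bbzero}(a_k)=\tfrac12\lambda^K_{p_k}a_k$ and $\inner{\cdot}{\cdot}_{\bbzero}=4\inner{\cdot}{\cdot}$. Then \cref{app:lem:inner_equivalence} gives
\[
\langle \rielog_{p_k}(x),a_k\rangle_{p_k}=2\lambda^K_{p_k}\frac{\tanh^{-1}(\sqrt{c}\norm{z_k})}{\sqrt{c}\norm{z_k}}\inner{z_k}{a_k},
\]
whereas
\[
v_k(x)=\frac{\lambda^K_{p_k}\norm{a_k}}{\sqrt{c}}\sinh^{-1}\!\left(\frac{2\sqrt{c}\inner{z_k}{a_k}}{(1-c\norm{z_k}^2)\norm{a_k}}\right).
\]
These agree when $z_k\parallel a_k$, because $\sinh^{-1}(2u/(1-u^2))=2\tanh^{-1}u$. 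They do not agree in general. Take $z_k=s\,e+t[a_k]$ with $e\perp a_k$ a unit vector and let $t\to0$. The ratio of the first expression to the second tends to $(1-u^2)\tanh^{-1}(u)/u<1$, where $u=\sqrt{c}\,s$.

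Now move $p_k$ to $\bbzero$ with the isometry $x\mapsto -p_k\Moplus x$. This shows the same discrepancy for every $(p_k,a_k)$, so no reparametrization identifies the two quantities. The paper notes after \cref{app:thm:pball_fc} that HFC-P differs from the Poincaré FC layer for exactly this reason.

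The statement holds because the true distance of \citet[Thm. 5]{ganea2018hyperbolic} is used on the input side as well. The quantity $\langle A_k,\rielog_{P_k}X\rangle_{P_k}$ equals $\sign(\cdot)\norm{A_k}_{P_k}$ times the pseudo-distance from $X$ to $H_{A_k,P_k}$. Replacing that pseudo-distance by the true one gives $v_k$ exactly; this is the derivation in \citet[App. D.3]{shimizu2021hyperbolic}, to which the paper defers. Your inversion step then completes this case.

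\textbf{Missing step in the SPD case.} You treat only the output side and call the comparison with \citet[Props. 3.4--3.6]{nguyen2024matrix} immediate. The step that carries the paper's argument is the input side. Their right-hand side is $\gyrinner{\ominus P_k\oplus X}{W_k}$, and you need
\[
\gyrinner{\ominus P_k\oplus X}{W_k}=\inner{\rielog_{P_k}X}{\pt{I}{P_k}(\rielog_I W_k)}_{P_k}.
\]
This is \citet[Prop. 3.2]{nguyen2024matrix}, or equivalently \cref{app:lem:inner_equivalence}. Left cancellation $X=P_k\oplus(\ominus P_k\oplus X)$ together with $P\oplus Q=\rieexp_P(\pt{I}{P}\rielog_I Q)$ gives $\rielog_{P_k}X=\pt{I}{P_k}\rielog_I(\ominus P_k\oplus X)$. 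Since parallel transport is an isometry, the identity follows. Setting $A_k=\pt{I}{P_k}(\rielog_I W_k)$ then matches the parameters.

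You should also drop the hedge ``up to a metric-dependent factor''. When the normal's logarithm at $I$ is the orthonormal $B_k$, the signed pseudo-gyrodistance to $H_{B_k,I}$ is exactly $\inner{\rielog_I Y}{B_k}_I$. Any extra factor would break the identification.
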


The crux of \cref{def:riem_fc} lies in the point-to-hyperplane distance and solving the resulting $m$ equations. Using the true point-to-hyperplane distance presents three difficulties: it has no general closed-form expression on arbitrary Riemannian manifolds; on a specific manifold, evaluating its infimum may require a difficult geometry-specific nonconvex optimization \citep[Sec. 3.1]{chen2024rmlr}; and having a computable distance alone does not ensure that the $m$ equations in \cref{def:riem_fc} admit a common output $Y$, as illustrated by the counterexample in \citet[App. D]{chen2026busemann}. To avoid these difficulties, we adopt the Riemannian-trigonometric point-to-hyperplane pseudo-distance of \citet[Def. 3.1 and Thm. 3.2]{chen2024rmlr}: $\dist(X,H_{A_k, P_k}) = \frac{|\langle \rielog_{P_k}(X), A_k \rangle_{P_k}|}{\| A_k \|_{P_k}}$, where $\norm{\cdot}_{P_k}$ denotes the norm induced by $\inner{\cdot}{\cdot}_{P_k}$. Under this pseudo-distance, the implicit \cref{def:riem_fc} admits the following explicit solution.

\begin{theorem}[Riemannian FC Layers]
    \label{thm:riem_fc}
    \linktoproof{thm:riem_fc}
    Following the notation in \cref{def:riem_fc} the Riemannian FC layer $\calF(\cdot): \calN \rightarrow \calM$ for the input $X \in \calN$ is $Y = \rieexp^{\calM}_E \left( \sum_{i=1}^{m} \langle \rielog^{\calN}_{P_i}(X), A_i \rangle^{\calN}_{P_i} B_i \right)$,
    where $\rieexp_{E}^{\calM}$ is the Riemannian exponential map on $\calM$.
\end{theorem}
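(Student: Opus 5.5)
We plug the Riemannian-trigonometric pseudo-distance into both sides of \cref{eq:riem_fc_original}, simplify the left-hand side to a coordinate of $\rielog^\calM_E(Y)$ in the basis $\{B_k\}$, and then invert the logarithm by the exponential map.

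\textbf{Step 1 (the left-hand side).} The hyperplane $H^\calM_{B_k,E}$ in \cref{eq:hyperplane_e_b_k} is of the form $H_{A,P}$ with $P=E$ and $A=B_k$. The pseudo-distance therefore gives
\[
\dist^\calM(Y,H^\calM_{B_k,E}) = \frac{\bigl|\langle \rielog^\calM_E(Y), B_k\rangle^\calM_E\bigr|}{\|B_k\|_E}
= \bigl|\langle \rielog^\calM_E(Y), B_k\rangle^\calM_E\bigr|,
\]
because the basis is orthonormal, so $\|B_k\|_E=1$. Multiplying by $\sign(\langle \rielog^\calM_E(Y), B_k\rangle^\calM_E)$ uses $\sign(t)|t|=t$, and the left-hand side of \cref{eq:riem_fc_original} becomes exactly $\langle \rielog^\calM_E(Y), B_k\rangle^\calM_E$. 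This holds for all real $t$, including $t=0$.

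\textbf{Step 2 (reduce to a linear system in $T_E\calM$).} Set $V=\rielog^\calM_E(Y)\in T_E\calM$ and $c_k=\langle A_k,\rielog^\calN_{P_k}(X)\rangle^\calN_{P_k}\in\bbRscalar$. By Step 1, the $m$ equations read $\langle V,B_k\rangle_E=c_k$ for $1\le k\le m$. Since $\{B_k\}_{k=1}^m$ is an orthonormal basis of the $m$-dimensional inner product space $(T_E\calM,g_E)$, the unique vector satisfying all $m$ equations is $V=\sum_{i=1}^m c_iB_i$, obtained by expanding $V$ in the basis and reading off its coefficients. So the $m$ equations are simultaneously solvable with a unique tangent vector, which addresses the consistency issue the paper raises for the true distance.

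\textbf{Step 3 (recover $Y$).} Invert the logarithm to get $Y=\rieexp^\calM_E(V)$. This is the only delicate point. We rely on the standing assumption of the preliminaries that $\rieexp_E$ and $\rielog_E$ are well-defined mutual inverses on the relevant domain, with singular or cut-locus cases handled as in \cref{app:rmk:incomplete_spd,app:rmk:cutlocus_gras}. Under that assumption, $\rielog^\calM_E(\rieexp^\calM_E(V))=V$. Hence the displayed $Y$ satisfies every equation, and any solution must equal it. Symmetry of the metric lets us swap the order of the inner product in $c_k$, which gives the stated formula.
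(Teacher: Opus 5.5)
Your proposal is correct and follows the paper's proof. Substituting the pseudo-distance collapses the signed left-hand side to $\langle \rielog^\calM_E(Y), B_k\rangle^\calM_E$ by orthonormality; you then read off $\rielog^\calM_E(Y)=\sum_{i} \langle \rielog^\calN_{P_i}(X), A_i\rangle^\calN_{P_i} B_i$ and apply $\rieexp^\calM_E$, exactly as the paper does, while making explicit what it leaves implicit: $\sign(t)|t|=t$, uniqueness of the basis expansion, and the Exp/Log inversion under the paper's well-definedness assumption.
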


The tangent FC layer \citep[Lem. 6]{ganea2018hyperbolic} uses a single tangent space and is written as $\rieexp _{E}(f(\rielog _E(X)))$, with $f$ being a Euclidean FC layer. In contrast, our formulation involves multiple tangent spaces, where each $\langle \rielog^{\calN}{P_i}(X), A_i \rangle^{\calN}_{P_i} = 0$ corresponds to a Riemannian hyperplane $H_{A_i, P_i}$. Moreover, our formulation naturally incorporates prior Riemannian FC layers without requiring additional geometric or algebraic structures, as summarized in \cref{tab:comp_spd_fc_layers}.
\begin{remark}
The point-to-hyperplane pseudo-distance adopted above coincides with the true geodesic point-to-hyperplane distance when the manifold is isometric to an Euclidean space. Specifically, if $\phi:(\calN,g)\to(\bbR{n},\langle\cdot,\cdot\rangle)$ is an isometry, extending \citet[Lem. 3.5]{chen2024rmlrspd} gives
$\inf_{Q\in H_{A,P}}\dist(X,Q)=\frac{|\langle\rielog_P(X),A\rangle_P|}{\|A\|_P}$. This includes Euclidean space as well as LEM and LCM on SPD manifolds.
\end{remark}

\begin{remark}
Theoretically, the origin $E^\calM$ in \cref{def:riem_fc} can be chosen arbitrarily. On a homogeneous space, any two choices and the FC layers defined through them are identified by an isometry (\cref{app:thm:fc_isometries}). In practice, we choose canonical origins that simplify the Riemannian operators and the resulting FC expressions. The examples in \cref{sec:examples} follow this computational criterion.
\end{remark}

\subsection{Riemannian convolutional layers}
\label{sec:riem_conv_layers}

As shown in \citep[Sec. 3.4]{shimizu2021hyperbolic}, Euclidean convolution first concatenates the features within each receptive field and then applies an FC transformation. The concatenated feature vector $x \in (\bbR{n})^c$ is an element of the product space $(\bbR{n})^c$, and its $k$-th output is the affine transformation $y_k=\left\langle a_k, x\right\rangle - b_k$. This view naturally extends to manifolds: concatenating manifold-valued features forms an element of the product manifold, on which we apply the Riemannian FC layer in \cref{subsec:riem_fc}.

\begin{wrapfigure}{r}{0.6\textwidth}
    \centering
    \includegraphics[width=\linewidth,trim=0.6cm 0.8cm 0cm 0cm]{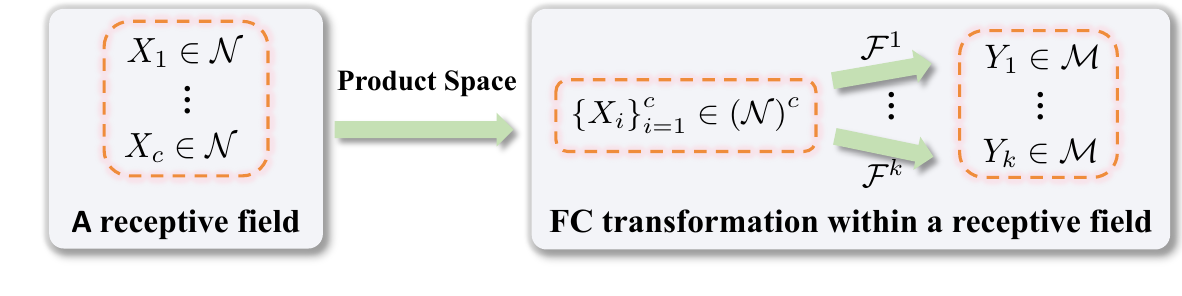}
    \caption{Riemannian convolution within a receptive field. Here, $\calF^k(\cdot)$ denotes the $k$-th FC transformation.}
    \label{fig:riem_conv}
    \vspace{-5mm}
\end{wrapfigure}
\textbf{Riemannian convolution.}
In each receptive field, the product-manifold element $X=(X_1,\ldots,X_c)\in(\calN)^c$ is fed into $k$ Riemannian FC layers, where $k$ is the number of kernels. Here, each Riemannian FC layer is implemented under the product geometry $(\calN)^{c}=\Pi_{i=1}^c \calN$, which is detailed in \cref{app:subsec:riem_fc_product}. \cref{fig:riem_conv} illustrates the above process.

\subsection{Parameter trivialization}
\label{subsec:fc_parameters}

\begin{wraptable}{r}{0.6\textwidth}
\centering
\vspace{-3mm}
\caption{Comparison of hyperbolic FC layers. An extended table can be found in \cref{app:sec:comp_hyperbolic_linear}.}
\label{tab:comp_hyperbolic_linear}
\resizebox{0.9\linewidth}{!}{
\begin{tabular}{cccc}
\toprule
Method & Space & Mechanism & References \\
\midrule
Möbius & $\pball{n}$ & Tangent & \citep[Def. 3.2]{ganea2018hyperbolic} \\
Einstein & $\klein{n}$ & Tangent & \citep[Thm. 9]{mao2024klein} \\
LFC & $\bbh{n}$ & Spacetime & \citep[Eq. (1)]{chen2022fully} \\
NestFC & $\bbh{n}$ & Nested projection & \citep[Eq. (14)]{fan2022nested} \\
Poincaré FC & $\pball{n}$ & Poincaré & \citep[Sec. 3.2]{shimizu2021hyperbolic} \\
\midrule
\rowcolor{HilightColor} Ours & $\pball{n},\klein{n},\bbh{n}$ & Riemannian & \cref{app:thm:pball_fc,app:thm:hyperboloid_fc} \\
\bottomrule
\end{tabular}
}
\vspace{-3mm}
\end{wraptable}
As convolution uses the FC layer as its prototype, we focus on the FC parameters. Since $P_i$ varies during training, $A_i \in T_{P_i}\calN$ cannot be updated directly by a Euclidean optimizer. As shown by \citet[Eqs. (12)--(13)]{chen2024rmlr}, it can be determined from the fixed tangent space at the origin $E^\calN \in \calN$ by\footnote{Although $\Gamma$ could be flexibly replaced by other maps between tangent spaces, such as vector transport and differential of group translation, we mainly use parallel transport.} $A_i = \pt{E^\calN}{P_i}(Z_i)$ with $Z_i \in T _{E^\calN} \calN$. Moreover, as shown by \citet[Sec. 3.1]{shimizu2021hyperbolic}, $p_k$ in \cref{eq:euc_fc} may be over-parameterized, since there are countless $p_k$ satisfying $\langle a_k, p_k\rangle =b_k$. Therefore, following \citet{shimizu2021hyperbolic}, each $P_i$ in the Riemannian FC layer is parameterized as $\rieexp ^\calN _{E^\calN} (\gamma_i [Z_i])$, where $\gamma_i \in \bbRscalar$, $Z_i \neq 0$, and $[Z_i]$ is the unit vector of $Z_i$. This use of the exponential map to optimize manifold-valued parameters is known as trivialization \citep[Sec. 4.1]{lezcano2019trivializations}.

Thus, instead of independently optimizing the two $n$-dimensional parameters $P_i$ and $A_i$, we parsimoniously optimize one tangent vector $Z_i$ and one scalar $\gamma_i$ for each output dimension. This reduces the number of parameters from $2n$ to $n+1$ and allows them to be updated by a Euclidean optimizer, instead of expensive Riemannian optimizers. 

\section{Examples}
\label{sec:examples}

Although \cref{thm:riem_fc} is geometry-agnostic, its instantiation can be further simplified under a specific geometry. We now instantiate the FC layer in \cref{thm:riem_fc} over different geometries, including three hyperbolic models, five SPD geometries, and two Grassmannian perspectives.

\subsection{Hyperbolic vector manifolds}

We focus on three hyperbolic models: the Poincaré ball, the Beltrami–Klein model, and the hyperboloid model. The resulting FC layers are denoted by HFC-P, HFC-K, and HFC-H, respectively. \cref{tab:comp_hyperbolic_linear} compares our hyperbolic FC layers against prior layers, where we refer to the hyperbolic feature transformation of \citet[Eq. (14)]{fan2022nested} as NestFC.

\textbf{Poincaré \& Beltrami–Klein.}
These two models admit Möbius and Einstein gyrovector spaces \citep{ungar2022analytic}, respectively. These structures further simplify the concrete HFC layers.

\begin{theorem}[HFC-P \& HFC-K]
    \label{app:thm:pball_fc}
    \linktoproof{app:thm:pball_fc}
    Let $\calH ^n \in \{\pball{n},\klein{n}\}$. Given $x \in \calH ^n$, the Riemannian FC layer $\calF(\cdot): \calH ^n \rightarrow \calH ^m$ is $y = \rieexp_{\bbzero} \left( \left( v_1(x),\cdots,v_m(x) \right)^\top \right)$. Here, $v_i(x)=\left\langle \rielog_\bbzero (-p_i \oplus _{\calH} x ), z_i \right\rangle$, with the zero vector $\bbzero$ as the origin and $p_i = \rieexp_{\bbzero}(\gamma_i [z_i])$. The FC parameters are $\{\gamma_i \in \bbRscalar \}_{i=1}^m$ and $\{z_i \in \bbR{n} \}_{i=1}^m$. The gyroaddition and Riemannian exponential and logarithmic maps can be found in \cref{app:subsec:geom_hyperbolic}, where $\rieexp_\bbzero$ ($\rielog_\bbzero$) shares the same expression in the two models.
\end{theorem}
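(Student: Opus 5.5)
The plan is to specialize \cref{thm:riem_fc} to $\calN=\calH^n$ and $\calM=\calH^m$, and then simplify each factor using the gyrovector structure.

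First I fix the output data. The origin is $E^\calM=\bbzero$. The metric at $\bbzero$ in both models is $\lambda_\bbzero^2\langle\cdot,\cdot\rangle$, with a constant conformal factor ($\lambda_\bbzero=2$ for Poincaré and $1$ for Klein). The standard basis $\{e_i\}$, rescaled by that constant, is therefore orthonormal in $T_\bbzero\calH^m\cong\bbR{m}$. With this choice, $\sum_i v_i B_i$ is simply the vector $(v_1,\dots,v_m)^\top$, after absorbing the scalar constant into the parameters or into the convention used in \cref{app:subsec:geom_hyperbolic}. Substituting into \cref{thm:riem_fc} gives $y=\rieexp_\bbzero\big((v_1(x),\dots,v_m(x))^\top\big)$ with $v_i(x)=\langle \rielog_{p_i}(x),a_i\rangle_{p_i}$.

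Next I rewrite $v_i$ using the trivialization of \cref{subsec:fc_parameters}: $a_i=\pt{\bbzero}{p_i}(z_i)$ and $p_i=\rieexp_\bbzero(\gamma_i[z_i])$. Parallel transport is an isometry, so
\[
v_i=\langle \pt{p_i}{\bbzero}\rielog_{p_i}(x),z_i\rangle_\bbzero .
\]
The key identity, which holds in both gyrovector spaces, is
\[
\pt{p}{\bbzero}\rielog_p(x)=\rielog_\bbzero(-p\oplus_\calH x).
\]
For the Möbius case this is standard: $\rielog_p(x)=\frac{2}{\lambda_p}\tanh^{-1}(\|{-p}\Moplus x\|)[{-p}\Moplus x]$, and $\pt{\bbzero}{p}(v)=\frac{\lambda_\bbzero}{\lambda_p}v$, as in Ganea et al. Combining the two, the conformal factors cancel and the identity follows directly.

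For the Einstein case, I would argue in one of two ways. The first option is to transfer the identity through the isometry $\klein{n}\to\pball{n}$, $x\mapsto x/(1+\sqrt{1+K\|x\|^2})$. This map is a gyrovector isomorphism carrying $\Eoplus$ to $\Moplus$ and fixing $\bbzero$, with differential at $\bbzero$ a constant multiple of the identity. The second option is to use the general gyrovector fact that left gyrotranslation $L_{-p}:x\mapsto -p\oplus x$ is an isometry sending $p\mapsto\bbzero$. Its differential maps $\rielog_p(x)$ to $\rielog_\bbzero(-p\oplus x)$, and the remaining task is to identify $dL_{-p}|_p$ with $\pt{p}{\bbzero}$. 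That identification holds because $L_{-p}$ is the transvection along the geodesic from $p$ to $\bbzero$, owing to the gyration $\gyr[-p,p]=I$.

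I expect this identification to be the main obstacle. A gyrotranslation need not coincide with a transvection in general, so the argument has to use the special fact that the relevant gyration is trivial. Once the identity is established, $v_i(x)=\langle\rielog_\bbzero(-p_i\oplus x),z_i\rangle$ up to the constant metric factor at $\bbzero$. I would absorb that constant into $z_i$, or show that it cancels against the normalization of $B_i$.

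Finally, I would verify that $\rieexp_\bbzero$ and $\rielog_\bbzero$ have the same formula in both models, namely $\tanh(\sqrt{-K}\|v\|)\,v/(\sqrt{-K}\|v\|)$ and its inverse, directly from \cref{app:subsec:geom_hyperbolic}.
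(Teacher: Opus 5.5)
Your proposal is correct and has the same skeleton as the paper's proof. You specialize \cref{thm:riem_fc}, use a rescaled standard basis at $\bbzero$, set $a_i=\pt{\bbzero}{p_i}(\cdot)$, and reduce everything to the identity $\pt{p}{\bbzero}\rielog_p(x)=\rielog_\bbzero(-p\oplus_\calH x)$. You differ only in how you prove that identity.

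The paper proves it once for both models (\cref{app:lem:inner_equivalence}). Both $\Moplus$ and $\Eoplus$ can be written as $x\oplus y=\rieexp_x\bigl(\pt{\bbzero}{x}\rielog_\bbzero(y)\bigr)$ (\cref{app:eq:klein-poincare-add-riem}). Left cancellation then gives $x=p\oplus(-p\oplus x)=\rieexp_p\bigl(\pt{\bbzero}{p}\rielog_\bbzero(-p\oplus x)\bigr)$, and applying $\rielog_p$ finishes. So the ``main obstacle'' you anticipate never arises: no transvection argument or gyration bookkeeping is needed. The same lemma also applies to any gyrogroup of this Riemannian form, e.g.\ the SPD gyrogroups.

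Your route is more hands-on and rests on a different external fact (Ungar's isomorphism rather than the Riemannian form of $\Eoplus$). You verify the Möbius case from the closed-form operators, which is correct since $\pt{p}{\bbzero}=\tfrac{\lambda_p}{\lambda_\bbzero}\mathrm{id}$ and $\lambda_\bbzero=2$. You then transfer it to Einstein through $\pi_{\klein{n}\to\pball{n}}$. This transfer is valid: that map is exactly $x\mapsto\tfrac12\Motimes x$, i.e.\ Ungar's gyro-isomorphism, and as an isometry it intertwines $\rielog$ and $\Gamma$.

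One correction on constants. In the Poincaré case they do not cancel against the basis normalization: $\inner{\cdot}{\cdot}_\bbzero=4\inner{\cdot}{\cdot}$ and $B_i=\tfrac12 e_i$ leave a net factor $2$. The fix is your other option, $a_i=\pt{\bbzero}{p_i}(z_i/2)$. This is harmless because $p_i$ depends on $z_i$ only through $[z_i]=[z_i/2]$, and it is exactly the step the paper takes. For the Klein model the net factor is already $1$.
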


Interestingly, the only difference between the HFC-P and HFC-K layers lies in the gyroaddition. Moreover, HFC-P takes an expression different from that of the Poincaré FC layer \citep[Sec. 3.2]{shimizu2021hyperbolic}, since their point-to-hyperplane distances and LHSs of \cref{eq:riem_fc_original} are different.

\textbf{Hyperboloid.}
The origin is defined as $e= \left(\nicefrac{1}{\sqrt{|K|}}, 0, \cdots, 0 \right)^\top$, which corresponds to the Poincaré origin under the stereographic projection \citep[Sec. 2.1]{skopek2019mixed}. Then, we have the following.

\begin{theorem}[HFC-H FC layer]
    \label{app:thm:hyperboloid_fc}
    \linktoproof{app:thm:hyperboloid_fc}
        The Riemannian FC layer $\calF(\cdot): \bbh{n} \rightarrow \bbh{m}$ for $x \in \bbh{n}$ is $y = \rieexp_e \left( \left( 0, v_1(x),\cdots,v_m(x) \right)^\top \right)$, where $v_i(x)=\left\langle \rielog_{p_i} ( x ), \pt{e}{p_i} (0,z_i) \right\rangle$, and $p_i = \rieexp_{e}(\gamma_i [(0,z_i^\top)^\top])$, with $\gamma_i \in \bbRscalar$ and $z_i \in \bbR{n}$ as parameters.
\end{theorem}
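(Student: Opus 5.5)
The plan is to specialize \cref{thm:riem_fc} to $\calN=\bbh{n}$ and $\calM=\bbh{m}$. Concretely, we fix the origins $E^\calN=e\in\bbh{n}$ and $E^\calM=e\in\bbh{m}$, and we parameterize as in \cref{subsec:fc_parameters}. Three ingredients then have to be identified: the tangent space at $e$ together with an orthonormal basis $\{B_i\}$, the trivialized parameters $A_i$, and the points $P_i$.

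\textbf{Step 1 (tangent space at the origin).} The tangent space is
\[
T_x\bbh{m}=\{v\in\bbR{m+1}:\Linner{x}{v}=0\},
\]
and the Riemannian metric is the restriction of the Lorentz inner product $\Linner{\cdot}{\cdot}$. At $e=(1/\sqrt{|K|},0,\dots,0)^\top$ the condition $\Linner{e}{v}=0$ becomes $-v_1/\sqrt{|K|}=0$, so $T_e\bbh{m}=\{(0,w^\top)^\top: w\in\bbR{m}\}$. On this subspace the Lorentz inner product coincides with the Euclidean inner product of the $w$-components. We can therefore take $B_i=(0,e_i^\top)^\top$, which is an orthonormal basis of $T_e\bbh{m}$. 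With this choice, $\sum_i v_i(x)B_i=(0,v_1(x),\dots,v_m(x))^\top$, and \cref{thm:riem_fc} gives $y=\rieexp_e\big((0,v_1(x),\dots,v_m(x))^\top\big)$.

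\textbf{Step 2 (parameters).} By the same computation on $\bbh{n}$, every $Z_i\in T_e\bbh{n}$ has the form $(0,z_i^\top)^\top$ with $z_i\in\bbR{n}$. The unit vector in the Riemannian norm is then $[Z_i]=(0,[z_i]^\top)^\top$, because the Lorentz norm agrees with the Euclidean norm on this subspace. The trivialization of \cref{subsec:fc_parameters} gives $A_i=\pt{e}{p_i}\big((0,z_i^\top)^\top\big)$ and $p_i=\rieexp_e(\gamma_i[(0,z_i^\top)^\top])$. Substituting these into the summand $\langle \rielog_{p_i}(x),A_i\rangle_{p_i}$ of \cref{thm:riem_fc} yields exactly $v_i(x)=\Linner{\rielog_{p_i}(x)}{\pt{e}{p_i}(0,z_i)}$.

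\textbf{Main obstacle.} The only delicate point is Step 1. We must verify that $\{(0,e_i)\}$ is genuinely orthonormal for the metric $g_e$, since the Lorentz form is indefinite and the signature has to be checked. We also need the normalization $[\cdot]$ and the metric at $p_i$ to be the induced Lorentz ones. Both facts follow from the standard closed-form operators collected in \cref{app:subsec:geom_hyperbolic}. Once these are in place, the statement is a direct substitution into \cref{thm:riem_fc}, with the $\rieexp_e$, $\rielog_{p_i}$ and $\pt{e}{p_i}$ formulas left in their closed forms.
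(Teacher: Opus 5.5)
Your proposal is correct and follows the paper's proof: both identify $T_e\bbh{m}=\{(0,w^\top)^\top : w\in\bbR{m}\}$, observe that the Lorentz form restricted there is the Euclidean inner product, take $\{(0,e_i^\top)^\top\}_{i=1}^m$ as the orthonormal basis, and substitute into \cref{thm:riem_fc}. The only differences are cosmetic. The paper motivates the origin as $e=\pi_{\pball{n}\rightarrow\bbh{n}}(\bbzero)$ through the Poincaré-to-hyperboloid isometry, whereas you take $e$ as given; you, in turn, spell out the trivialization step (including $[Z_i]=(0,[z_i]^\top)^\top$), which the paper leaves implicit.
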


Substituting the hyperbolic operators into the preceding HFC layers gives the following expressions.
\begin{theorem}[Simplification]
    \label{thm:hfc_batch_closed_form}
    \linktoproof{thm:hfc_batch_closed_form}
    For the hyperboloid model, we write $x=(x_1,x_s)$. For each HFC layer, $v_i(x)$ can be further simplified:
    \begin{align*}
        \pball{n}: v_i(x)
        &=\resizebox{0.82\linewidth}{!}{$\displaystyle
        \|z_i\|
        \frac{\tanh^{-1}\left(\sqrt{|K|}Q_i^{\mathrm{P}}\right)}
        {\sqrt{|K|}Q_i^{\mathrm{P}}}
        \frac{
        \left[1+\tanh^2(\sqrt{|K|}\gamma_i)\right]\langle x,[z_i]\rangle
        -\frac{\tanh(\sqrt{|K|}\gamma_i)}{\sqrt{|K|}}
        \left(1+|K|\|x\|^2\right)}
        {D_i^{\mathrm{P}}}$},\\
        \klein{n}: v_i(x)
        &=\|z_i\|
        \frac{\tanh^{-1}\left(\sqrt{|K|}Q_i^{\mathrm{K}}\right)}
        {\sqrt{|K|}Q_i^{\mathrm{K}}}
        \frac{
        \langle x,[z_i]\rangle
        -\frac{\tanh(\sqrt{|K|}\gamma_i)}{\sqrt{|K|}}}
        {D_i^{\mathrm{K}}},\\
        \bbh{n}: v_i(x)
        &=\resizebox{0.82\linewidth}{!}{$\displaystyle
        \|z_i\|
        \frac{
        \cosh^{-1}\left(
        \sqrt{|K|}\left[
        \cosh(\sqrt{|K|}\gamma_i)x_1
        -\sinh(\sqrt{|K|}\gamma_i)\langle x_s,[z_i]\rangle
        \right]
        \right)}
        {\sqrt{
        |K|\left[
        \cosh(\sqrt{|K|}\gamma_i)x_1
        -\sinh(\sqrt{|K|}\gamma_i)\langle x_s,[z_i]\rangle
        \right]^2-1}}
        \left[
        \cosh(\sqrt{|K|}\gamma_i)\langle x_s,[z_i]\rangle
        -\sinh(\sqrt{|K|}\gamma_i)x_1
        \right]$}, \\
        D_i^{\mathrm{P}}
        &=1-2\sqrt{|K|}\tanh(\sqrt{|K|}\gamma_i)\langle x,[z_i]\rangle
        +|K|\tanh^2(\sqrt{|K|}\gamma_i)\|x\|^2,\\
        Q_i^{\mathrm{P}}
        &=\sqrt{\frac{
        \|x\|^2+\frac{\tanh^2(\sqrt{|K|}\gamma_i)}{|K|}
        -\frac{2\tanh(\sqrt{|K|}\gamma_i)}{\sqrt{|K|}}\langle x,[z_i]\rangle}
        {D_i^{\mathrm{P}}}},\\
        D_i^{\mathrm{K}}
        &=1-\sqrt{|K|}\tanh(\sqrt{|K|}\gamma_i)\langle x,[z_i]\rangle,\\
        Q_i^{\mathrm{K}}
        &=\sqrt{\frac{
        \|x\|^2+\frac{\tanh^2(\sqrt{|K|}\gamma_i)}{|K|}
        -\frac{2\tanh(\sqrt{|K|}\gamma_i)}{\sqrt{|K|}}\langle x,[z_i]\rangle
        -\tanh^2(\sqrt{|K|}\gamma_i)
        \left(\|x\|^2-\langle x,[z_i]\rangle^2\right)}
        {(D_i^{\mathrm{K}})^2}}.
    \end{align*}
\end{theorem}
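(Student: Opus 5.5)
We substitute the closed-form hyperbolic operators of \cref{app:subsec:geom_hyperbolic} into the expressions for $v_i(x)$ given in \cref{app:thm:pball_fc,app:thm:hyperboloid_fc}, and simplify each model separately. Throughout we write $c=\sqrt{|K|}$, $t_i=\tanh(c\gamma_i)$ and $u=[z_i]$. For both balls the origin maps are
\[
\rieexp_\bbzero(v)=\tanh(c\|v\|)\frac{v}{c\|v\|},\qquad \rielog_\bbzero(y)=\tanh^{-1}(c\|y\|)\frac{y}{c\|y\|}.
\]
Hence $p_i=\rieexp_\bbzero(\gamma_i u)=\frac{t_i}{c}u$, and $-p_i=-\frac{t_i}{c}u$. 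Moreover
\[
v_i(x)=\langle \rielog_\bbzero(w),z_i\rangle=\|z_i\|\,\frac{\tanh^{-1}(c\|w\|)}{c\|w\|}\,\langle w,u\rangle,
\qquad w=-p_i\oplus_{\calH}x .
\]
So in both ball models it only remains to compute $\langle w,u\rangle$ and $\|w\|$. We identify $Q_i=\|w\|$ and read off the claimed numerator and denominator.

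\textbf{Poincaré ball.} We use the Möbius addition
\[
a\Moplus b=\frac{(1+2c^2\langle a,b\rangle+c^2\|b\|^2)a+(1-c^2\|a\|^2)b}{1+2c^2\langle a,b\rangle+c^4\|a\|^2\|b\|^2},
\]
with $a=-\frac{t_i}{c}u$ and $b=x$. Then $\langle a,b\rangle=-\frac{t_i}{c}\langle x,u\rangle$ and $\|a\|^2=t_i^2/c^2$, so the denominator is
\[
1-2ct_i\langle x,u\rangle+c^2t_i^2\|x\|^2=D_i^{\mathrm P}.
\]
Pairing the numerator with $u$ gives
\[
-\tfrac{t_i}{c}\bigl(1-2ct_i\langle x,u\rangle+c^2\|x\|^2\bigr)+(1-t_i^2)\langle x,u\rangle
=(1+t_i^2)\langle x,u\rangle-\tfrac{t_i}{c}(1+c^2\|x\|^2),
\]
which is the stated numerator. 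For $\|w\|$ we use the standard identity
\[
\|a\Moplus b\|^2=\frac{\|a\|^2+\|b\|^2+2\langle a,b\rangle}{1+2c^2\langle a,b\rangle+c^4\|a\|^2\|b\|^2}.
\]
This follows from $\|{-a}\Moplus b\|$ being a function of the distance, or can be checked by direct expansion. It yields exactly $Q_i^{\mathrm P}$.

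\textbf{Klein ball.} We use Einstein addition
\[
a\Eoplus b=\frac{1}{1+c^2\langle a,b\rangle}\Bigl(a+\tfrac{1}{\gamma_a}b+\tfrac{c^2\gamma_a}{1+\gamma_a}\langle a,b\rangle a\Bigr),
\qquad \gamma_a=(1-c^2\|a\|^2)^{-1/2}.
\]
Here $a=-\frac{t_i}{c}u$ gives $\gamma_a=\cosh(c\gamma_i)$ and $D_i^{\mathrm K}=1-ct_i\langle x,u\rangle$. Projecting onto $u$, the coefficient of $\langle x,u\rangle$ is
\[
\frac{1}{\gamma_a}+\frac{\gamma_a t_i^2}{1+\gamma_a}.
\]
We check that this equals $1$ using $t_i^2=1-1/\gamma_a^2$. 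The constant term is $-t_i/c$. This gives $\langle w,u\rangle=(\langle x,u\rangle-t_i/c)/D_i^{\mathrm K}$.

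For the norm we split $w$ into its component along $u$ and its orthogonal component. The orthogonal part is $\frac{1}{\gamma_a}(x-\langle x,u\rangle u)/D_i^{\mathrm K}$, whose squared norm is
\[
(1-t_i^2)\bigl(\|x\|^2-\langle x,u\rangle^2\bigr)/(D_i^{\mathrm K})^2 .
\]
Adding $(\langle x,u\rangle-t_i/c)^2/(D_i^{\mathrm K})^2$ reproduces the numerator of $(Q_i^{\mathrm K})^2$.

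\textbf{Hyperboloid.} At $e$, the tangent vector $(0,\gamma_iu)$ gives
\[
p_i=\tfrac1c\bigl(\cosh(c\gamma_i),\,\sinh(c\gamma_i)u\bigr).
\]
Parallel transport along this geodesic acts as a boost in the plane spanned by $e$ and $(0,u)$. So
\[
\pt{e}{p_i}(0,z_i)=\|z_i\|\bigl(\sinh(c\gamma_i),\,\cosh(c\gamma_i)u\bigr),
\]
up to a vanishing orthogonal part, since $z_i\parallel u$.

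We then use
\[
\rielog_{p}(x)=\frac{\cosh^{-1}(\beta)}{\sqrt{\beta^2-1}}\,(x-\beta p),\qquad \beta=-K\langle p,x\rangle_{\calL}.
\]
The Lorentz pairing of $x-\beta p_i$ with the transported vector reduces to the pairing with $x$, because the transported vector is tangent at $p_i$. That pairing is $\|z_i\|\bigl(\cosh(c\gamma_i)\langle x_s,u\rangle-\sinh(c\gamma_i)x_1\bigr)$. In addition,
\[
\beta=c\bigl[\cosh(c\gamma_i)x_1-\sinh(c\gamma_i)\langle x_s,u\rangle\bigr].
\]
Substituting these gives the stated formula.

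\textbf{Main obstacle.} The main obstacle is the bookkeeping in the two ball norms, particularly the Möbius norm identity and the $\gamma_a$ simplification in the Einstein case. We verify these by expanding in the orthogonal decomposition $x=\langle x,u\rangle u+x_\perp$, which reduces everything to scalar hyperbolic identities.
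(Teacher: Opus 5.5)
Your proposal is correct and follows essentially the paper's route of substituting the closed-form gyro and Riemannian operators model by model and simplifying. The differences are only tactical: you insert $p_i=\tfrac{t_i}{c}u$ immediately instead of keeping $p_i$ general and invoking $p_i\parallel z_i$ later, you split the Einstein sum along $u$ and $u^{\perp}$, and you get $\Gamma_{e\rightarrow p_i}(0,z_i)$ as the transported geodesic velocity rather than from the explicit transport formula plus the constraint $\|p_{is}\|^2-p_{i1}^2=1/K$.

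There is one sign slip to fix in the hyperboloid case. The argument of $\cosh^{-1}$ is $\beta=K\langle p,x\rangle_{\mathcal{L}}\geq 1$, not $-K\langle p,x\rangle_{\mathcal{L}}\leq -1$; with your sign, $x-\beta p$ would not even be tangent at $p$. The value $\beta=c\,[\cosh(c\gamma_i)x_1-\sinh(c\gamma_i)\langle x_s,u\rangle]$ that you actually substitute is nevertheless the correct one.
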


Although \cref{thm:hfc_batch_closed_form} appears more complicated than the original formulations, it enables efficient batched computation and greatly reduces GPU memory consumption. Directly implementing \cref{app:thm:pball_fc,app:thm:hyperboloid_fc} requires either looping over the $m$ output dimensions or materializing a large $B\times m\times n$ intermediate tensor. As shown in \cref{thm:hfc_batch_closed_form}, the formulas mainly depend on $\langle x,[z_i]\rangle$ for HFC-P and HFC-K and $\langle x_s,[z_i]\rangle$ for HFC-H. Stacking $B$ inputs into $X\in\bbR{B\times n}$ (or their spatial components into $X_s\in\bbR{B\times n}$) and the unit directions into $U=([z_1],\ldots,[z_m])^\top\in\bbR{m\times n}$ allows all these inner products to be computed at once as $XU^\top$ (or $X_sU^\top$). The remaining terms can be evaluated elementwise.

\subsection{SPD matrix manifolds}
\label{subsec:spd_fc_conv}

We focus on five popular Riemannian metrics, \ie, LEM, AIM, PEM, LCM, and BWM. We define the identity matrix $I$ as the origin, since it corresponds to the zero matrix under the matrix logarithm.

\begin{theorem}[SPD FC Layers]
    \label{thm:spd_fc}
    \linktoproof{thm:spd_fc}
    Given an SPD matrix $S \in \spd{n}$, the outputs of the SPD FC layers $\calF(\cdot): \spd{n} \rightarrow \spd{m}$ under different Riemannian metrics are
    {\small
    \begin{align}
        \label{eq:lem_fc}
        \text{LEM}:
        &  Y = \exp \left( V^\LE \right),
        V^{\LE}_{ij} =
        \begin{cases}
        \frac{1}{\sqrt{\alpha}} v^{\LE}_{ii}(S) + \mu \sum_{k=1}^m v^{\LE}_{kk}(S) , & \text { if } i=j \\
        \frac{1}{\sqrt{2\alpha}} v^{\LE}_{ij}(S), & \text { if } i>j \\
        V^{\LE}_{ji}, & \text{ otherwise }
        \end{cases} \\
        \text{AIM}:
        &  Y = \exp \left( V^\AI \right),
        V^{\AI}_{ij} =
        \begin{cases}
        \frac{1}{\sqrt{\alpha}} v^{\AI}_{ii}(S) + \mu \sum_{k=1}^m v^{\AI}_{kk}(S) , & \text { if } i=j \\
        \frac{1}{\sqrt{2\alpha}} v^{\AI}_{ij}(S), & \text { if } i>j \\
        V^{\AI}_{ji}, & \text{ otherwise }
        \end{cases}\\
        \text{PEM}:
        & Y =  \left( I+ V^\PE \right)^{\frac{1}{\theta}}, V^{\PE}_{ij} =
        \begin{cases}
        \frac{1}{\sqrt{\alpha}} v^{\PE}_{ii}(S) + \mu \sum_{k=1}^m v^{\PE}_{kk}(S) , & \text { if } i=j \\
        \frac{1}{\sqrt{2\alpha}} v^{\PE}_{ij}(S), & \text { if } i>j \\
        V^{\PE}_{ji}, & \text{ otherwise }
        \end{cases} \\
        \label{eq:spd_fc_lcm}
        \text{LCM}:
        & Y = V^{\LC}(V^{\LC})^\top,
        V^{\LC}_{ij} =
        \begin{cases}
        \exp\left(v^{\LC}_{ii}(S)\right) , & \text { if } i=j \\
        v^{\LC}_{ij}(S), & \text { if } i>j\\
        0, & \text{ otherwise }
        \end{cases}\\
        \text{BWM}:
        & Y = \left(I+\frac{1}{2}V^\BW \right)^2,
        V^\BW_{ij} =
        \begin{cases}
        v^{\BW}_{ii}(S) , & \text { if } i=j \\
        \frac{1}{\sqrt{2}}v^{\BW}_{ij}(S), & \text { if } i>j \\
        V^{\BW}_{ji}, & \text{ otherwise }
        \end{cases}
    \end{align}
    }
    Here, $v_{ij}(S)$ under different metrics is defined as
    {\small
    \begin{align*}
        &\text{LEM}:
        \left\langle \log(S)-\log(P_{ij}), Z_{ij} \right\rangle^{\alphabeta}, \quad
        \text{AIM}:
        \left\langle \log(P_{ij}^{-\frac{1}{2}} S P_{ij}^{-\frac{1}{2}}), Z_{ij} \right\rangle^{\alphabeta}, \\
        &\text{PEM}:
        \left\langle S^\theta-P_{ij}^\theta, Z_{ij} \right\rangle^{\alphabeta}, \quad
        \text{LCM}:
        \left\langle \lfloor K\rfloor - \lfloor L_{ij} \rfloor + \dlog(\bbK\bbL_{ij}^{-1}), \lfloor Z_{ij} \rfloor + \frac{1}{2}\bbZ_{ij} \right\rangle,
    \end{align*}
    \begin{equation*}
        \text{BWM}:
        \left\langle \left(P_{ij}S \right)^{\frac{1}{2}} + \left(S P_{ij} \right)^{\frac{1}{2}} -2 P_{ij},  \calL_{P_{ij}}(L_{ij} Z_{ij} L_{ij}^\top) \right\rangle,
    \end{equation*}
    }
    The above notations are defined in the following.
    \begin{itemize}
        \item
        $Z_{ij} \in T_I\spd{n} \cong \sym{n}$ and $P_{ij} \in \spd{n}$ are the parameters for $1 \leq i \leq j \leq m$,
        \item
        $\log(\cdot)$ is the matrix logarithm.
        $\dlog(\cdot)$ is the diagonal element-wise logarithm.
        $\lfloor \cdot \rfloor$ is the strictly lower part of a square matrix.
        $\chol(\cdot)$ is the Cholesky decomposition. $\bbV$ is a diagonal matrix with diagonal elements of the square matrix $V$. $\calL_P(V)$ is the solution to the matrix linear system $\calL_P[V] P+ P \calL_P[V]=V$, known as the Lyapunov operator. $\mu=\frac{1}{n}\left( \frac{1}{\sqrt{\alpha+n\beta}}-\frac{1}{\sqrt{\alpha}} \right)$, $K=\chol(S)$ and $L_{ij}=\chol(P_{ij})$.
        \item
        $\inner{\cdot}{\cdot}$ and $\alphabetainner{\cdot}{\cdot}$ are the Frobenius inner product and the $\orth{n}$-invariant inner product defined in \cref{eq:oim_sym}.
        \item
        Due to the incompleteness of PEM and BWM, there are constraints on $V^\PE $ and $V^\BW$: $I+ \theta V^\PE \in \spd{m}$ and $I+\frac{1}{2}V^\BW \in \spd{n}$. Both constraints can be solved by numerical regularization, as detailed in \cref{app:rmk:spd_fc_constrains}.
    \end{itemize}
\end{theorem}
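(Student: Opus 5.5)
We would obtain \cref{thm:spd_fc} by specializing \cref{thm:riem_fc} to $\calN=\spd{n}$ and $\calM=\spd{m}$ with origin $I$ on both sides. Since $\dim\spd{m}=m(m+1)/2$, the output index $k$ in \cref{thm:riem_fc} runs over pairs $1\le j\le i\le m$. Each pair $(i,j)$ carries parameters $P_{ij}\in\spd{n}$ and $A_{ij}\in T_{P_{ij}}\spd{n}$. The output is then $Y=\rieexp^{\calM}_I\bigl(\sum_{i\ge j} v_{ij}(S)B_{ij}\bigr)$, where $v_{ij}(S)=\langle \rielog_{P_{ij}}(S),A_{ij}\rangle_{P_{ij}}$ and $\{B_{ij}\}$ is an orthonormal basis of $(T_I\spd{m},g_I)$. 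Following \cref{subsec:fc_parameters}, we set $A_{ij}=\pt{I}{P_{ij}}(Z_{ij})$ with $Z_{ij}\in\sym{n}$. The proof then splits into two independent computations per metric: (a) a closed form of $v_{ij}(S)$, and (b) an orthonormal basis $\{B_{ij}\}$ at $I$ together with $\rieexp_I$, which gives $V$ and $Y$.

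For (a), we insert the standard logarithm, metric, and parallel transport of each geometry from \cref{app:subsec:geom_spd}.

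\emph{LEM.} The map $\log$ is a linear isometry onto $(\sym{n},\alphabetainner{\cdot}{\cdot})$. Hence parallel transport from $I$ is $\dlog_{P}^{-1}$, and we get $v_{ij}=\alphabetainner{\log S-\log P_{ij}}{Z_{ij}}$ directly.

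\emph{AIM.} We have $\rielog_P(S)=P^{1/2}\log(P^{-1/2}SP^{-1/2})P^{1/2}$ and $\pt{I}{P}(Z)=P^{1/2}ZP^{1/2}$. The metric $\langle U,V\rangle_P=\alphabetainner{P^{-1/2}UP^{-1/2}}{P^{-1/2}VP^{-1/2}}$ then cancels all the conjugations.

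\emph{PEM.} The metric is the pullback of the $(\alpha,\beta)$ metric by $P\mapsto (P^\theta-I)/\theta$ (or an equivalent normalization), which is again a flat isometry. This gives $\langle S^\theta-P^\theta,Z\rangle^{\alphabeta}$, after checking that the differential of the power map at $I$ is $\theta\,\mathrm{id}$, so that the $\theta$ factors cancel.

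\emph{LCM.} Via the Cholesky map we reduce to the Euclidean log-Cholesky coordinates $\lfloor K\rfloor+\dlog(\bbK)$. The transported $Z$ at $I$ corresponds to the lower-triangular part $\lfloor Z\rfloor+\frac12\bbZ$, because the differential of $\chol$ at $I$ is $Z\mapsto\lfloor Z\rfloor+\frac12\bbZ$.

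\emph{BWM.} We have $\rielog_P(S)=(PS)^{1/2}+(SP)^{1/2}-2P$ and $\langle U,V\rangle_P=\frac12\langle\calL_P[U],V\rangle$. The transport from $I$ would be expressed through $L Z L^\top$ with $L=\chol(P)$. This is the step where we expect the main difficulty: BWM parallel transport has no simple closed form. We would first check whether the paper's convention replaces $\Gamma$ by a tractable map, as the footnote in \cref{subsec:fc_parameters} allows. Concretely, we would take the Cholesky-based congruence transport $Z\mapsto LZL^\top$, up to the factor absorbed by $\calL_P$. We would then verify that the stated inner product $\langle \rielog_P S,\calL_P(LZL^\top)\rangle$ matches, after using the symmetry of $\calL_P$.

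For (b), we compute at $I$ using $\sym{m}$ coordinates.

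\emph{LEM, AIM, PEM.} Under $\alphabetainner{\cdot}{\cdot}$, an orthonormal basis is obtained from the Frobenius basis $\{E_{ii},(E_{ij}+E_{ji})/\sqrt2\}$ by applying $G^{-1/2}$, where $G=\alpha\,\mathrm{id}+\beta\,\mathbf 1\mathbf 1^\top$ acts on the diagonal. The off-diagonal entries get scaled by $1/\sqrt{2\alpha}$. Diagonalizing $G$ on the diagonal part gives $\frac1{\sqrt\alpha}v_{ii}+\mu\sum_k v_{kk}$, and we would check $\mu$ via the eigenvalue $\alpha+n\beta$ in direction $\mathbf 1$. The exponential at $I$ is $\exp(V)$ for LEM and AIM, and $(I+\theta V)^{1/\theta}$ for PEM.

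\emph{LCM.} The metric at $I$ in Cholesky coordinates is Frobenius on lower-triangular matrices. So $V$ is lower triangular with entries $v_{ij}$, and $\rieexp_I$ yields $Y=VV^\top$ after exponentiating the diagonal.

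\emph{BWM.} We have $g_I(U,V)=\frac14\langle U,V\rangle$, so the orthonormal basis is $2E_{ii}$ and $\sqrt2(E_{ij}+E_{ji})$. This gives $V/2$ with the stated $1/\sqrt2$ scaling, and $\rieexp_I(U)=(I+\frac12\calL_I U)^2$-type expressions reduce to $(I+\frac12 V)^2$.

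Finally, we would note that the domain constraints for PEM and BWM come from the incompleteness of the exponential map.
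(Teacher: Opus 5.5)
Your decomposition is the paper's. You specialize \cref{thm:riem_fc} with origin $I$, take $A_{ij}=\pt{I}{P_{ij}}(Z_{ij})$, and combine $v_{ij}$ with an orthonormal basis of $T_I\spd{m}$ and $\rieexp_I$. Your $G^{-1/2}$ is the inverse of the paper's isometry $F_{\sqrt{\alpha+n\beta},\sqrt{\alpha}}$. Your LCM basis is $U^{\LC}_{ij}$ read through $\chol_{*,I}$, and your BWM basis is exactly $U^{\BW}_{ij}$. The only difference is in (a): the paper does not derive $v_{ij}$ but imports it from \citet[Thm.~4.2]{chen2024rmlr}. Your direct computations for LEM, AIM and LCM are correct. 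Your diagnosis for BWM is also right: the formula corresponds to the congruence $A_{ij}=L_{ij}Z_{ij}L_{ij}^\top$, which the footnote in \cref{subsec:fc_parameters} permits, and not to the true BWM parallel transport. That is exactly what the cited expression encodes.

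The step that fails as written is PEM. Use the paper's normalization $g_P(V,W)=\frac{1}{\theta^2}\alphabetainner{\pow_{\theta*,P}(V)}{\pow_{\theta*,P}(W)}$, which is also the one you use in (b) when you take $g_I=\alphabetainner{\cdot}{\cdot}$. The factors of $\theta$ in $v_{ij}$ then come from three places: $\rielog_P(S)=(\pow_{\theta*,P})^{-1}(S^\theta-P^\theta)$, $\pt{I}{P}(Z)=(\pow_{\theta*,P})^{-1}(\theta Z)$, and the prefactor $\frac{1}{\theta^2}$. They do not cancel; the Riemannian coefficient is $\frac{1}{\theta}\alphabetainner{S^\theta-P_{ij}^\theta}{Z_{ij}}$. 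This leftover $\frac{1}{\theta}$ is what cancels the $\theta$ in $\rieexp_I(V)=(I+\theta V)^{1/\theta}$. If you combine your $v_{ij}$ (without $\frac{1}{\theta}$) with your $\rieexp_I$, you get $(I+\theta V^{\PE})^{1/\theta}$, not the stated $(I+V^{\PE})^{1/\theta}$. (The constraint $I+\theta V^{\PE}\in\spd{m}$ in the statement mixes the two normalizations in the same way; for the stated $Y$ it should read $I+V^{\PE}\in\spd{m}$.)

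Do the same bookkeeping explicitly for BWM. The Riemannian coefficient is $\frac{1}{2}\langle\rielog_{P_{ij}}(S),\calL_{P_{ij}}(L_{ij}Z_{ij}L_{ij}^\top)\rangle$. This $\frac{1}{2}$, against the basis $2E_{ii}$, $\sqrt{2}(E_{ij}+E_{ji})$, is what yields the coefficients $1$ and $\frac{1}{\sqrt{2}}$ on the stated $v^{\BW}_{ij}$. Also, $\rieexp_I(U)=I+U+\calL_I[U]^2=(I+\calL_I[U])^2=(I+\frac{1}{2}U)^2$, not $(I+\frac{1}{2}\calL_I U)^2$.

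Finally, your eigenvalue check for $\mu$ takes place on $\sym{m}$, where the trace direction has eigenvalue $\alpha+m\beta$. It therefore produces $\mu$ with $m$ in place of $n$, so the stated $\mu$ is literally right only when $m=n$. The paper's proof writes $n$ throughout and has the same slip.
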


The Euclidean affine FC $y=Ax+b$ incorporates the linear map $y=Ax$, the most natural map between linear spaces. As shown by \citet[Sec. 4.4]{arsigny2005fast} and \citet[Thm. 1]{chen2024adaptive}, the SPD manifold admits two vector space structures with respect to LEM and LCM. Similar to the Euclidean FC layer, our SPD FC layer also incorporates linear maps over these vector structures. Denoting the addition and scalar product by $\oplusle$ ($\opluslc$) and $\odotle$ ($\odotlc$), respectively, which are detailed in \cref{app:subsec:prf:spd_fc_lie_hom}, we have the following result.

\begin{proposition}
    \label{prop:spd_fc_lie_hom}
    \linktoproof{prop:spd_fc_lie_hom}
    The LEM- and LCM-SPD FC layers incorporate the linear homomorphisms over the vector spaces $\{ \spd{n}, \oplusle, \odotle \}$ and $\{ \spd{n}, \opluslc, \odotlc\}$, respectively.
\end{proposition}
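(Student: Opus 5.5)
We view both geometries as pulled back from Euclidean spaces and reduce the claim to the Euclidean FC layer containing all linear maps. For LEM, the map $\phi^{\LE}=\log:\spd{n}\to\sym{n}$ is a linear isomorphism from $\{\spd{n},\oplusle,\odotle\}$ onto $\sym{n}$, with $S_1\oplusle S_2=\exp(\log S_1+\log S_2)$ and $\lambda\odotle S=\exp(\lambda\log S)$. It is also an isometry onto $(\sym{n},\alphabetainner{\cdot}{\cdot})$. For LCM, the map $\phi^{\LC}(S)=\lfloor K\rfloor+\dlog(\bbK)$ with $K=\chol(S)$ is a linear isomorphism onto the lower triangular matrices $\tril{n}$ (\citet[Thm. 1]{chen2024adaptive}), and an isometry for the LCM inner product at $I$ up to the factor conventions appearing in \cref{eq:spd_fc_lcm}. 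A linear homomorphism $f:\spd{n}\to\spd{m}$ between these vector spaces is therefore exactly $f=(\phi_m)^{-1}\circ F\circ\phi_n$ for a linear map $F$ between $\sym{n}\to\sym{m}$, respectively $\tril{n}\to\tril{m}$. We must show that every such $f$ arises as an SPD FC layer of \cref{thm:spd_fc} for a suitable choice of parameters $Z_{ij},P_{ij}$.

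\textbf{Step 1.} Rewrite the LEM and LCM layers in these coordinates. With $P_{ij}=I$, the LEM layer has $v^{\LE}_{ij}(S)=\alphabetainner{\log S}{Z_{ij}}$. This is an arbitrary linear functional of $\log S$, since $\alphabetainner{\cdot}{\cdot}$ is nondegenerate (it requires $\alpha>0$ and $\alpha+n\beta>0$). The output satisfies $\log Y=V^{\LE}$. Similarly, with $L_{ij}=I$, the LCM layer has $v^{\LC}_{ij}(S)=\langle \phi^{\LC}(S),\lfloor Z_{ij}\rfloor+\tfrac12\bbZ_{ij}\rangle$. This is an arbitrary linear functional on $\tril{n}$, because $Z\mapsto\lfloor Z\rfloor+\tfrac12\bbZ$ maps $\sym{n}$ onto $\tril{n}$. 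The output satisfies $\phi^{\LC}(Y)=\lfloor V^{\LC}\rfloor+\dlog(\mathbb{V}^{\LC})$, whose $(i,j)$ entry for $i\ge j$ is exactly $v^{\LC}_{ij}(S)$.

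\textbf{Step 2.} Match $F$. For LCM, set the $(i,j)$ lower entry functional to be $X\mapsto F(X)_{ij}$ and pick $Z_{ij}$ accordingly; then $\phi^{\LC}(Y)=F(\phi^{\LC}(S))$. For LEM, the map from the vector $(v_{ij})_{i\ge j}$ to $V^{\LE}$ is a fixed linear bijection $R:\bbR{m(m+1)/2}\to\sym{m}$. It is invertible because $\tfrac1{\sqrt\alpha}I+\mu\mathbf{1}\mathbf{1}^\top$ has eigenvalues $\tfrac1{\sqrt\alpha}$ and $\tfrac1{\sqrt{\alpha+m\beta}}$ (with $\mu$ using $m$ here). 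We therefore choose the functionals $v_{ij}$ as the components of $R^{-1}\circ F$, and obtain $\log Y=F(\log S)$.

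\textbf{Step 3.} Conclude that $f=\phi_m^{-1}F\phi_n$ is realized. The FC family contains more than the homomorphisms, since $P_{ij}\neq I$ produces affine maps; this mirrors $y=Ax+b$ containing $y=Ax$.

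The main obstacle is bookkeeping: checking that the scaling constants ($\tfrac1{\sqrt\alpha}$, $\tfrac1{\sqrt{2\alpha}}$, $\mu$, and the $\tfrac12\bbZ$ term) do not spoil surjectivity onto all linear maps. We handle this only through the invertibility argument in Step 2, rather than by computing the constants explicitly.
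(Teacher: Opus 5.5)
Your proposal is correct and follows the paper's route: write the homomorphism as $\phi_m^{-1}\circ F\circ\phi_n$, take all $P_{ij}=I$, and realize the coordinate functionals of $F$ through the $Z_{ij}$. For LCM the paper chooses $Z_{ij}=A_{ij}+\bbD(A_{ij})$, which is your surjectivity of $Z\mapsto\lfloor Z\rfloor+\tfrac12\bbZ$ made explicit; for LEM the paper treats only $\alphabeta=(1,0)$, where the output orthonormal basis is the standard one, whereas your invertibility argument for $R$ (with $\mu$ computed for $\spd{m}$) covers general $\alphabeta$.
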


\begin{wraptable}{r}{0.6\textwidth}
  \centering
  \vspace{-4mm}
  \caption{Comparison with the existing SPD FC layers.}
  \label{tab:comp_spd_fc_layers}%
  \resizebox{\linewidth}{!}{
    \begin{tabular}{c|ccc}
    \toprule
    SPD FC Layer & Geometries & Requirement & Incorporated by Ours \\
    \midrule
    Gyro FC \citep{nguyen2024matrix} & AIM, LEM \& LCM & Gyrovector & \cmark (\cref{app:sec:relation_gyro_spd_fc}) \\
    \midrule
    Flat FC \citep{nguyen2025symmetric} & LEM \& LCM & Flat geometry & \cmark (\cref{app:sec:relation_flat_spd_fc}) \\
    \midrule
    Symmetric FC \citep{nguyen2025symmetric} & AIM & \makecell{Invariant metric \\ Symmetric space} & \na \\
    \midrule
    \rowcolor{HilightColor} Ours  & Riemannian spaces & Riemannian & \na \\
    \bottomrule
    \end{tabular}
    }
    \vspace{-5mm}
\end{wraptable}
\textbf{Comparison.}
As summarized in \cref{tab:comp_spd_fc_layers}, the three gyro SPD FC layers from \citet[Props. 3.4-3.6]{nguyen2024matrix} and the two flat SPD FC layers \citep{nguyen2025symmetric} are incorporated by our SPD FC layers.

\textbf{Simplification and convolution.}
Following the trivialization in \cref{subsec:fc_parameters}, the SPD FC layers under LEM, AIM, LCM, and PEM can be further simplified, as detailed in \cref{app:sec:simplified_sdp_fc}. The convolution is defined as \cref{sec:riem_conv_layers}, with $\calM=\spd{m}$ and $\calN=\spd{n}$.

\subsection{Grassmannian matrix manifolds}
We instantiate our FC layers over the ONB and PP Grassmannian and define \gls{GrConv} according to \cref{sec:riem_conv_layers}. We then compare GrConv with existing popular Grassmannian transformations, concluding that GrConv is more flexible in both dimensionality and geometry.

\textbf{ONB.}
We denote by $\idonb=
\begin{bmatrix}
    I_{p}, \bbzero
\end{bmatrix}^\top \in \bbR{n \times p}$, with $I_p$ being the $p \times p$ identity matrix. We define it as the Grassmannian origin, since it corresponds to $I_n \in \orth{n}$ in the quotient structure \citep[Sec. 2.2]{bendokat2024grassmann}. As in \cref{subsec:fc_parameters}, the FC parameters are modeled by parallel transport and the Riemannian exponential map at $\idonb$. The concrete ONB Grassmannian FC layer can be further simplified.

\begin{theorem}[ONB]
    \label{thm:gras_fc_onb}
    \linktoproof{thm:gras_fc_onb}
    Given $U \in \grasonb{p,n}$, the ONB Grassmannian FC layer $\calF(\cdot): \grasonb{p,n} \rightarrow \grasonb{q,m}$ is
    $Y =
    \left(\begin{array}{c}
    R \cos(\Sigma) R^\top \\
    O \sin(\Sigma) R^\top
    \end{array}\right)$,
    with $B^\onb \stackrel{\text { SVD }}{:=} O \Sigma R^\top  \in \bbR{(m-q) \times q}$. Each $(i,j)$ element of $B^\onb \in \bbR{(m-q) \times q}$ is $\inner{\rielog^{\onb}_{P_{ij}}(U) }{T_{ij} B_{Z_{ij}}}$, with
    $T_{ij}=
    \left(\begin{array}{c}
    -R_{ij} \sin(\Sigma_{ij}) O_{ij}^\top \\
    O_{ij} \cos(\Sigma_{ij}) O_{ij}^\top + I_{n-p} - O_{ij} O_{ij}^\top
    \end{array}\right)$.
    Here, $\gamma_{ij} [B_{Z_{ij}}] \stackrel{\text { SVD }}{:=} O_{ij} \Sigma_{ij} R_{ij}^\top$ is the SVD decomposition. The FC parameters are $B_{Z_{ij}} \in \bbR{(n-p) \times p}$ and $\gamma_{ij} \in \bbRscalar$ for $1 \leq i \leq m-q$ and $1 \leq j \leq q$.
\end{theorem}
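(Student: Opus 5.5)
We derive \cref{thm:gras_fc_onb} by instantiating \cref{thm:riem_fc} with $\calN=\grasonb{p,n}$, $\calM=\grasonb{q,m}$, and origins $\idonb$, $\idonbqm$. We then substitute the closed-form ONB operators summarized in \cref{app:subsec:geom_grass}.

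\textbf{Step 1: tangent space at the origin and its basis.} Using the horizontal-space description of the ONB Grassmannian from \citet{bendokat2024grassmann}, the tangent space at $\idonbqm$ is
\begin{equation*}
T_{\idonbqm}\grasonb{q,m}=\left\{\begin{pmatrix}\bbzero\\ B\end{pmatrix} \,\middle|\, B\in\bbR{(m-q)\times q}\right\}.
\end{equation*}
The canonical metric restricted to this space is (a multiple of) the Frobenius inner product on $B$. We choose the orthonormal basis $\{B_{ij}\}$ whose $(i,j)$-th element places a single $1$ in the lower block at position $(i,j)$, with any metric constant absorbed into the parameters. Then $\sum_{i,j} v_{ij}B_{ij}=\begin{pmatrix}\bbzero\\ B^\onb\end{pmatrix}$, where $B^\onb_{ij}=v_{ij}$. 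Applying the Grassmann exponential at the origin, $\rieexp_{\idonbqm}\begin{pmatrix}\bbzero\\ B\end{pmatrix}$, with the compact SVD $B=O\Sigma R^\top$, gives $\idonbqm R\cos(\Sigma)R^\top+\begin{pmatrix}\bbzero\\ O\sin\Sigma\end{pmatrix}R^\top$. This is exactly the stated $Y$. If the appendix gives the variant $\idonbqm R\cos\Sigma+\ldots$ without the trailing $R^\top$, the two agree up to a right $\orth{q}$ factor and hence represent the same equivalence class.

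\textbf{Step 2: the coefficients $v_{ij}$.} Following \cref{subsec:fc_parameters}, $Z_{ij}=\begin{pmatrix}\bbzero\\ B_{Z_{ij}}\end{pmatrix}$, $P_{ij}=\rieexp_{\idonb}(\gamma_{ij}[Z_{ij}])$, and $A_{ij}=\pt{\idonb}{P_{ij}}(Z_{ij})$. By \cref{thm:riem_fc}, $v_{ij}=\inner{\rielog_{P_{ij}}(U)}{A_{ij}}_{P_{ij}}$. The canonical metric on horizontal vectors of the ONB Grassmannian is the Euclidean trace inner product, so this equals the Frobenius product shown. It then remains to identify $A_{ij}=T_{ij}B_{Z_{ij}}$.

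For this, we use the known parallel-transport formula along the geodesic $t\mapsto\rieexp_{\idonb}(t\Delta)$ with $\Delta=\begin{pmatrix}\bbzero\\ \gamma[B_Z]\end{pmatrix}$ and $\gamma[B_Z]=O\Sigma R^\top$. In the form of \citet{edelman1998geometry} and \citet{bendokat2024grassmann}, the transport of a vector $\xi$ is
\begin{equation*}
\left(-\idonb R\sin\Sigma\, O^\top+\begin{pmatrix}\bbzero\\ O\end{pmatrix}\cos\Sigma\, O^\top+\begin{pmatrix}\bbzero\\ I_{n-p}\end{pmatrix}-\begin{pmatrix}\bbzero\\ O\end{pmatrix}O^\top\right)B_Z .
\end{equation*}
Collecting blocks gives precisely $T_{ij}B_{Z_{ij}}$.

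\textbf{Main obstacle.} The hard part is the bookkeeping of the quotient structure. Both the exponential map and the transport must be expressed with the representative of $P_{ij}$ produced by the exponential, without a trailing $R^\top$, since horizontal lifts depend on the representative. Consequently, $\rielog_{P_{ij}}(U)$ must be computed at that same representative for the Frobenius pairing to be well defined. We will check that the inner product is invariant under a simultaneous right $\orth{p}$ action on both vectors, which makes the choice harmless. We also need to verify that the transport formula holds for the unit-time geodesic with the scaled direction $\gamma[B_Z]$. Finally, the factor convention of the canonical metric (trace versus $\tfrac12$ trace) must be tracked consistently.
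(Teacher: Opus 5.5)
Your route is the paper's: instantiate \cref{thm:riem_fc} with the orthonormal basis $\begin{pmatrix}\bbzero\\ E_{ij}\end{pmatrix}$, evaluate $\rieexp_{\idonbqm}$ through the SVD of the lower block, and specialize the transport formula of \cref{app:tab:riem_operators_gras} to $P=\idonb$ with $O\mapsto\begin{pmatrix}\bbzero\\ O_{ij}\end{pmatrix}$; the ONB metric is exactly Frobenius, so no constant needs absorbing. One correction to your obstacle paragraph: that transport returns the horizontal lift at the representative \emph{with} the trailing factor, $P_{ij}=\begin{pmatrix}R_{ij}\cos(\Sigma_{ij})R_{ij}^\top\\ O_{ij}\sin(\Sigma_{ij})R_{ij}^\top\end{pmatrix}$ (the curve $t\mapsto\begin{pmatrix}R_{ij}\cos(t\Sigma_{ij})R_{ij}^\top\\ O_{ij}\sin(t\Sigma_{ij})R_{ij}^\top\end{pmatrix}$ starts at $\idonb$, where the transport reduces to the identity), so $\rielog_{P_{ij}}(U)$ must be evaluated at exactly that representative; dropping the trailing $R_{ij}^\top$ replaces $P_{ij}$ by $P_{ij}R_{ij}$, which right-multiplies only the logarithm by $R_{ij}$ and changes the pairing, and the simultaneous-$\orth{p}$ invariance you invoke cannot repair this.
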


\textbf{PP.}
We define the PP origin as $\idpp=\idonb \idonb^\top$, since it corresponds to $\idonb$ \citep[Eq. 2.11]{bendokat2024grassmann}. Similarly, we model the FC parameters by parallel transport and the Riemannian exponential map at $\idpp$. The PP Grassmannian FC layer can be further simplified. In addition, the Riemannian logarithm under the PP Grassmannian can be calculated using the ONB logarithm to support auto-differentiation \citep[Prop. 3.12]{nguyen2024matrix}. For more details, please refer to the proof of the following theorem.

\begin{theorem}[PP]
    \label{thm:gras_fc_pp}
    \linktoproof{thm:gras_fc_pp}
    Given $X \in \graspp{p,n}$, the PP Grassmannian  FC layer $\calF(\cdot): \graspp{p,n} \rightarrow \graspp{q,m}$ is $Y = \widetilde{U} \widetilde{U}^\top$, with
    $\widetilde{U} =
    \left( \exp \left(\left(\begin{array}{cc}
        0 & -(B^\pp)^T \\
        B^\pp & 0
    \end{array}\right) \right) \right)_{1:q}$,
    where $(\cdot)_{1:q}$ returns the first-$q$ columns of the input square matrix. Each $(i,j)$ element of $B^\pp \in \bbR{(m-q) \times q}$ is defined as $\frac{1}{2} \inner{ \pi_{*,\pi(P)} \left(\rielog^{\onb}_{(O_{ij})_{1:p}} (\pi^{-1}(X)) \right)}{O_{ij} Z_{ij} O_{ij}^\top}$, with
    $O_{ij} =
    \exp \left(\left(\begin{array}{cc}
            0 & -(\gamma_{ij} [B_{Z_{ij}}])^T \\
            \gamma_{ij} [B_{Z_{ij}}] & 0
    \end{array}\right) \right)$,
     where $\pi(U) = UU^\top$, and $\pi_{*,U} (V) = UV^\top + VU^\top$ is the differential map for all $U \in \grasonb{p,n}$ and $V \in T_{U}\grasonb{p,n}$. The FC parameters are $B_{Z_{ij}} \in \bbR{(n-p) \times p}$ and $\gamma_{ij} \in \bbRscalar$ for $1 \leq i \leq m-q$ and $1 \leq j \leq q$.
\end{theorem}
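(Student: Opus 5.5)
We instantiate \cref{thm:riem_fc} with $\calN=\graspp{p,n}$ and $\calM=\graspp{q,m}$, and transfer every PP operator to the ONB picture through the isometry $\pi(U)=UU^\top$. We assume the PP metric is normalized so that $\pi$ is an isometry up to the constant factor accounting for the $\tfrac12$ in $B^\pp$. The three ingredients of \cref{thm:riem_fc} are the output exponential at $\idpp$, an orthonormal basis of $T_{\idpp}\graspp{q,m}$, and the input scalars $\langle \rielog_{P_{ij}}(X),A_{ij}\rangle_{P_{ij}}$. We treat them in that order.

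\textbf{Output side.} Tangent vectors at $\idpp$ are exactly the matrices
$\begin{pmatrix}0 & B^\top\\ B & 0\end{pmatrix}$ with $B\in\bbR{(m-q)\times q}$, and the metric there is, up to a constant, $\inner{B}{B'}$. The matrix units $E_{ij}$ ($1\le i\le m-q$, $1\le j\le q$), suitably scaled, therefore form an orthonormal basis. The sum $\sum_{ij} v_{ij}B_{ij}$ in \cref{thm:riem_fc} then assembles into the block matrix built from $B^\pp=(v_{ij})$. Next we apply the known exponential at the origin (\cref{app:subsec:geom_grass}, \citet{bendokat2024grassmann}): $\rieexp_{\idpp}$ of this tangent vector equals $\widetilde U\widetilde U^\top$, where $\widetilde U$ is the first $q$ columns of $\exp\begin{pmatrix}0&-B^\top\\B&0\end{pmatrix}$. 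Equivalently, it is the conjugation $Q\idpp Q^\top$ with $Q$ this orthogonal exponential. This gives the stated form of $Y$.

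\textbf{Parameters.} Following \cref{subsec:fc_parameters}, we set $P_{ij}=\rieexp_{\idpp}(\gamma_{ij}[Z_{ij}])$ with $Z_{ij}$ the horizontal block matrix built from $B_{Z_{ij}}$. The exponential at the origin is realized by the orthogonal matrix $O_{ij}$ in the statement, which gives $P_{ij}=O_{ij}\idpp O_{ij}^\top$ and, in ONB form, $(O_{ij})_{1:p}$. For the direction $A_{ij}=\pt{\idpp}{P_{ij}}(Z_{ij})$, we use the fact that the geodesic $t\mapsto \exp(t\hat Z)\idpp\exp(-t\hat Z)$ is generated by a one-parameter subgroup of $\orth{n}$ whose generator is the tangent direction itself. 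Parallel transport along such a symmetric-space geodesic is the differential of the conjugation $\exp(t\hat Z)$, so $A_{ij}=O_{ij}Z_{ij}O_{ij}^\top$. This step is the main obstacle. We would justify it either by citing \citet[Prop. 3.5]{bendokat2024grassmann}, or by checking directly that $t\mapsto e^{t\hat Z}Ze^{-t\hat Z}$ stays tangent and has zero covariant derivative (projecting its derivative onto $T\graspp{}$ gives zero, since $[\hat Z,Z]$ is normal).

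\textbf{Input side.} Since $\pi$ is an isometry with the ONB picture, we have $\rielog^{\pp}_{P}(X)=\pi_{*,U}\bigl(\rielog^{\onb}_{U}(\pi^{-1}X)\bigr)$ with $U=(O_{ij})_{1:p}$ and $\pi(U)=P_{ij}$. Here $\pi^{-1}X$ is any ONB representative, for instance from an eigendecomposition of $X$. The PP metric is half the Frobenius inner product in the convention of \citet{bendokat2024grassmann}, which produces the factor $\tfrac12$. Combining gives
\[v_{ij}=\tfrac12\inner{\pi_{*,\pi(U)}\bigl(\rielog^{\onb}_{U}(\pi^{-1}X)\bigr)}{O_{ij}Z_{ij}O_{ij}^\top}.\]
Finally, we check that the normalization of the output basis is consistent with this same metric constant, so that no stray factor appears in $B^\pp$.
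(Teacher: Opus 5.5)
Your proposal is correct and follows the paper's proof essentially step for step. You use the exponential at $\idpp$ via the skew generator $[V,\idpp]$, and the orthonormal basis $\begin{pmatrix}0&E_{ij}^\top\\E_{ij}&0\end{pmatrix}$ needs no rescaling, since $\frac12\langle V_1,V_2\rangle=\langle B_{V_1},B_{V_2}\rangle$. You then trivialize $P_{ij}=O_{ij}\idpp O_{ij}^\top$, take the parallel transport as conjugation by $O_{ij}$, and pull the PP logarithm back through the isometry $\pi$, with the factor $\frac12$ coming from the PP metric.

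The only small difference is how you justify the transport. You check it directly along the trivializing geodesic: the commutator $[\hat Z,Z]$ is block-diagonal, hence normal. The paper instead reads it off the tabulated formula $\exp([\rielog_{\idpp}P,\idpp])\,V\exp(-[\rielog_{\idpp}P,\idpp])$, which implicitly needs $\rielog_{\idpp}(P_{ij})=\gamma_{ij}[Z_{ij}]$; your route avoids that assumption.
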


\textbf{Comparison.}
\citet{huang2018building} proposed FRMap + ReOrth layers to perform transformations over the ONB Grassmannian via left matrix product (FRMap) and QR decomposition (ReOrth). \citet{nguyen2022gyro} proposed PP scaling for the PP Grassmannian using the tangent space at the identity. \citet{nguyen2023building} extended PP scaling to the ONB Grassmannian. 
\begin{wraptable}{r}{0.45\textwidth}
    \centering
    \caption{Our GrConv against the existing Grassmannian transformation layers.}
    \label{tab:grconv_vs_others}%
    \resizebox{0.99\linewidth}{!}{
    \begin{tabular}{c|c|ccc}
        \toprule
        \multirow{2}{*}{Methods} & \multirow{2}{*}{Perspective} & \multicolumn{3}{c}{Flexible dimensions} \\
        \cmidrule{3-5} & & Subspace $p$ & Ambient $n$ & Channel \\
        \midrule
        \makecell{FRMap + ReOrth \\ \citep[Eqs. (2-4)]{huang2018building}}  & ONB & \xmark     &   \cmark  & \xmark \\
        \midrule
        \makecell {PP Scaling  \\ \citep[Sec. 4.2.2]{nguyen2022gyro}}
        &  PP &  \xmark    &  \xmark     & \xmark \\
        \midrule
        \makecell{ONB Scaling \\ \citep[Sec. 3.2]{nguyen2023building}}
        &  ONB &  \xmark    &  \xmark     & \xmark \\
        \midrule
        \makecell{GrTrans \\ \citep[Sec. 2.3.2]{nguyen2023building}}
        & ONB + PP & \xmark     &  \xmark     & \xmark \\
        \midrule
        \rowcolor{HilightColor} GrConv &  ONB + PP & \cmark      &  \cmark    & \cmark \\
        \bottomrule
    \end{tabular}%
    }
    \vspace{-8mm}
\end{wraptable}
In addition, \citet{nguyen2023building} used gyrogroup left translation (GrTrans) as a transformation. These layers are briefly recapped in \cref{app:sec:review_grass_trans_layers}. However, these previous layers fail to faithfully respect the Grassmannian geometries and lack flexibility regarding dimensions and perspectives. In contrast, given a $c$-channel Grassmannian $\grasonb{p,n}$ (or $\graspp{p,n}$) input, our GrConv can adjust all dimensions across both perspectives, enabling more flexibility. \cref{tab:grconv_vs_others} summarizes the above discussion.

\subsection{Manifold embedding}
In several applications \citep{chami2019hyperbolic,lopez2021vector,zhao2023modeling,nguyen2024matrix}, Euclidean features are embedded into the manifold via $\rieexp_{E}(Ax+b)$. As detailed in \cref{app:subsec:manifold_embedding}, our framework implies that this operation follows the Riemannian FC layer between the Euclidean space and the target manifold, \ie, $\calF(\cdot): \bbR{n} \to \calM$.

\section{Experiments}
\label{sec:experiments}

We evaluate the effectiveness of our layers on different manifolds. We refer the reader to \cref{app:subsec:exp_details_hyperbolic,app:subsec:exp_details_spd,app:subsec:exp_details_grass} for experimental details on the hyperbolic, SPD, and Grassmannian spaces, respectively.

\subsection{Experiments on the hyperbolic manifold}

We compare our HFC layers against other hyperbolic transformation layers, including Möbius \citep{ganea2018hyperbolic} and Einstein \citep{mao2024klein} transformations via tangent spaces, LFC \citep{chen2022fully} via spacetime, NestFC \citep{fan2022nested} via nested projections, and the Poincaré FC layer \citep{shimizu2021hyperbolic}. Following \citet{chami2019hyperbolic}, we adopt four graph datasets for the link prediction task: Disease \citep{anderson1991infectious}, Airport \citep{zhang2018link}, Pubmed \citep{namata2012query}, and Cora \citep{sen2008collective}.

\begin{table}[t]
  \centering
  \vspace{-3mm}
  \caption{Comparison of hyperbolic FC layers under the same two-layer HNN, reporting AUC (\%), full-model registered parameter elements, and peak allocated memory (MiB). The top 3 AUC results are highlighted with \ColFirst{red}, \ColSecond{blue}, and \ColThird{cyan}. The worst value for each resource is underlined.}
  \label{tab:hyperbolic_results}%
  \resizebox{\textwidth}{!}{
    \begin{tabular}{c|c|c|ccc|ccc|ccc|ccc}
    \toprule
    \multirow{2}[2]{*}{Method} & \multirow{2}[2]{*}{Mechanism} & \multirow{2}[2]{*}{Space} & \multicolumn{3}{c|}{Disease ($\delta=0$)} & \multicolumn{3}{c|}{Airport ($\delta=1$)} & \multicolumn{3}{c|}{Pubmed ($\delta=3.5$)} & \multicolumn{3}{c}{Cora ($\delta=11$)} \\
    & & & AUC & \#Param & MiB & AUC & \#Param & MiB & AUC & \#Param & MiB & AUC & \#Param & MiB \\
    \midrule
    Möbius \citep{ganea2018hyperbolic} & Tangent & $\pball{n}$ & 76.73 ± 4.86 & 464 & 77.20 & 93.26 ± 0.43 & 480 & 123.84 & \ColThird{94.95 ± 0.06} & 8,288 & 3273.19 & 90.75 ± 0.47 & 23,216 & 176.57 \\
    Einstein \citep{mao2024klein} & Tangent & $\klein{n}$ & 77.34 ± 2.56 & 464 & 77.32 & 92.72 ± 0.07 & 480 & 125.70 & \ColFirst{94.99 ± 0.13} & 8,288 & 3273.19 & 89.73 ± 0.21 & 23,216 & 176.57 \\
    LorentzTan & Tangent & $\bbh{n}$ & 76.54 ± 0.83 & 464 & 82.03 & 92.70 ± 0.19 & 480 & 125.46 & \ColSecond{94.99 ± 0.13} & 8,288 & 3653.47 & 89.73 ± 0.21 & 23,216 & 326.61 \\
    LFC \citep{chen2022fully} & Spacetime & $\bbh{n}$ & 78.00 ± 0.60 & 563 & 77.28 & 92.63 ± 0.27 & 580 & 118.01 & 94.22 ± 0.11 & 8,876 & 3653.48 & \ColSecond{91.74 ± 0.12} & 24,737 & 326.62 \\
    NestFC \citep{fan2022nested} & Nested projection & $\bbh{n}$ & 71.30 ± 3.50 & \underline{811} & 78.35 & 83.19 ± 0.90 & \underline{850} & 119.20 & 88.07 ± 0.75 & \underline{258,514} & \underline{3659.20} & 86.08 ± 0.84 & \underline{2,076,931} & \underline{1234.25} \\
    Poincaré FC \citep{shimizu2021hyperbolic} & Riemannian & $\pball{n}$ & 79.19 ± 2.05 & 528 & 78.72 & \ColThird{94.21 ± 0.43} & 544 & 127.80 & 94.30 ± 0.19 & 8,352 & 3294.38 & 87.16 ± 0.95 & 23,280 & 176.94 \\
    \midrule
    \rowcolor{HilightColor} HFC-P & Riemannian & $\pball{n}$ & \ColSecond{80.66 ± 1.35} & 496 & 83.42 & 94.13 ± 0.47 & 512 & 133.88 & 94.77 ± 0.28 & 8,320 & 3314.73 & \ColThird{90.76 ± 0.57} & 23,248 & 176.57 \\
    \rowcolor{HilightColor} HFC-K & Riemannian & $\klein{n}$ & \ColThird{80.58 ± 1.59} & 496 & 85.10 & \ColSecond{94.32 ± 0.27} & 512 & \underline{137.20} & 94.61 ± 0.27 & 8,320 & 3329.29 & 89.94 ± 0.49 & 23,248 & 176.57 \\
    \rowcolor{HilightColor} HFC-H & Riemannian & $\bbh{n}$ & \ColFirst{82.93 ± 0.89} & 496 & \underline{89.82} & \ColFirst{95.18 ± 0.18} & 512 & 134.76 & 93.99 ± 0.25 & 8,320 & 3653.48 & \ColFirst{92.20 ± 0.25} & 23,248 & 326.61 \\
    \bottomrule
    \end{tabular}%
    }
    \vspace{-5mm}
\end{table}%

\textbf{Results on HNN.}
Following the HNN implementations \citep{ganea2018hyperbolic,chami2019hyperbolic,mao2024klein}, we compare different transformation layers using the same two-layer HNN backbone. For every method, the first transformation maps the input feature dimension $n_{\mathrm{in}}$ to 16, and the second maps 16 to 16. Mimicking Möbius and Einstein transformations, we further implement the tangent transformation on the hyperboloid model, $\rielog_e (M\rielog_e(x))$, referred to as LorentzTan. \cref{tab:hyperbolic_results} presents the 5-fold average test AUC, number of parameters, and peak allocated GPU memory. We have the following key observations.
(1) \textbf{Effectiveness:}
Our HFC layers generally achieve superior performance over prior hyperbolic layers.
(2) \textbf{Hyperbolicity:}
Riemannian transformations outperform tangent or spacetime transformations on low-$\delta$ Disease and Airport. On the higher-$\delta$ datasets, tangent-space transformations remain competitive on Pubmed, whereas HFC-H performs best on Cora. These results suggest that the effectiveness of tangent-space approximations depends on data hyperbolicity.
(3) \textbf{Representation power \& metrics:}
The optimal models vary across datasets. Among our HFC variants, HFC-P performs best on Pubmed, whereas HFC-H performs best on Disease, Airport, and Cora, demonstrating the benefit of adapting the FC layer across hyperbolic models.
(4) \textbf{Parameter and memory efficiency.}
NestFC is the least parameter-efficient and the least memory-efficient for high-dimensional inputs, due to its matrix-manifold parameters. On Cora, it uses $\mathbf{89.3\times}$ as many parameters and $\mathbf{3.8\times}$ as much peak memory as HFC-H. Further analysis in \cref{app:subsubsec:hyperbolic_complexity} shows that our HFC layers have the same $O(Bnm)$ asymptotic complexity and comparable parameter dimensions as most existing hyperbolic FC layers, while NestFC has the highest asymptotic complexity and the largest parameter dimension.

\begin{table}[htbp]
  \centering
  \vspace{-3mm}
  \caption{Comparison of hyperbolic transformations under different settings of Poincaré RResNet.}
    \label{tab:exp_rresnet}%
    \resizebox{0.85\linewidth}{!}{
    \begin{tabular}{c|c|ccc|ccc}
    \toprule
    \multirow{2}[4]{*}{Dataset} & Number of horospheres & \multicolumn{3}{c|}{50} & \multicolumn{3}{c}{250} \\
\cmidrule{2-8}          & Dim   & 8     & 16    & 32    & 8     & 16    & 32 \\
    \midrule
    \multirow{4}[4]{*}{Disease} & RResNet \citep{katsman2024riemannian} & 76.0 ± 1.7 & 78.0 ± 2.2 & 77.4 ± 2.2 & 71.5 ± 5.1 & 78.1 ± 3.3 & 76.5 ± 2.5 \\
          & Möbius+RResNet & 74.6 ± 1.9 & 74.6 ± 5.7 & 75.1 ± 2.1 & 74.0 ± 2.7 & 71.0 ± 5.2 & 73.3 ± 3.4 \\
          & Poincaré FC+RResNet & 80.4 ± 0.7 & 79.1 ± 1.8 & 79.1 ± 1.6 & 80.6 ± 0.8 & 79.1 ± 0.7 & 80.1 ± 1.4 \\
\cmidrule{2-8}          & \cellcolor{HilightColor} HFC-P+RResNet & \cellcolor{HilightColor} \cellcolor{HilightColor} \ColFirst{81.1 ± 0.6} & \cellcolor{HilightColor} \cellcolor{HilightColor} \ColFirst{80.0 ± 0.4} & \cellcolor{HilightColor} \ColFirst{81.0 ± 0.6} & \cellcolor{HilightColor} \ColFirst{80.9 ± 0.6} & \cellcolor{HilightColor} \ColFirst{82.3 ± 0.6} & \cellcolor{HilightColor} \ColFirst{82.1 ± 0.3} \\
    \midrule
    \multirow{4}[4]{*}{Airport} & RResNet \citep{katsman2024riemannian} & 93.4 ± 1.1 &  92.6 ± 1.1 &  93.0 ± 0.2 & 93.0 ± 0.4 & 93.0 ± 1.6 & 89.6 ± 4.7 \\
          & Möbius+RResNet & 92.9 ± 0.5 & 93.0 ± 0.3 & 92.6 ± 0.3 & 92.9 ± 0.1 & 93.2 ± 0.2 & 92.9 ± 0.6 \\
          & Poincaré FC+RResNet & 92.8 ± 0.6 & 93.4 ± 0.6 & 93.8 ± 0.4 & 93.5 ± 0.4 & 93.1 ± 0.4 & 93.8 ± 0.7 \\
\cmidrule{2-8}          & \cellcolor{HilightColor} HFC-P+RResNet & \cellcolor{HilightColor} \ColFirst{94.1 ± 0.5} & \cellcolor{HilightColor} \ColFirst{93.5 ± 0.3} & \cellcolor{HilightColor} \ColFirst{94.8 ± 0.5} & \cellcolor{HilightColor} \ColFirst{94.1 ± 0.6} & \cellcolor{HilightColor} \ColFirst{94.0 ± 0.4} & \cellcolor{HilightColor} \ColFirst{94.3 ± 0.4} \\
    \midrule
    \multirow{4}[4]{*}{Cora} & RResNet \citep{katsman2024riemannian} & \ColFirst{86.7 ± 1.2} & 87.2 ± 1.4 & 82.4 ± 3.5 & 82.7 ± 3.0 & 84.0 ± 3.7 & 83.3 ± 1.6 \\
          & Möbius+RResNet & 84.6 ± 2.9 & 86.8 ± 2.1 & 83.1 ± 2.5 & 84.1 ± 2.4 & 83.2 ± 1.6 & 83.9 ± 2.9 \\
          & Poincaré FC+RResNet & 83.8 ± 2.4 & 84.6 ± 0.9 & 83.3 ± 2.7 & 82.8 ± 3.3 & 82.8 ± 3.6 & 83.3 ± 3.3 \\
\cmidrule{2-8}          & \cellcolor{HilightColor} HFC-P+RResNet & \cellcolor{HilightColor} 85.6 ± 0.8 & \cellcolor{HilightColor} \ColFirst{87.6 ± 0.8} & \cellcolor{HilightColor} \ColFirst{87.2 ± 1.8} & \cellcolor{HilightColor} \ColFirst{87.68 ± 1.81} & \cellcolor{HilightColor} \ColFirst{86.08 ± 1.72} & \cellcolor{HilightColor} \ColFirst{86.97 ± 1.04} \\
    \bottomrule
    \end{tabular}%
    }
    \vspace{-3mm}
\end{table}%

\textbf{Results on RResNet.}
We conduct ablations on the RResNet backbone \citep{katsman2024riemannian}. Since hyperbolic RResNet is built on the Poincaré ball, we compare Poincaré transformation layers, \ie, Möbius, Poincaré FC, and our HFC-P. In the vanilla RResNet, inputs are first projected to the target dimension using a Euclidean linear layer, followed by mapping to the hyperbolic space and processing with hyperbolic residual blocks. In contrast, we first map the input to the hyperbolic space and then apply a hyperbolic transformation layer before feeding it into the residual blocks. This transformation layer can be instantiated as Möbius, Poincaré FC, or our HFC-P layer. We perform experiments across various configurations of the residual blocks, varying both the hidden dimensions and the number of horospheres. \cref{tab:exp_rresnet} presents the 5-fold average AUC results. Our HFC-P generally outperforms other hyperbolic transformations, demonstrating its effectiveness.

\textbf{Results on NHGCN.}
We further evaluate HFC within NHGCN \citep{fan2022nested}, a hyperbolic graph convolutional network. \Cref{app:subsubsec:nhgcn_comparison} shows that HFC improves NHGCN with substantially fewer parameters.

\begin{table}[htbp]
  \centering
  \vspace{-3mm}
  \caption{Comparison of our SPDNNs against other SPD networks. The ones highlighted with \colorbox{HilightColortwo}{\phantom{ora}} are our special cases, while those marked with $^*$ are reproduced by us because official code is unavailable.}  \label{tab:results_spdnns}
  \resizebox{0.8\linewidth}{!}{
  \begin{tabular}{ccccc}
    \toprule
    Methods & Radar & HDM05 & FPHA  & NTU60 \\
    \midrule
    SPDNet \citep{huang2017riemannian} & 93.25 ± 1.10 & 64.57 ± 0.61 & 85.59 ± 0.72 & 66.36 ± 0.72 \\
    SPDNetBN \citep{brooks2019riemannian} & 94.85 ± 0.99 & 71.28 ± 0.79 & 89.33 ± 0.49 & 69.38 ± 0.84 \\
    RResNet-AIM \citep{katsman2024riemannian} & 95.71 ± 0.37 & 64.95 ± 0.82 & 86.63 ± 0.55 & 70.70 ± 3.81 \\
    RResNet-LEM \citep{katsman2024riemannian} & 95.89 ± 0.86 & 70.12 ± 2.45 & 85.07 ± 0.99 & 74.67 ± 2.89 \\
    SPDNetLieBN-AIM \citep{chen2024liebn} & 95.47 ± 0.90 & 71.83 ± 0.69 & 90.39 ± 0.66 & 73.34 ± 0.40 \\
    SPDNetLieBN-LCM \citep{chen2024liebn} & 94.80 ± 0.71 & 71.78 ± 0.44 & 86.33 ± 0.43 & 72.54 ± 1.09 \\
    SPDNetMLR \citep{chen2024rmlr} & 95.64 ± 0.83 & 65.90 ± 0.93 & 85.67 ± 0.69 & 74.18 ± 1.24 \\
    GyroLE$^*$ \citep{nguyen2023building} & 97.31 ± 0.68 & 73.17 ± 0.37 & 90.73 ± 0.92 & 82.65 ± 0.20 \\
    GyroLC$^*$ \citep{nguyen2023building} & 95.23 ± 0.66 & 67.53 ± 0.85 & 76.10 ± 0.63 & 78.32 ± 0.92 \\
    GyroAI$^*$ \citep{nguyen2023building} & 97.17 ± 0.67 & 72.34 ± 1.06 & 89.60 ± 0.37 & \textcolor{cyan}{\textbf{83.71 ± 0.32}} \\
    \midrule
    \rowcolor{HilightColortwo} GyroSPD++-AIM$^*$ \citep{nguyen2024matrix} & 97.01 ± 0.40 & 69.82 ± 1.79 & 89.50 ± 0.37 & 83.14 ± 0.87 \\
    \rowcolor{HilightColortwo} GyroSPD++-LEM$^*$ \citep{nguyen2024matrix} & 98.08 ± 0.26 & 77.63 ± 1.01 & 88.23 ± 0.62 & \textcolor{blue}{\textbf{85.48 ± 1.10}} \\
    \rowcolor{HilightColortwo} GyroSPD++-LCM$^*$ \citep{nguyen2024matrix} & 97.28 ± 0.78 & 75.36 ± 1.08 & 81.83 ± 0.93 & 74.64 ± 2.49 \\
    \midrule
    \rowcolor{HilightColor} SPDNN-LEM & \textcolor{cyan}{\textbf{98.27 ± 0.48}} & \textcolor{red}{\textbf{81.16 ± 0.93}} & \textcolor{red}{\textbf{91.83 ± 0.41}} & \textcolor{red}{\textbf{86.72 ± 0.14}} \\
    \rowcolor{HilightColor} SPDNN-AIM & 97.63 ± 0.50 & \textcolor{blue}{\textbf{80.12 ± 0.78}} & \textcolor{blue}{\textbf{91.57 ± 0.40}} & 82.44 ± 0.18 \\
    \rowcolor{HilightColor} SPDNN-PEM & \textcolor{blue}{\textbf{98.43 ± 0.44}} & \textcolor{cyan}{\textbf{78.77 ± 0.45}} & 90.33 ± 0.37 & 82.61 ± 0.37 \\
    \rowcolor{HilightColor} SPDNN-LCM & 97.65 ± 0.75 & 75.42 ± 0.95 & \textcolor{cyan}{\textbf{91.33 ± 0.24}} & 83.39 ± 0.10 \\
    \rowcolor{HilightColor} SPDNN-BWM & \textcolor{red}{\textbf{98.72 ± 0.14}} & 72.49 ± 2.02 & 87.80 ± 0.56 & 82.64 ± 0.35 \\
    \bottomrule
  \end{tabular}
  }
  \vspace{-4mm}
\end{table}

\subsection{Experiments on the SPD manifold}

Following \citet{huang2017deep,brooks2019riemannian,katsman2024riemannian}, we use the Radar dataset \citep{brooks2019riemannian} for radar classification, and the HDM05 \citep{muller2007documentation}, FPHA \citep{garcia2018first}, and NTU60 \citep{shahroudy2016ntu} datasets for human action recognition. In line with \citet{nguyen2024matrix}, we focus on the mutual action in NTU60. Following \citet{wang2024grassatt,nguyen2024matrix}, we model each sample sequence as multi-channel SPD covariance matrices of shape $[c,n,n]$.

\textbf{SPDNN.}
Our SPDNN has an MLR layer \citep{chen2024rmlr} stacked on top of a convolutional layer. We denote by SPDNN-[Metric] the SPDNN using convolution under the specified metric. For SPDNN-LEM, -PEM, and -LCM, the MLR is based on the same metric as the convolution. Since the MLRs under AIM and BWM are less efficient \citep{chen2024rmlr}, we apply LEM MLR for SPDNN-AIM and -BWM. Moreover, we trivialize the SPD parameters in the
MLR as \cref{subsec:fc_parameters}, which can be further simplified (detailed
in \cref{app:sec:simplified_sdp_mlr}). Consequently, all parameters in the SPDNNs can be optimized by a Euclidean optimizer. We compare our networks against the following SOTA SPD networks: SPDNet \citep{huang2017deep}, SPDNetBN \citep{brooks2019riemannian}, LieBN \citep{chen2024liebn}, RResNet \citep{katsman2024riemannian}, MLR \citep{chen2024rmlr}, Gyro \citep{nguyen2023building}, and GyroSPD++ \citep{nguyen2024matrix}.

\begin{wraptable}{r}{0.5\textwidth}
  \centering
  \caption{The average training time per epoch of our SPDNNs compared with GyroSPD++. A full comparison of efficiency can be found in \cref{app:subsec:efficiency_spd}.
  }
  \label{tab:efficiency_spd}
    \resizebox{\linewidth}{!}{
    \begin{tabular}{c|c|cccc}
    \toprule
    Geometry & Method & Radar & HDM05 & FPHA  & NTU60 \\
    \midrule
    \multirow{2}[2]{*}{AIM} & \cellcolor{HilightColortwo} GyroSPD++ & \cellcolor{HilightColortwo} 5.09  & \cellcolor{HilightColortwo} 103.57  & \cellcolor{HilightColortwo} 66.35  & \cellcolor{HilightColortwo} 125.05 \\
    \cmidrule{2-6} & \cellcolor{HilightColor} SPDNN & \cellcolor{HilightColor} 4.84 & \cellcolor{HilightColor} 101.80  & \cellcolor{HilightColor} 65.42  & \cellcolor{HilightColor} 124.41 \\
    \midrule
    \multirow{2}[1]{*}{LEM} & \cellcolor{HilightColortwo} GyroSPD++ & \cellcolor{HilightColortwo} 0.99  & \cellcolor{HilightColortwo} 0.95  & \cellcolor{HilightColortwo} 0.66  & \cellcolor{HilightColortwo} 7.58 \\
    \cmidrule{2-6} & \cellcolor{HilightColor} SPDNN & \cellcolor{HilightColor} 0.86  & \cellcolor{HilightColor} 0.74  & \cellcolor{HilightColor} 0.63  & \cellcolor{HilightColor} 5.79 \\
    \midrule
    \multirow{2}[1]{*}{LCM} & \cellcolor{HilightColortwo} GyroSPD++ & \cellcolor{HilightColortwo} 0.66  & \cellcolor{HilightColortwo} 0.70  & \cellcolor{HilightColortwo} 0.37  & \cellcolor{HilightColortwo} 5.74 \\
    \cmidrule{2-6} & \cellcolor{HilightColor} SPDNN & \cellcolor{HilightColor} 0.65  & \cellcolor{HilightColor} 0.59  & \cellcolor{HilightColor} 0.35  & \cellcolor{HilightColor} 3.72 \\
    \bottomrule
    \end{tabular}%
    }
  \vspace{-3mm}
\end{wraptable}
\textbf{Results.}
\Cref{tab:results_spdnns,tab:efficiency_spd} reports the classification results and training efficiency, respectively. Our SPDNNs consistently outperform other SPD models. Specifically, SPDNNs exceed the classic SPDNet by up to \textbf{5.47\%, 16.59\%, 6.24\%, and 20.36\%} on the four datasets, respectively. Despite sharing the same architecture, SPDNN generally outperforms GyroSPD++ in both accuracy and efficiency across LEM, LCM, and AIM. This advantage arises because our trivialization not only simplifies the expression of the FC and MLR layers but also mitigates the over-parameterization in GyroSPD++. In GyroSPD++, each output dimension of the FC layer requires two matrix parameters, whereas our approach uses only one matrix and one scalar parameter. This reduction in parameter complexity leads to improved training efficiency and generalization. Furthermore, the variation in optimal metrics across datasets underscores the flexibility of our methods.

\subsection{Experiments on the Grassmannian}
\begin{wraptable}{r}{0.6\textwidth}
    \centering
    \vspace{-3mm}
    \caption{Comparison of GrNNs against other Grassmannian networks on the Radar dataset. Those marked with $^*$ are reproduced by us because official code is unavailable.}
    \label{tab:results_grnns}
    \resizebox{\linewidth}{!}{
    \begin{tabular}{c|c|c|c}
        \toprule
        Method & Subspace dims & Ambient dims  & Mean ± Std \\
        \midrule
        GrNet \citep{huang2018building} & 4     & 20-->16 & 90.48 ± 0.76 \\
        GyroGr-Scaling$^*$ \citep{nguyen2023building} & 4     & 20-->20 & 88.88 ± 1.52 \\
        GyroGr$^*$ \citep{nguyen2023building} & 4     & 20-->20 & 90.64 ± 0.57 \\
        \midrule
        \multirow{4}{*}{GrNN-ONB}
        & \cellcolor{HilightColor}4-->4 & \cellcolor{HilightColor}20-->16 & \cellcolor{HilightColor}93.92 ± 0.74 \\
        & \cellcolor{HilightColor}4-->4 & \cellcolor{HilightColor}20-->20 & \cellcolor{HilightColor}92.83 ± 0.66 \\
        & \cellcolor{HilightColor}4-->6 & \cellcolor{HilightColor}20-->16 & \cellcolor{HilightColor}\textcolor{red}{\textbf{95.23 ± 0.96}} \\
        & \cellcolor{HilightColor}4-->8 & \cellcolor{HilightColor}20-->16 & \cellcolor{HilightColor}\textcolor{blue}{\textbf{94.77 ± 0.81}} \\
        \midrule
        \multirow{4}{*}{GrNN-PP}
        & \cellcolor{HilightColor}4-->4 & \cellcolor{HilightColor}20-->16 & \cellcolor{HilightColor}94.35 ± 0.42 \\
        & \cellcolor{HilightColor}4-->4 & \cellcolor{HilightColor}20-->20 & \cellcolor{HilightColor}\textcolor{cyan}{\textbf{94.56 ± 0.58}} \\
        & \cellcolor{HilightColor}4-->6 & \cellcolor{HilightColor}20-->16 & \cellcolor{HilightColor}94.51 ± 0.53 \\
        & \cellcolor{HilightColor}4-->8 & \cellcolor{HilightColor}20-->16 & \cellcolor{HilightColor}94.11 ± 0.58 \\
        \bottomrule
    \end{tabular}
    }
    \vspace{-3mm}
\end{wraptable}
We compare our Grassmannian convolutional layer against previous transformation layers, such as FRMap + ReOrth \citep{huang2018building}, scaling \citep{nguyen2023building}, and GrTrans \citep{nguyen2023building}. Compared with these previous layers, our transformation can more faithfully respect Grassmannian geometries while allowing greater flexibility with respect to dimensions and geometries. Following \citet{nguyen2023building}, each network consists of one transformation layer followed by classification. The corresponding models are denoted by GrNet \citep{huang2018building}, GyroGr-Scaling \citep{nguyen2023building}, GyroGr \citep{nguyen2023building}, GrNN-ONB, and GrNN-PP, respectively. Since our GrConv allows a more flexible change in dimensionality, we also perform ablations on the subspace and ambient dimensions of the output of the FC transformation. The experiments are conducted on the Radar dataset. Following \citet{wang2024grassatt}, we model each radar signal as a multi-channel Grassmannian tensor, \ie, $[c,n,p]$ for ONB and $[c,n,n]$ for PP. \cref{tab:results_grnns} reports the five-fold averages: GrConv outperforms the baselines, while varying the subspace dimension yields the two best results.

\section{Conclusion}
This paper extends fundamental FC and convolutional layers to operate on Riemannian manifolds. Our approach offers a naturally geometry-aware generalization that is more broadly applicable than previous work. Several existing Riemannian FC layers are subsumed within our framework as special cases. Empirically, we instantiate our framework across ten different geometries, including three hyperbolic models, five SPD geometries, and two Grassmannian formulations. Extensive experiments on radar classification, human action recognition, and graph link prediction demonstrate the effectiveness and flexibility of our approach. We expect this work to facilitate further advances in deep learning on Riemannian spaces.

\begin{ack}
This work was supported by a DAAD Research Grant in Germany (No. 57811724), an ELIZA PhD Mobility Scholarship, an ELSA Mobility Grant, and an ELIAS Mobility Grant. The author also acknowledges the support of CINECA and the EuroHPC Joint Undertaking for granting access to Leonardo at CINECA, Italy.
\end{ack}
\bibliographystyle{plainnat}
\bibliography{ref.bib}



\newpage
\appendix
\onecolumn
\startcontents[appendices]
\printcontents[appendices]{l}{1}{\section*{Appendix Contents}}
\newpage


\printglossary[type=acronym, title={List of acronyms}]

\section{Use of large language models}
\label{app:sec:llm-usage}
Large Language Models (LLMs) were used primarily for language polishing and minor text editing. In limited cases, they also assisted in translating certain mathematical formulations into PyTorch code. All generated outputs were carefully reviewed and, where necessary, corrected by the authors. The authors take full responsibility for the final content of this paper.

\section{Limitations}
\label{app:sec:limitations}
Our framework is designed for computationally tractable Riemannian manifolds, where closed-form expressions for exponential and logarithmic maps are available. This includes many commonly used manifolds, such as hyperbolic, SPD, and Grassmannian spaces. However, in cases where the underlying manifold structure is unknown or lacks tractable Riemannian operators, our approach may not be directly applicable. In such scenarios, future work could explore numerical approximations of Riemannian operators or develop new paradigms for constructing transformation layers for intractable geometries.

\section{Glossary of symbols}
\label{app:sec:notations}

\cref{app:tab:sum_notaitons} summarizes all the notation in the main paper.

\begin{table}[t!]
    \centering
    \caption{Summary of notation.}
    \label{app:tab:sum_notaitons}
    \resizebox{0.9\linewidth}{!}{
    \begin{tabular}{cc}
        \toprule
        Notation & Explanation  \\
        \midrule
        $\{\calN, g^\calN\}$ & Riemannian manifold $\calN$ with Riemannian metric $g^\calN$\\
        $\{\calM, g^\calM\}$ & Riemannian manifold $\calM$ with Riemannian metric $g^\calM$\\
        $E$ & Origin of the manifold of interest\\
        $T_P\calM$ & Tangent space at $P \in \calM$\\
        $g_p(\cdot ,\cdot)$ or $\langle \cdot, \cdot \rangle_P$ & Riemannian metric at $P$ \\
        $\| \cdot \|_P$ & The norm induced by $\langle \cdot, \cdot \rangle_P$ on $T_P\calM$ \\
        $\dist(X,Y)$ & Geodesic distance between points $X$ and $Y$\\
        $\dist(X,H)$ & Point-to-hyperplane pseudo-distance from $X$ to $H$\\
        $\rielog_P$ & Riemannian logarithm at $P$\\
        $\rieexp_P$ & Riemannian exponential map at $P$\\
        $\pt{P}{Q}$ & Parallel transport from $P$ to $Q$ along the geodesic \\
        $f_{*,P}$ & Differential map of the smooth map $f$ at $P \in \calM$\\
        $\{B_i\}_{i=1}^m$ & Standard orthonormal basis over the $m$-dimensional $T_E\calM$ \\
        \midrule
        $\pball{n}, \klein{n}$ and $\bbh{n}$ & Hyperbolic models of Poincaré ball, Beltrami--Klein, and hyperboloid ($K<0$)\\
        $\bbR{n}$ & Euclidean space of $n$-dimensional vectors\\
        $\Linner{\cdot}{\cdot}$ & Lorentz inner product \\
        $\Moplus$ and $\Motimes$ & Möbius gyro addition and scalar product\\
        $\Eoplus$ and $\Eotimes$ & Einstein gyro addition and scalar product\\
        $\pi_{\klein{n} \to \pball{n}}$ and $\pi_{\pball{n} \to \klein{n}}$ & Riemannian isometries between Beltrami--Klein and Poincaré ball\\
        \midrule
        $\spd{n}$ & Space of $n \times n$ SPD matrices \\
        $\sym{n}$ & Euclidean space of $n \times n$ symmetric matrices \\
        $\tril{n}$ & Euclidean space of $n \times n$ lower triangular matrices \\
        $\langle \cdot, \cdot \rangle$ & Standard Frobenius inner product\\
        $\langle \cdot, \cdot \rangle^{(\alpha, \beta)}$ & $\orth{n}$-invariant Euclidean metric on $\sym{n}$ s.t. $\min (\alpha, \alpha+n \beta)>0$ \\
        $\Fnorm{\cdot}$ &  Frobenius Norm \\
        $\log$ & Matrix logarithm \\
        $\exp$ & Matrix exponential \\
        $P^{\theta}$ & Matrix power for SPD matrix $P$\\
        $\calL_{P}[\cdot]$ & Lyapunov operator by $P \in \spd{n}$ \\
        $\scrL$ & Cholesky decomposition\\
        $\dlog$ & Diagonal element-wise logarithm \\
        $\lfloor \cdot \rfloor$ & Strictly lower triangular part of a square matrix \\
        $\bbD(\cdot)$  & A diagonal matrix with diagonal elements from a square matrix\\
        \midrule
        $\grasonb{p,n}$ & Grassmannian under the ONB perspective\\
        $\graspp{p,n}$ & Grassmannian under the projector perspective\\
        $\qr(\cdot)$ & Return an orthogonal matrix by QR decomposition \\
        $[\cdot,\cdot]$ & Matrix commutator\\
        $\idonb$ & Grassmannian identity under the ONB perspective\\
        $\idpp$ & Grassmannian identity under the projector perspective\\
        $I_n$ & $n \times n$ identity matrix\\
        $\pi$ & Riemannian isometry from $\grasonb{p,n}$ onto $\graspp{p,n}$\\
        $\overline{(\cdot)}$ &  $\overline{(\cdot)} = \widetilde{\rielog}_{\idpp}(\cdot)$ with $\widetilde{\rielog}$ being the Riemannian logarithm on $\graspp{p,n}$ \\
        $\bbzero$ & Zero matrix or vector\\
        $\stiefel{p, n}$ & Stiefel manifold of $n \times p$ column-wise orthogonal matrices\\
        $\GL{n}$ & General linear group of $n \times n$ invertible matrices\\
        $\orth{n}$ & Orthogonal group of $n \times n$ orthogonal matrices\\
        \bottomrule
    \end{tabular}
    }
\end{table}

\section{Geometries of the involved vector and matrix manifolds}
\label{app:sec:geom_spd_gras_hyperbolic}

\subsection{Geometries of the hyperbolic space}
\label{app:subsec:geom_hyperbolic}

\begin{table}[t!]
    \centering
    \caption{Riemannian operators on the Poincaré ball and hyperboloid $(K<0)$.}
    \label{app:tab:riem_operators_hyperbolic}
    \resizebox{\linewidth}{!}{
    \begin{tabular}{ccc}
        \toprule
        Operator & $\pball{n} =\left\{x \in \mathbb{R}^{n} \mid \norm{x}^2 < -\frac{1}{K} \right\}$ & \makecell{$\bbh{n}=\left\{x \in \mathbb{R}^{n+1} \mid \Lnorm{x}^2 = \frac{1}{K}, x_1 > 0 \right\}$, \\ with $\Lnorm{x}^2 = \sum_{i=2}^{n+1} x _i ^2 - x _1 ^2$} \\
        \midrule
        $g_x(v,w)$ & \makecell{$(\lambda^K_x)^2 \inner{v}{w}$ \\ $\lambda^K_x =\frac{2}{\left(1+K\|x\|^2\right)}$} & $\Linner{v}{w} = \sum_{i=2}^{n+1} v_i w_i - v_1 w_1$ \\
        \midrule
        $\dist(x,y)$ &  $\frac{2}{\sqrt{|K|}} \tanh ^{-1}\left(\sqrt{|K|}\|- x \Moplus  y\| \right)$ & $\frac{1}{\sqrt{|K|}} \cosh ^{-1}\left(K \Linner{x}{y} \right)$ \\
        \midrule
        $\rielog_x (y)$ & $\frac{2}{\sqrt{|K|} \lambda_{x}^K} \tanh ^{-1}\left(\sqrt{|K|}\left\|-x \Moplus y\right\| \right) \frac{-x \Moplus y}{\left\|-x \Moplus y\right\| }$ & $\frac{\cosh^{-1}\left(K\langle x, y\rangle_\calL \right)}{\sinh \left(\cosh^{-1}\left(K\langle x, y\rangle_\calL \right)\right)}\left(y-K\langle x, y\rangle_\calL x\right)$ \\
        \midrule
        $\Gamma_{x \rightarrow y} (v)$
        & $\frac{\lambda_x^K}{\lambda_y^K} \gyr[y,-x] v$ & $v-\frac{K \langle y, v\rangle_\calL}{1+ K \langle x, y\rangle_\calL}(x+y)$ \\
        \midrule
        $\rieexp_x (v)$ & $x \Moplus\left(\tanh \left(\sqrt{|K|} \frac{\lambda_x^K\|v\|}{2}\right) \frac{v}{\sqrt{|K|}\|v\|}\right)$ & $\cosh \left(\sqrt{|K|} \Lnorm{v} \right) x+\sinh \left(\sqrt{|K|} \Lnorm{v} \right) \frac{v}{\sqrt{|K|} \Lnorm{v}}$ \\
        \midrule
        References & \citep{ganea2018hyperbolic,skopek2019mixed,ungar2022gyrovector} & \citep{petersen2006riemannian,skopek2019mixed} \\
        \bottomrule
    \end{tabular}
    }
\end{table}

\begin{table}[t!]
    \centering
    \caption{Riemannian operators on the Beltrami--Klein model $(K<0)$.}
    \label{app:tab:riem_operators_klein}
    \begin{tabular}{cc}
        \toprule
        Operators & $\klein{n} = \{x \in \bbR{n} \mid \|x\|^2 < -\tfrac{1}{K}\}$ \\
        \midrule
        $g_x(v,w)$ & $\frac{\inner{v}{w}}{1+ K \norm{x} ^2} - \frac{K \inner{x}{v} \inner{x}{w}}{\left(1 + K \norm{x} ^2\right)^2}$\\
        \midrule
        $\dist(x,y)$
        & $\frac{2}{\sqrt{-K}} \tanh^{-1}\left(
            \sqrt{-K}  \frac{\| -x \Eoplus y \|}{1+\sqrt{1+K\| -x \Eoplus y \|^2}}
        \right)$ \\
        \midrule
        $\rieexp_x(v)$
        & $x \Eoplus \rieexp_{\bbzero}\left(
            \frac{1}{\sqrt{1+K\|x\|^2}} v
            - \frac{K \langle x,v\rangle}{(1+\sqrt{1+K\|x\|^2})(1+K\|x\|^2)} x
        \right)$ \\
        \midrule
        $\rielog_x(y)$
        & $\frac{1}{\lambda_{\tilde{x}}^K}
             (\pi_{\pball{n}\to\klein{n}})_{*,\tilde{x}}
            \left( \rielog_{\bbzero}(-x \Eoplus y) \right),
            \quad \tilde{x} = \pi_{\klein{n}\to\pball{n}}(x)$ \\
        \midrule
        References & \citep{ungar2022analytic,chen2025gyrobnextension} \\
        \bottomrule
    \end{tabular}
\end{table}

There are five models over the hyperbolic space \citep{cannon1997hyperbolic}. We focus on the Poincaré ball, Beltrami--Klein, and hyperboloid models:
\begin{align}
    \text{Poincaré ball: }& \pball{n} =\left\{x \in \mathbb{R}^{n} \mid \norm{x}^2 < -\frac{1}{K} \right\}, \\
    \text{Beltrami--Klein: }& \klein{n} =\left\{x \in \mathbb{R}^{n} \mid \norm{x}^2 < -\frac{1}{K} \right\}, \\
    \text{Hyperboloid: }& \bbh{n}=\left\{x \in \mathbb{R}^{n+1} \mid \Lnorm{x}^2 = \frac{1}{K}, x_1>0 \right\},
\end{align}
where $\Lnorm{x}^2 = \sum_{i=2}^{n+1} x _i ^2 - x _1 ^2$ is the Lorentz inner product, and $\norm{\cdot}$ is the standard $L_2$ norm induced by the standard inner product $\inner{\cdot}{\cdot}$. Here, $K<0$ is the constant curvature. Although the set of the Poincaré ball is identical to that of the Beltrami--Klein model, their Riemannian metrics are different. In fact, each of the above models has its own Riemannian metric:
\begin{align}
    g ^{\mathbb{P}}_{x}(v,w) &= (\lambda^K_x)^2 \inner{v}{w}, \\
    g ^{\mathbb{K}}_{x}(v,w) &= \frac{\inner{v}{w}}{1+ K \norm{x} ^2} - \frac{K \inner{x}{v} \inner{x}{w}}{\left(1 + K \norm{x} ^2\right)^2}, \\
    g ^{\mathbb{H}}_{x}(v,w) &= \Linner{v}{w} = \sum_{i=2}^{n+1} v_i w_i - v_1 w_1,
\end{align}
where $\lambda^K_x =\frac{2}{\left(1+K\|x\|^2\right)}$ is a conformal factor.

As shown by \citet{ungar2022analytic}, both the Poincaré ball and Beltrami--Klein models admit gyrovector structures, which are the natural manifold counterparts of vector spaces. The Poincaré ball admits a Möbius gyrovector space \citep[Ch. 6.14]{ungar2022analytic}, while the Beltrami--Klein model admits an Einstein gyrovector space \citep[Ch. 6.18]{ungar2022analytic}. Denoting $\calH \in \{\pball{n}, \klein{n}\}$, for any $x,y \in \calH$ and $r \in \bbRscalar$, the gyro operations are defined as
\begin{align}
    \label{app:eq:mobius_addition}
    \text{Möbius addition}
    &:x \Moplus y = \frac{\left(1-2 K\langle x, y\rangle-K\|y\|^2\right) x+\left(1+K\|x\|^2\right) y}{1-2 K\langle x, y\rangle+K^2\|x\|^2\|y\|^2}, \\
    \text{Möbius scalar multiplication}
    &: r \Motimes x
    =\frac{\tanh \left(r \tanh ^{-1}\left( \sqrt{-K} \|x\| \right)\right)}{\sqrt{-K}}  \frac{x}{\|x\|}, \\
    \text{Einstein addition}
    &: x \Eoplus y
    =\frac{1}{1 - K\inner{x}{y}}\left(x+\frac{1}{\gamma_{x}} y -K \frac{\gamma_{x}}{1+\gamma_{x}} \inner{x}{y} x\right), \\
    \text{Einstein scalar multiplication}
    &: r \Eotimes x
    =\frac{\tanh \left(r \tanh ^{-1}\left( \sqrt{-K} \|x\| \right)\right)}{\sqrt{-K}}  \frac{x}{\|x\|}.
\end{align}
where $\gamma_{x}=1 / \sqrt{1+ K \|x\|^2}$ is called the gamma factor. Interestingly, the scalar gyromultiplications are identical under the Möbius and Einstein gyrovector spaces.

The Poincaré ball and hyperboloid admit closed-form Riemannian operators, as summarized in \cref{app:tab:riem_operators_hyperbolic}. The parallel transport over the Poincaré ball requires the notion of gyration \citep{ungar2022analytic}:
\begin{equation}
    \gyr[x, y] z=\ominus_\mathrm{M} \left(x \Moplus y\right) \Moplus\left(x \Moplus\left(y \Moplus z\right)\right), \forall x,y,z \in \pball{n}.
\end{equation}

\citet[Sec. 5.6]{chen2025gyrobn} studied the Riemannian structure over the Beltrami--Klein ball. The Beltrami--Klein ball is isometric to the Poincaré ball by
\begin{align}
        \pi_{\klein{n} \to \pball{n}}
        &: x \in \klein{n} \longmapsto
        \frac{1}{ 1 +  \sqrt{1 + K \|x\|^2} } x \in \pball{n},\\
        \pi_{\pball{n} \to \klein{n}}
        &: x \in \pball{n} \longmapsto \frac{2}{1 - K \|x\|^2} x \in \klein{n}.
\end{align}
Using the above isometries, \citet[Sec. 5.6]{chen2025gyrobn} introduced closed-form expressions for the Riemannian operators on the Beltrami--Klein ball, as summarized in \cref{app:tab:riem_operators_klein}. In particular, the Riemannian exponential and logarithmic maps at the zero vector $\bbzero$ are identical under the Beltrami--Klein and Poincaré ball models:
\begin{align}
    \label{app:eq:exp_0_klein}
    \rieexp_{\bbzero}(v) &= \tanh (\sqrt{|K|}\|v\|) \frac{v}{\sqrt{|K|}\|v\|}, \quad \forall v \in T_\bbzero \calH,\\
    \label{app:eq:log_0_klein}
    \rielog_{\bbzero}(x) &= \tanh ^{-1}(\sqrt{|K|}\|x\|) \frac{x}{\sqrt{|K|}\|x\|}, \quad \forall x \in \calH,
\end{align}
with $\calH \in \{ \klein{n}, \pball{n} \}$.

As shown by \citet[Secs. 5.4 and 5.6]{chen2025gyrobnextension}, both the Möbius and Einstein gyrovector operations can be expressed by their Riemannian geometries
\begin{align}
    \label{app:eq:klein-poincare-add-riem}
    x \oplus _{\calH} y &= \rieexp _x \left(\pt{{\bbzero}}{x} (\rielog _{\bbzero} (y)) \right), \\
    \label{app:eq:klein-poincare-prod-riem}
    t \otimes _{\calH} x &= \rieexp _{\bbzero} ( t \rielog _{\bbzero} (x)),
\end{align}
where $\oplus _{\calH}$ and $\otimes _{\calH}$ are the gyroaddition and gyromultiplication under the corresponding model.

\subsection{Geometries of the SPD manifold}
\label{app:subsec:geom_spd}

\cref{app:tab:riem_operators_lem_aim_pem,app:tab:riem_operators_bwm_lc}  summarizes the associated Riemannian operators and properties. Following \cref{app:tab:sum_notaitons}, we further make the following notation. Given any SPD points $P, Q \in \spd{n}$ and tangent vectors $V, W \in T_P\spd{n}$, we denote $\widetilde{V}=\chol_{*,P}(V)$, $\widetilde{W}=\chol_{*,P}(W)$, $L=\chol{P}$, and $K=\chol{Q}$.
The corresponding diagonal matrices with their diagonal elements are denoted by $\widetilde{\bbV}, \widetilde{\bbW}, \bbL$, and $\bbK$, respectively.
For the parallel transport under the BWM, we only present the case where $P, Q$ are commuting matrices, \ie $P=U\Sigma U^\top$ and $Q=U \Delta U^\top$.

The $\orth{n}$-invariant Euclidean metric on $\sym{n}$ \citep{thanwerdas2023n} is
\begin{equation} \label{eq:oim_sym}
    \langle V,W \rangle^{(\alpha, \beta)}=\alpha \langle V,W \rangle + \beta \tr(V)\tr(W), \quad \text{ with } \min (\alpha, \alpha+n \beta)>0.
\end{equation}

\begin{remark}
    \label{app:rmk:incomplete_spd}
    We make the following remarks with respect to the geometries on the SPD manifold.
    \begin{itemize}
        \item
        \textbf{PEM \& EM.}
        When the power equals 1, the associated PEM is reduced to the Euclidean Metric (EM) \citep[Sec. 3.1]{thanwerdas2023n}.
        \item
        \textbf{Incompleteness \& Riemannian exponential maps.}
        As PEM and BWM are incomplete, their Riemannian exponential maps are locally defined. As shown by \citet[Prop. 9]{malago2018wasserstein} and implied by \citet{chen2024rmlr,thanwerdas2023n}, the restricted domains are
        \begin{equation}
            \begin{aligned}
                \text{PEM: }&  P^{\theta}+ P_{\theta*,P} (V) \in \spd{n},\\
                \text{BWM: }&  \calL_{P}[V] + I \in \spd{n}.\\
            \end{aligned}
        \end{equation}
        The above restriction can be addressed numerically, for example by ReEig \citep{huang2017deep}:
        \begin{equation}
            \widetilde{S} = U \max(\epsilon I, \Sigma) U^\top,
        \end{equation}
        where $S\stackrel{\text{Eig}}{:=} U \Sigma U^\top$ is the Eigendecomposition.
    \end{itemize}

\end{remark}

\begin{table}[htbp]
    \centering
    \caption{The Riemannian operators under LEM, AIM, and PEM on the SPD manifold.}
    \label{app:tab:riem_operators_lem_aim_pem}
    \resizebox{0.95\linewidth}{!}{
    \begin{tabular}{cccc}
        \toprule
        Operators & LEM & AIM & PEM \\
        \midrule
        $g_P(V,W)$ & $\langle \log_{*,P} (V), \log_{*,P} (W) \rangle^{\alphabeta}$ & $\langle P^{-1}V, W P^{-1} \rangle^{\alphabeta}$ & $\frac{1}{\theta^2}\langle \pow_{\theta*,P} (V), \pow_{\theta*,P} (W) \rangle^{\alphabeta}$ \\
        \midrule
        $\rielog_P Q$ & $(\log_{*,P})^{-1} \left[ \log(Q) - \log(P) \right]$ & $P^{\frac{1}{2}} \log \left(P^{-\frac{1}{2}} Q P^{-\frac{1}{2}}\right) P^{\frac{1}{2}}$ & $(P_{\theta*,P})^{-1} \left(Q^\theta-P^\theta \right)$ \\
        \midrule
        $\Gamma_{P \rightarrow Q} (V)$ & $(\log_{*,Q})^{-1} \circ \log_{*,P} (V)$ & $(Q P^{-1})^{\frac{1}{2}} V (P^{-1} Q)^{\frac{1}{2}}$ & $(\pow_{\theta *,Q})^{-1} \circ \pow_{\theta*,P} (V)$ \\
        \midrule
        $\rieexp_P(V)$ & $\exp \left( \log(P) + \log_{*,P}(V) \right)$ & $P^{\frac{1}{2}} \exp \left(P^{-\frac{1}{2}} V P^{-\frac{1}{2}}\right) P^{\frac{1}{2}}$ & $\left(P^{\theta}+ P_{\theta*,P} (V) \right)^{\frac{1}{\theta}}$  \\
        \midrule
        Invariance & \makecell{Lie group bi-invariance \\ $\orth{n}$-invariance}  & \makecell{Lie group left-invariance \\ $\GL{n}$-invariance} & $\orth{n}$-invariance \\
        \midrule
        References & \citep{arsigny2005fast,thanwerdas2023n}  & \citep{pennec2006riemannian,thanwerdas2019affine} & \citep{dryden2010power,thanwerdas2023n,chen2024rmlr} \\
        \bottomrule
    \end{tabular}
    }
\end{table}
\begin{table}[htbp]
    \centering
    \caption{The Riemannian operators under BWM and LCM on the SPD manifold.}
    \label{app:tab:riem_operators_bwm_lc}
    \resizebox{0.85\linewidth}{!}{
    \begin{tabular}{ccc}
        \toprule
        Operators & LCM & BWM \\
        \midrule
         $g_P(V,W)$ & $\langle \lfloor \widetilde{V} \rfloor , \lfloor \widetilde{W} \rfloor \rangle +  \langle \widetilde{\bbV}\widetilde{\bbL}^{-1}, \widetilde{\bbW}\widetilde{\bbL}^{-1} \rangle$ & $\frac{1}{2} \langle \calL_{P}[V], W \rangle$ \\
        \midrule
        $\rielog_P Q$ & $(\chol^{-1})_{*, L} \left[ \lfloor K\rfloor-\lfloor L\rfloor + \bbL \dlog (\bbL^{-1} \bbK ) \right]$ & $(P Q )^{\frac{1}{2}}+(Q P)^{\frac{1}{2}} -2 P$ \\
        \midrule
        $\Gamma_{P \rightarrow Q} (V)$ & $(\chol^{-1})_{*, K} \left[\lfloor \widetilde{V} \rfloor+\bbK \bbL^{-1} \widetilde{\bbV} \right]$ & $U\left[\sqrt{\frac{\delta_i+\delta_j}{\sigma_i+\sigma_j}}\left[U^{\top} V U  \right]_{i j}\right] U^{\top}$ \\
        \midrule
        $\rieexp_P(V)$ & $\chol^{-1} \left[ \lfloor L \rfloor + \lfloor \widetilde{V} \rfloor + \bbL \dexp (\bbL^{-1} \widetilde{V} ) \right]$ & $P + V + \calL_{P}[V] P \calL_{P}[V]$ \\
        \midrule
        Invariance & Lie group bi-invariance  & $\orth{n}$-invariance \\
        \midrule
        References & \citep{lin2019riemannian}  & \citep{bhatia2019bures,thanwerdas2023n} \\
        \bottomrule
    \end{tabular}
    }
\end{table}

\subsection{Geometries of the Grassmannian}
\label{app:subsec:geom_grass}

\begin{table}[th]
    \centering
    \caption{Riemannian operators on the Grassmannian.}
    \label{app:tab:riem_operators_gras}
    \resizebox{0.95\linewidth}{!}{
    \begin{tabular}{ccc}
        \toprule
        Operators & $\grasonb{p,n}$ & $\graspp{p,n}$ \\
        \midrule
        $g_P(V,W)$ & $\inner{V}{W}$ & $\frac{1}{2} \inner{V}{W}$ \\
        \midrule
        $\rielog_P Q$ & \makecell{$O \arctan(\Sigma) R^\top$ \\ $(I_n-PP^\top)Q(P^\top Q)^{-1}\stackrel{\mathrm{SVD}}{:=} O\Sigma R^\top$} & $\frac{1}{2} [\log \left(\left(I_n-2 Q\right)\left(I_n-2 P\right)\right),P]$ \\
        \midrule
        $\Gamma_{P \rightarrow Q} (V)$
        & \makecell{ $\left(\left(\begin{array}{cc}
            P R & O
            \end{array}\right)
            \left(\begin{array}{c}
            -\sin (\Sigma) \\
            \cos (\Sigma)
            \end{array}\right)
            O^T+\left(I-O O^T\right) \right) V$  \\ $\rielog_{P}(Q) \stackrel{\mathrm{SVD}}{:=} O \Sigma R^\top$
            }
        & $\exp(\left[\log_{P} (Q), P \right]) V \exp (-[\log_{P} (Q), P])$\\
        \midrule
        $\rieexp_P V$ & \makecell{$\left(\begin{array}{cc}
            P R & O
            \end{array}\right)
            \left(\begin{array}{c}
            \cos (\Sigma) \\
            \sin (\Sigma)
            \end{array}\right) R^\top$ \\ $V \stackrel{\mathrm{SVD}}{:=} O\Sigma R^\top$} & $\exp([V,P])P\exp(-[V,P])$ \\
        \midrule
        References & \citep{edelman1998geometry,bendokat2024grassmann} & \citep{batzies2015geometric,bendokat2024grassmann} \\
        \bottomrule
    \end{tabular}
    }
\end{table}

As the set of linear subspaces, the Grassmannian can naturally be represented by any orthonormal basis, which is called the OrthoNormal Basis (ONB) perspective. Under this perspective, the Grassmannian is the quotient of the Stiefel manifold \citep{bendokat2024grassmann}, denoted by $\grasonb{p,n} \cong \stiefel{p,n} / \orth{p}$. Each point is an equivalence class:
\begin{equation}
    \grasonb{p,n} = \{ [U] \mid [U]:= \{\widetilde{U} \in \stiefel{p, n} \mid \widetilde{U}=U R, R \in \orth{p}\} \}.
\end{equation}
By abuse of notation, we use $[U]$ and $U$ interchangeably for elements of $\grasonb{p,n}$. Each tangent space can be identified as a subspace of a corresponding tangent space on the Stiefel manifold, which is called the horizontal space. Therefore, every tangent vector can be identified with a tangent vector in the horizontal space, called a horizontal lift\footnote{In this paper, the tangent vector under the ONB perspective is always considered as the horizontal lift.}. Under this identification, each tangent vector $V \in T_{P}\grasonb{p,n}$ can be represented as
\begin{equation} \label{app:eq:tangent_vec_grasonb}
    V = P_{\perp} B, \text{ with } B \in \mathbb{R}^{(n-p) \times p},
\end{equation}
where $P_{\perp} \in \stiefel{n-p,n}$ is the orthogonal complement of $P$.

Another perspective is called the Projector Perspective (PP). As shown by \citet{bendokat2024grassmann}, the Grassmannian is an embedded submanifold of $\sym{n}$:
\begin{equation}
    \graspp{p,n} = \{P \in \sym{n} \mid P^2=P, \rank(P)=p \}.
\end{equation}
Therefore, each point can be represented as an $n \times n$ symmetric matrix. Under this perspective, any tangent vector $V \in T_{P}\graspp{p,n}$ at $P \in \graspp{p,n}$ can be represented as
\begin{equation} \label{app:eq:tangent_vec_pp}
    V=Q\left(\begin{array}{cc}
    0 & B^T \\
    B & 0
    \end{array}\right) Q^T, \text{ with } B \in \bbR{(n-p) \times p},
\end{equation}
where $Q \idpp Q^\top = P$.

Supposing $P$ and $Q$ are the points on the Grassmannian $\grasonb{p,n}$ ($\graspp{p,n}$), and $V$ and $W$ are the tangent vectors over $T_{P}\grasonb{p,n}$ ($T_{P}\graspp{p,n}$), \cref{app:tab:riem_operators_gras} summarizes the associated Riemannian operators following the notation in \cref{app:tab:sum_notaitons}.

\begin{remark}
    \label{app:rmk:cutlocus_gras}
    We make the following remarks with respect to the Riemannian operators over the Grassmannian.
    \begin{itemize}
        \item
        \textbf{Cut locus \& logarithm.}
        The Grassmannian Riemannian logarithm does not exist for every pair of $P$ and $Q$. As shown by \citet[Sec. 5]{bendokat2024grassmann}, $\rielog_{P} (Q)$ exists only if $P$ and $Q$ are not in each other's cut locus. However, this can be addressed numerically, for example by \citet[Alg. 5.3]{bendokat2024grassmann} or by using the Moore–Penrose inverse for the inverse in the ONB logarithm \citep{nguyen2022gyro}.
        \item
        \textbf{PP \& ONB logarithm.}
        The matrix logarithm shown in the PP logarithm does not support backpropagation, as it cannot be calculated by the SVD like the SPD matrix. However, the PP logarithm can be calculated via the ONB logarithm \citep[Prop. 3.12]{nguyen2024matrix}. The latter can be backpropagated through the SVD. In this way, the PP logarithm can be integrated into the PyTorch deep learning framework.
    \end{itemize}

\end{remark}

\section{Discussions on the Riemannian FC and convolutional layer}

\subsection{Additional discussions on the orthogonal basis}
\label{app:subsec:add_explanantion_orth_basis}

When the inner product $g_{E}$ on $T_E\calM$ is the standard inner product, we use the familiar orthonormal basis $\{e_i\}_{i=1}^m$. However, when $g_{E}$ is not standard, $\{e_i\}_{i=1}^m$ might not be orthonormal. In this case, we can always find a corresponding basis associated with $\{e_i\}_{i=1}^m$ by a linear isometry. We rewrite the inner product $g_{E}$ as
\begin{equation}
    g_E (V,W) = \langle f(V), f(W) \rangle=f(V)^\top f(W), \forall V,W \in T_E\calM \cong \bbR{m},
\end{equation}
where $f$ is the linear isometry that pulls back the standard inner product $\langle \cdot,\cdot \rangle$ to $g_{E}$. Then, $\{B_i\}_{i=1}^m = \{f^{-1}(e_i)\}_{i=1}^m$ is the standard orthonormal basis over $\{T_E\calM,g_E\}$.

\subsection{Riemannian fully connected layers under isometric geometry}
\label{app:subsec:isometric_fc_layers}

As isometric Riemannian metrics commonly arise in various geometries \citep{thanwerdas2022theoretically,chen2024adaptive,bendokat2024grassmann}, we discuss the construction of Riemannian FC layers under isometries. The following theorem demonstrates that a Riemannian FC layer under isometric metrics can be computed by the following procedure: mapping, applying the Riemannian FC layer, and remapping. This result will be applied in our concrete examples of the SPD and Grassmannian FC layers.

We denote the FC transformation by $Y = \calF \left( X; \bfA, \bfP \right)$, with $\bfP= \left\{ P_i \in \calN \right\}_{i=1}^m$ and $\bfA=\left\{A_i \in T_{P_i}\calN \right\}_{i=1}^m$ as the FC parameters.

\begin{theorem}[Isometric FC Layers]
    \label{app:thm:fc_isometries}
    Given $n$-dimensional Riemannian manifolds $\left\{\widetilde{\calN}, g^{\widetilde{\calN}}\right\}$ and $\left\{\calN,g^{\calN} \right\}$ with a Riemannian isometry $\phi^\calN: \widetilde{\calN} \rightarrow \calN$,  and $m$-dimensional Riemannian manifolds $\left\{\widetilde{\calM}, g^{\widetilde{\calM}}\right\}$ and $\left\{\calM,g^{\calM} \right\}$ with $\phi^{\calM}: \widetilde{\calM} \rightarrow \calM$ as a Riemannian isometry mapping the origin $E^{\widetilde{\calM}} \in \widetilde{\calM}$ to the origin $E \in \calM$, the Riemannian FC layer $\widetilde{\calF}: \widetilde{\calN} \rightarrow \widetilde{\calM}$ can be calculated by $\calF: \calN \rightarrow \calM$:
    \begin{equation}
        \widetilde{\calF} \left( \widetilde{X}; \widetilde{\bfP}, \widetilde{\bfA} \right) = \left( \phi^{\calM} \right)^{-1} \left(\calF \left( \phi^\calN (\widetilde{X}); \bfP, \bfA \right) \right),
    \end{equation}
    where $\widetilde{\bfP}= \left\{ \widetilde{P}_i \in \widetilde{\calN} \right\}_{i=1}^m$ and $\widetilde{\bfA}=\left\{\widetilde{A}_i \in T_{\widetilde{P}_i}\widetilde{\calN} \right\}_{i=1}^m$ are the FC parameters of $\widetilde{\calF}$, while $\bfP=\left\{ \phi^\calN(\widetilde{P}_i)\right\}_{i=1}^m$ and $\bfA=\left\{\phi^\calN _{*,\widetilde{P}_i}(\widetilde{A}_i)\right\}_{i=1}^m$ are the FC parameters of $\calF$.
\end{theorem}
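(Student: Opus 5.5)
The plan is to start from the explicit formula of \cref{thm:riem_fc} and push every ingredient through the isometries. \cref{thm:riem_fc} gives
\begin{equation*}
\widetilde{\calF}(\widetilde X)=\rieexp^{\widetilde\calM}_{E^{\widetilde\calM}}\Big(\sum_{i=1}^m \langle \rielog^{\widetilde\calN}_{\widetilde P_i}(\widetilde X),\widetilde A_i\rangle^{\widetilde\calN}_{\widetilde P_i}\,\widetilde B_i\Big),
\end{equation*}
where $\{\widetilde B_i\}$ is an orthonormal basis of $T_{E^{\widetilde\calM}}\widetilde\calM$. For $\calF$ I choose the basis $B_i=\phi^\calM_{*,E^{\widetilde\calM}}(\widetilde B_i)$. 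Since $\phi^\calM$ is an isometry sending $E^{\widetilde\calM}$ to $E$, its differential at $E^{\widetilde\calM}$ is a linear isometry onto $T_E\calM$. Hence $\{B_i\}$ is orthonormal, and the basis choices on the two sides correspond.

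The first step is to match the scalar coefficients. Riemannian isometries commute with the logarithm: $\phi_{*,P}(\rielog_P Q)=\rielog_{\phi(P)}\phi(Q)$. This holds because isometries map geodesics to geodesics, so they intertwine $\rieexp$, and then one inverts on the domain where $\rielog$ is defined. Using this and the fact that $\phi^\calN_{*,\widetilde P_i}$ preserves inner products,
\begin{equation*}
\langle \rielog^{\widetilde\calN}_{\widetilde P_i}\widetilde X,\widetilde A_i\rangle_{\widetilde P_i}
=\langle \phi^\calN_{*}\rielog^{\widetilde\calN}_{\widetilde P_i}\widetilde X,\phi^\calN_{*}\widetilde A_i\rangle_{P_i}
=\langle \rielog^{\calN}_{P_i}\phi^\calN(\widetilde X),A_i\rangle_{P_i}=:v_i .
\end{equation*}
The $v_i$ are therefore identical for the two layers.

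The second step transfers the exponential map. Naturality of $\rieexp$ under isometries gives
\begin{equation*}
\phi^\calM\Big(\rieexp_{E^{\widetilde\calM}}\big(\textstyle\sum_i v_i\widetilde B_i\big)\Big)=\rieexp_E\Big(\phi^\calM_{*}\textstyle\sum_i v_i\widetilde B_i\Big)=\rieexp_E\Big(\textstyle\sum_i v_iB_i\Big),
\end{equation*}
using linearity of the differential. The right-hand side is $\calF(\phi^\calN(\widetilde X);\bfP,\bfA)$. Applying $(\phi^\calM)^{-1}$ to both sides gives the claim.

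I expect the main obstacle to be making the argument independent of the basis choice. \cref{def:riem_fc} fixes $\{B_k\}$ only as some orthonormal basis. I will state explicitly that the bases are related by $\phi^\calM_{*}$, which is the natural reading, and note that a different choice changes the output only by a linear isometry of the tangent space at the origin.

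A secondary care point is the domain of $\rielog$ and $\rieexp$ in the incomplete or cut-locus cases. Because $\phi$ is a global diffeomorphism preserving geodesics, it maps the domains onto each other, so the intertwining identities hold wherever either side is defined.
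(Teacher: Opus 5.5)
Your proposal is correct and follows essentially the same route as the paper: relate the orthonormal bases through $\phi^\calM_{*,E^{\widetilde\calM}}$, show the coefficients $\langle \rielog_{P_i}(X), A_i\rangle_{P_i}$ coincide because $\phi^\calN$ is an isometry, and transfer the exponential map at the origin using the isometry $\phi^\calM$ and the linearity of its differential. The only cosmetic difference is the direction of the basis step: the paper starts from the ``standard'' basis $B_i=f^{-1}(E_i)$ on $\calM$ and derives $\widetilde B_i=(\phi^\calM_{*,\widetilde E})^{-1}(B_i)$, while you define $B_i$ as the pushforward of $\widetilde B_i$; this is the same correspondence.
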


\begin{proof}
    First, we show the correspondence between the standard orthonormal bases $\{\widetilde{B}_i \in \widetilde{\calM}\}$ and $\{B_i \in \calM\}$. The set $\{\widetilde{B}_i \in \widetilde{\calM}\}$ is orthonormal iff $\{B_i \in \calM\}$ is orthonormal. We only need to show standardness. The Riemannian metric $g^{\widetilde{\calM}}$ satisfies
    \begin{equation}
        \begin{aligned}
            g^{\widetilde{\calM}}_{\widetilde{E}}(V,W)
            &\stackrel{(1)}{=} g^\calM _{E} \left(\phi^\calM _{*,\widetilde{E}} (V),\phi^\calM _{*,\widetilde{E}} (V) \right)\\
            &= \left\langle f \circ \phi^\calM _{*,\widetilde{E}} (V),f \circ \phi^\calM _{*,\widetilde{E}} (V) \right\rangle,
        \end{aligned}
    \end{equation}
    where $f$ is the linear isomorphism that pulls back the standard Frobenius inner product to $g^{\calM}_{E}$. Here, (1) comes from the isometry. Therefore, for each $i$, we have
    \begin{equation}
        \begin{aligned}
            \widetilde{B}_i
            &= (f \circ \phi^\calM_{*,\widetilde{E}})^{-1} (E_i)\\
            &\stackrel{(1)}{=} \left( \phi^\calM_{*,\widetilde{E}} \right)^{-1} (B_i),
        \end{aligned}
    \end{equation}
    where (1) comes from $B_i = f^{-1}(E_i), \forall i=1,\cdots,n$.

    We now demonstrate the correspondence between the FC layers as follows:
    \begin{equation}
        \begin{aligned}
            Y &= \rieexp^{\widetilde{\calM}}_{\widetilde{E}} \left( \sum_{i=1}^{m} \left( \langle \rielog^{\widetilde{\calN}}_{ \widetilde{P}_i}(\widetilde{X}), \widetilde{A}_i \rangle_{\widetilde{P}_i}^{\widetilde{\calN}} {\widetilde{B}_i} \right) \right)\\
            &\stackrel{(1)}{=} \left( \phi^{\calM} \right)^{-1} \left( \rieexp^{\calM} _{E} \left( \phi^{\calM} _{*,\widetilde{E}}\left[ \sum_{i=1}^{m} \left( \langle \rielog^{\calN}_{P_i}(X), A_i \rangle^{\calN}_{P_i} \widetilde{B}_i \right) \right] \right) \right)\\
            &\stackrel{(2)}{=} \left( \phi^{\calM} \right)^{-1} \left( \rieexp^{\calM}_{E} \left( \sum_{i=1}^{m} \left( \langle \rielog^{\calN} _{P_i}(X), A_i \rangle^{\calN} _{P_i} B_i \right) \right) \right),
        \end{aligned}
    \end{equation}
    where $B_i =\phi^{\calM} _{*,\widetilde{E}} (\widetilde{B}_i)$, $A_i= \phi^{\calN} _{*,\widetilde{P}_i} (\widetilde{A}_i)$, $X=\phi^{\calN}(\widetilde{X})$, and $P_i=\phi^{\calN}(\widetilde{P}_i)$.
    The above derivation comes from the following.
    \begin{enumerate}[label=(\arabic*)]
        \item
        The isometry of $\phi^{\calM}$ and $\phi^{\calN}$;
        \item
        The linearity of $\phi^{\calM}_{*,\widetilde{E}}$.
    \end{enumerate}
\end{proof}

\subsection{Riemannian fully connected layers under product geometry}
\label{app:subsec:riem_fc_product}

Now, we discuss \cref{thm:riem_fc} under product geometry.

\begin{theorem}
    Following the notation in \cref{thm:riem_fc}, the Riemannian FC layer $\calF(\cdot): (\calN)^c \rightarrow \calM$ for the input $(X_1 \in \calN, \cdots ,X_c \in \calN) = X \in (\calN)^c$ is
    \begin{equation}
        Y = \rieexp^{\calM}_E \left( \sum_{i=1}^{m} \sum_{j=1}^{c} \langle \rielog^{\calN}_{P_{ij}}(X), A_{ij} \rangle^{\calN}_{P_{ij}} B_i \right),
    \end{equation}
    where $P_{ij} \in \calN$ and $A_{ij} \in T_{P_{ij}}\calN$ are the FC parameters.
\end{theorem}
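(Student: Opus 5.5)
The plan is to apply \cref{thm:riem_fc} with $\calN$ replaced by the product manifold $(\calN)^c=\Pi_{j=1}^c\calN$ equipped with the product metric. Then everything reduces to recalling how the Riemannian logarithm, the metric and the tangent spaces decompose on a product. For a point $\bfP_i=(P_{i1},\ldots,P_{ic})\in(\calN)^c$ we use the identification $T_{\bfP_i}(\calN)^c\cong\bigoplus_{j=1}^c T_{P_{ij}}\calN$. Under this identification a tangent parameter is a tuple $\bfA_i=(A_{i1},\ldots,A_{ic})$ with $A_{ij}\in T_{P_{ij}}\calN$. These tuples are exactly the FC parameters appearing in the statement, and the $m$ output equations of \cref{def:riem_fc} are indexed by $i$.

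The first step is to record the product-geometry facts. The product metric is the sum of the factor metrics:
\[
\langle (V_j)_j,(W_j)_j\rangle^{(\calN)^c}_{\bfP}=\sum_{j=1}^c\langle V_j,W_j\rangle^{\calN}_{P_j}.
\]
Geodesics of a Riemannian product are products of geodesics in each factor. Hence $\rieexp^{(\calN)^c}_{\bfP}$ acts componentwise. Since the logarithm is its local inverse, it also acts componentwise:
\[
\rielog^{(\calN)^c}_{\bfP}(X)=\left(\rielog^{\calN}_{P_1}(X_1),\ldots,\rielog^{\calN}_{P_c}(X_c)\right),
\]
wherever each factor logarithm is defined, in line with the well-definedness convention adopted in the preliminaries.

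The second step is to substitute these identities into the explicit formula of \cref{thm:riem_fc}. The $i$-th coefficient becomes
\[
\langle\rielog^{(\calN)^c}_{\bfP_i}(X),\bfA_i\rangle_{\bfP_i}=\sum_{j=1}^c\langle\rielog^{\calN}_{P_{ij}}(X_j),A_{ij}\rangle^{\calN}_{P_{ij}}.
\]
Plugging this into $\rieexp^{\calM}_E\left(\sum_i(\cdot)B_i\right)$ gives the claimed double sum. The output manifold $\calM$, its origin $E$ and the basis $\{B_i\}$ are untouched, so nothing else changes. In the statement, $\rielog_{P_{ij}}(X)$ should be read as acting on the $j$-th component $X_j$.

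The only step needing care is the justification that the exponential and logarithmic maps of the product split componentwise. I would handle it by noting that the Levi-Civita connection of a product metric is the direct sum of the factor connections. Consequently a curve is a geodesic if and only if each component is a geodesic, which gives the componentwise exponential map. Inverting it factorwise gives the componentwise logarithm.

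Finally, I would remark that because \cref{thm:riem_fc} already handles the implicit equations of \cref{def:riem_fc} via the pseudo-distance, no new solvability argument is needed. The result is purely a specialization of that theorem.
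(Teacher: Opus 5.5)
Your proposal is correct and follows the paper's own proof: you write $P_i$ and $A_i$ as tuples under the product geometry, split the product metric and logarithm componentwise, and substitute into \cref{thm:riem_fc}. Your extra justification via the product Levi-Civita connection, and your reading of $\rielog^{\calN}_{P_{ij}}(X)$ as acting on the component $X_j$, fill in details that the paper's short proof leaves implicit.
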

\begin{proof}
    By product geometry, we have
    \begin{align}
        (\calN)^c \ni P_i  &= (P_{i1} \in \calN,\cdots,P_{ic} \in \calN),\\
        T_{P_i} (\calN)^c \ni A_i  &= (A_{i1} \in T_{P_{i1}} \calN, \cdots, A_{ic} \in T_{P_{ic}} \calN).
    \end{align}
    The above implies that
    \begin{equation}
        \langle \rielog^{(\calN)^c}_{P_i}(X), A_i \rangle^{(\calN)^c}_{P_i}
        = \sum_{j=1}^{c} \langle \rielog^{\calN}_{P_{ij}}(X), A_{ij} \rangle^{\calN}_{P_{ij}}.
    \end{equation}
\end{proof}

\subsection{Riemannian fully connected layers and manifold embedding}
\label{app:subsec:manifold_embedding}

In several applications \citep{chami2019hyperbolic,lopez2021vector,zhao2023modeling,nguyen2024matrix}, embedding Euclidean features into non-Euclidean manifolds often yields superior results. A common approach can be expressed as $\rieexp_{E}(Ax+b)$, which maps Euclidean features to the tangent space at the origin via a linear layer, followed by applying the exponential map at the origin. This method has been adopted in various embeddings, including hyperbolic \citep{chami2019hyperbolic,fu2024hyperbolic}, SPD \citep{zhao2023modeling}, and Grassmannian spaces \citep[Sec. 3.4.2]{nguyen2024matrix}. Our framework offers a novel intrinsic interpretation, showing that this operation respects the Riemannian FC layer between the Euclidean space and the target manifold.

\begin{proposition}
    \label{prop:r2manifold_fc}
    The Riemannian FC layer from a standard Euclidean space $\bbR{n}$ to an $m$-dimensional target manifold $\calM$, namely $\calF(\cdot): \bbR{n} \rightarrow \calM$, is given by
    \begin{equation}
        \calF(x) = \rieexp_{E}(Ax+b),
    \end{equation}
    where $A \in \bbR{n \times m}$ and $b \in \bbR{m}$ are the transformation matrix and bias vector, respectively.
\end{proposition}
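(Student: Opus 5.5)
We specialize \cref{thm:riem_fc} to $\calN=\bbR{n}$ with its standard Euclidean metric and read off the affine form. On $\bbR{n}$ we have $\rielog_{p_i}(x)=x-p_i$, every tangent space is identified with $\bbR{n}$, and $\langle\cdot,\cdot\rangle_{p_i}$ is the standard inner product. \cref{thm:riem_fc} therefore gives
\begin{equation*}
    \calF(x)=\rieexp^{\calM}_E\left(\sum_{i=1}^m \langle x-p_i, a_i\rangle B_i\right)
\end{equation*}
with parameters $p_i,a_i\in\bbR{n}$. Each coefficient expands as $\langle x-p_i,a_i\rangle=a_i^\top x - b_i'$ with $b_i'=\langle p_i,a_i\rangle$.

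Next, we identify $T_E\calM$ with $\bbR{m}$ through the orthonormal basis $\{B_i\}$, i.e.\ through the linear isometry $f$ of \cref{app:subsec:add_explanantion_orth_basis} with $B_i=f^{-1}(e_i)$. In these coordinates the argument of $\rieexp_E$ is the vector $\tilde A x+\tilde b$. Here $\tilde A$ is the matrix whose $i$-th row is $a_i^\top$ (the statement writes $A$ with the shape convention transposed), and $\tilde b_i=-\langle p_i,a_i\rangle$. Thus $\calF(x)=\rieexp_E(Ax+b)$, where $\rieexp_E$ is understood as acting on $T_E\calM\cong\bbR{m}$ via this identification.

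To make the statement an equivalence of parameter families, we check surjectivity of the parameterization. Given any $A,b$, take $a_i$ to be the $i$-th row of $A$. If $a_i\neq 0$, set $p_i=-b_i a_i/\|a_i\|^2$, so that $-\langle p_i,a_i\rangle=b_i$. This is exactly the over-parameterization remark made after \cref{eq:euc_fc}. Hence every affine map is realized, and every realized map is affine.

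The main obstacle is the degenerate case $a_i=0$ with $b_i\neq 0$: no choice of $p_i$ produces a nonzero bias from a zero row. One option is to treat this as a measure-zero boundary case, realized as a limit of nondegenerate parameters. Alternatively, one can note that the statement is meant for generic parameters, or that the trivialization of \cref{subsec:fc_parameters} with $Z_i\neq0$ already excludes it. Apart from this, the argument is a direct substitution into \cref{thm:riem_fc} together with the basis identification.
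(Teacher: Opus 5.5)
Your proposal is correct and follows the paper's proof: substitute $\rielog_{p_i}(x)=x-p_i$ and the standard inner product into \cref{thm:riem_fc}, write $B_i=f^{-1}(e_i)$ for the linear isometry $f$, and read off an affine map of $x$. The differences are cosmetic. Instead of reinterpreting $\rieexp_E$ through the orthonormal-coordinate identification, the paper pulls $f^{-1}$ out by linearity and absorbs its matrix into the parameters: its $A=\tilde{A}\bar{A}$ and $b=\tilde{A}\bar{b}$, where $\tilde{A}$ represents $f^{-1}$ and $\bar{A},\bar{b}$ are your row matrix and bias, so the formula holds in whatever fixed coordinates $T_E\calM$ carries; the paper also does not attempt your extra surjectivity argument, which the statement does not require.
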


\begin{proof}
    By \cref{thm:riem_fc}, we have the following
        \begin{equation}
            \begin{aligned}
                Y
                &\stackrel{(1)}{=} \rieexp^{\calM}_E \left( \sum_{i=1}^{m} \left( \langle \rielog^{\mathrm{Euc}}_{p_i}(x), a_i \rangle^{\mathrm{Euc}}_{p_i} B _i \right) \right), \\
                &\stackrel{(2)}{=} \rieexp^{\calM}_E \left( \sum_{i=1}^{m} \left( \langle x - p_i, a_i \rangle B _i \right) \right), \\
                &\stackrel{(3)}{=} \rieexp^{\calM}_E \left( \sum_{i=1}^{m} \left( \langle x - p_i, a_i \rangle f^{-1} (e _i) \right)
                \right), \\
                &\stackrel{(4)}{=} \rieexp^{\calM}_E \left( f^{-1}
                \left( \sum_{i=1}^{m}  \langle x - p_i, a_i \rangle e _i \right)
                \right), \\
                &\stackrel{(5)}{=} \rieexp^{\calM}_E \left( f^{-1}
                \left( \bar{A}x + \bar{b} \right)
                \right), \\
                &\stackrel{(6)}{=} \rieexp^{\calM}_E \left( Ax+b
                \right). \\
            \end{aligned}
    \end{equation}
    The above follows from the following.
    \begin{enumerate}[label=(\arabic*)]
        \item
        $p_i, a_{i} \in \bbR{n}$, and $\{B_i\}$ is an orthonormal basis over $\{T_E\calM, g_E \}$;
        \item
        The Euclidean logarithm and metric become the familiar vector operation:
        \begin{equation*}
            \begin{aligned}
                \rielog^{\mathrm{Euc}}_{p_i}(x)
                &=x-p_i\\
                \inner{v}{w}^{\mathrm{Euc}}_{p}
                &= \inner{v}{w}, \forall p \in \bbR{n}, \forall v,w \in T_p \bbR{n};
            \end{aligned}
        \end{equation*}
        \item
        $f$ is the linear isomorphism pulling the standard inner product back to $g_E$; $\{e_i\}$ is the standard orthonormal basis over the standard inner product;
        \item
        Linearity of $f^{-1}$;
        \item
        $\sum_{i=1}^{m}  \langle x - p_i, a_i \rangle e _i$ has the form of an affine transformation;
        \item
        As $f^{-1}$ has matrix representation, $f^{-1}(x)= \tilde{A}x$, we have
        \begin{equation}
            \begin{aligned}
                f^{-1} \left( \bar{A}x + \bar{b} \right)
                &= \tilde{A}\left( \bar{A}x + \bar{b} \right)\\
                &= \tilde{A}\bar{A}x+ \tilde{A}\bar{b}.
            \end{aligned}
        \end{equation}
        Setting $A=\tilde{A}\bar{A}$ and $b=\tilde{A}\bar{b}$, one can obtain the result.
    \end{enumerate}
\end{proof}

\subsection{Relation with the convolution in ManifoldNet}
\citet{chakraborty2020manifoldnet} also proposed a convolution operation for manifolds. However, since its formulation is based on the weighted Fréchet mean, it is unable to alter the manifold dimension, for example through dimensionality reduction. In contrast, our framework allows modifications in both the channel and manifold dimensions, providing greater flexibility.

\section{Comparison of our hyperbolic FC layers against previous ones}
\label{app:sec:comp_hyperbolic_linear}

\cref{app:tab:comp_hyperbolic_linear} extends \cref{tab:comp_hyperbolic_linear} by comparing our hyperbolic FC layers against previous hyperbolic linear layers.

\begin{table}[htbp]
\centering
\caption{Comparison of hyperbolic linear layers. Here, we consider the transformation from an $n$-dimensional hyperbolic space to an $m$-dimensional one.}
\label{app:tab:comp_hyperbolic_linear}
\resizebox{\linewidth}{!}{
\begin{tabular}{cccccc}
\toprule
Method & Model & Mechanism & Formulation & Parameters & References \\
\midrule
Möbius & $\pball{n}$ & Tangent & $\rieexp _\bbzero (M \rielog _\bbzero (x))$ & $M \in \bbR{m \times n}$ & \citep[Def. 3.2]{ganea2018hyperbolic} \\
\midrule
Klein & $\klein{n}$  & Tangent & $\rieexp _\bbzero (M \rielog _\bbzero (x))$ & $M \in \bbR{m \times n}$ & \citep[Thm. 9]{mao2024klein} \\
\midrule
LFC & $\bbh{n}$ & Spacetime &
$\displaystyle \left[\begin{array}{c}
\frac{ \sqrt{ \| W x \|^2 - 1/K } }{ v^{\top} x }  v^{\top} \\
W
\end{array}\right] x$ & \makecell{$M \in \bbR{m \times (n+1)}$ \\ $v \in \bbR{n+1}$}  & \citep[Sec 3.1]{chen2022fully} \\
\midrule
NestFC & $\bbh{n}$ & Nested projection & $\displaystyle y=\frac{Wx}{\|Wx\|_L},\quad WJ_nW^\top=J_m$ & \makecell{$Q\in\mathrm{SO}(n)$ \\ $\widetilde P\in\mathrm{St}(m,n)$ \\ $\alpha\in\bbRscalar$} & \citep[Eq. (14) and Sec. 3.3]{fan2022nested} \\
\midrule
Poincaré FC & $\pball{n}$ & Poincaré & \makecell{$w\left(1+\sqrt{1-K\|w\|^2}\right)^{-1}$ \\ $w = \left((-K)^{-\frac{1}{2}} \sinh \left(\sqrt{-K}  v_k(x)\right)\right)_{k=1}^m$ \\
$v_k$ is defined by \citet[Eq. (6)]{shimizu2021hyperbolic}} & \makecell{$\{z_i \in \bbR{n}\}_{i=1}^m$ \\ $\{\gamma _i \in \bbRscalar \}_{i=1}^m$} &\citep[Sec. 3.2]{shimizu2021hyperbolic} \\
\midrule
\rowcolor{HilightColor} Ours & $\pball{n}, \klein{n},\bbh{n}$ & Riemannian  & \cref{app:thm:pball_fc,app:thm:hyperboloid_fc} & \makecell{$\{z_i \in \bbR{n}\}_{i=1}^m$ \\ $\{\gamma _i \in \bbRscalar \}_{i=1}^m$} & \cref{app:thm:pball_fc,app:thm:hyperboloid_fc} \\
\bottomrule
\end{tabular}
}
\end{table}

\section{Additional details on the SPD fully connected layers}

\subsection{Relation with the gyro SPD fully connected layers}
\label{app:sec:relation_gyro_spd_fc}

This subsection demonstrates that our SPD FC layers subsume three gyro SPD FC layers under LEM, AIM, and LCM. This follows directly from \cref{prop:riem_fc_gen_euc_fc}, as one can readily verify that the point-to-hyperplane pseudo-distance we used is identical to the corresponding pseudo-gyrodistances under these three metrics. To clarify this relationship more clearly, we compare the final expressions.

We first review some related SPD gyro structures \citep{nguyen2023building}. Given $P$, $Q$ in $\{\spd{n}, g\}$ with $g$ being AIM, LEM, or LCM, and $t \in \bbRscalar$, the gyro structures induced by $g$ are defined as follows:
\begin{align}
    \label{eq:gyro_addtion}
    \text{Gyro addition: } P \oplus Q &= \rieexp_{P}\left(\pt{I}{P} \left(\rielog _{I}(Q)\right)\right), \\
    \label{eq:gyro_scalar_product}
    \text{Scalar gyromultiplication: } t \otimes P &= \rieexp_{I}\left(t \rielog _{I}(P)\right), \\
    \label{eq:gyro_inverse}
    \text{Gyro inverse: } \ominus P &= -1 \otimes P = \rieexp_{I}\left(- \rielog _{I}(P)\right), \\
    \label{eq:gyro_inner_product}
    \text{Gyro inner product: } \gyrinner{P}{Q}&=\left\langle \rielog _I (P), \rielog _I (Q)\right\rangle_{I},
\end{align}
where $\rielog _I$ and $\langle \cdot,\cdot \rangle_I$ are the Riemannian logarithm and metric at the identity matrix $I$. As shown by \citet{nguyen2022gyro}, the gyro addition and scalar product under AIM, LEM, and LCM form gyrovector spaces.

Based on these gyro structures, \citet{nguyen2024matrix} introduced the gyro SPD FC layers under AIM, LEM, and LCM, respectively. We review their results in the following.
\begin{theorem} [Gyro SPD FC Layers \citep{nguyen2024matrix}]
    \label{app:thm:gyro_spd_fc_layer}
    The gyro SPD FC layers under standard LEM, AIM, and LCM are
    \begin{align}
        \text{LEM}:
        &  Y = \exp \left( V^\LE \right),
        V^{\LE}_{ij} =
        \begin{cases}
        v^{\LE}_{ii}(S) , & \text { if } i=j \\
        \frac{1}{\sqrt{2}}v^{\LE}_{ij}(S), & \text { if } i>j \\
        V^{\LE}_{ji}, & \text{ otherwise }
        \end{cases} \\
        \text{AIM}:
        &  Y = \exp \left( V^\AI \right),
        V^{\AI}_{ij} =
        \begin{cases}
        v^{\AI}_{ii}(S) + \eta \sum_{k=1}^m v^{\AI}_{kk}(S) , & \text { if } i=j \\
        \frac{1}{\sqrt{2}}v^{\AI}_{ij}(S), & \text { if } i>j \\
        V^{\AI}_{ji}, & \text{ otherwise }
        \end{cases} \\
        \text{LCM}:
        & Y = V^{\LC}(V^{\LC})^\top,
        V^{\LC}_{ij} =
        \begin{cases}
        \exp\left(v^{\LC}_{ii}(S)\right) , & \text { if } i=j \\
        v^{\LC}_{ij}(S), & \text { if } i>j\\
        0, & \text{ otherwise }
        \end{cases}
    \end{align}
    where $\eta=\frac{1}{n}\left(\frac{1}{\sqrt{1+n \beta}}-1\right)$,  and $v_{ij}^g = \gyrinner{\ominus P_{ij} \oplus S}{W_{ij}}$ with $g$ as LEM, AIM, or LCM. Here, $P_{ij}, W_{ij} \in \spd{n}, \forall i \geq j, i,j = 1, \cdots, m$.
\end{theorem}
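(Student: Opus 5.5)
The plan is to obtain \cref{app:thm:gyro_spd_fc_layer} as a special case of \cref{thm:riem_fc}, in line with \cref{prop:riem_fc_gen_euc_fc}. I would show that each gyro quantity $v^g_{ij}(S)=\gyrinner{\ominus P_{ij}\oplus S}{W_{ij}}$ equals a Riemannian quantity $\langle \rielog_{P_{ij}}(S), A_{ij}\rangle_{P_{ij}}$ for a suitable tangent parameter $A_{ij}$. Then the explicit form of $\rieexp_I$ at the origin $I$ on $\spd{m}$ gives the three displayed formulas.

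\textbf{Step 1 (gyro-to-Riemannian identity).} For $g\in\{\mathrm{LEM},\mathrm{AIM},\mathrm{LCM}\}$ I would prove
\begin{equation*}
\rielog_I(\ominus P\oplus S)=\pt{P}{I}\bigl(\rielog_P(S)\bigr).
\end{equation*}
\begin{itemize}
\item \emph{AIM.} By \cref{eq:gyro_addtion,eq:gyro_inverse} and \cref{app:tab:riem_operators_lem_aim_pem}, $\ominus P\oplus S=P^{-\frac12}SP^{-\frac12}$. Its logarithm at $I$ is $\log(P^{-\frac12}SP^{-\frac12})$. This equals $P^{-\frac12}\rielog_P(S)P^{-\frac12}$, which is exactly $\pt{P}{I}$ applied to $\rielog_P(S)$.
\item \emph{LEM and LCM.} Both metrics are bi-invariant Lie group metrics, so parallel transport coincides with the differential of group translation. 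The gyro operations reduce to the group operations: $\log S-\log P$ under LEM, and the corresponding Cholesky expression under LCM. The identity then follows by direct substitution of the formulas in \cref{app:tab:riem_operators_bwm_lc}.
\end{itemize}

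\textbf{Step 2 (defining the tangent parameters).} Since parallel transport is an isometry,
\begin{equation*}
v^g_{ij}(S)=\langle \pt{P}{I}\rielog_{P_{ij}}S,\rielog_I W_{ij}\rangle_I=\langle \rielog_{P_{ij}}S,A_{ij}\rangle_{P_{ij}},
\end{equation*}
with $A_{ij}=\pt{I}{P_{ij}}(\rielog_I W_{ij})$. This is precisely the right-hand side of \cref{eq:riem_fc_original} under the pseudo-distance adopted before \cref{thm:riem_fc}. Hence \cref{thm:riem_fc} gives
\begin{equation*}
Y=\rieexp_I\Bigl(\sum_{i\ge j}v_{ij}B_{ij}\Bigr),
\end{equation*}
where $\{B_{ij}\}$ is an orthonormal basis of $T_I\spd{m}$.

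\textbf{Step 3 (choosing the basis and inverting the metric).} I would take $B_{ij}$ to be $f^{-1}$ of the standard orthonormal basis of $\sym{m}$, with $f$ as in \cref{app:subsec:add_explanantion_orth_basis}. The standard basis consists of $E_{ii}$ and $(E_{ij}+E_{ji})/\sqrt2$.
\begin{itemize}
\item \emph{LEM.} At $I$ the metric is the Frobenius inner product, so $V=\sum v_{ij}B_{ij}$ has the $\frac1{\sqrt2}$ off-diagonal scaling. Then $\rieexp_I=\exp$ gives the LEM formula.
\item \emph{LCM.} $\rieexp_I$ applied to a lower-triangular coordinate matrix gives $\chol^{-1}$ of the strictly lower part plus the exponentiated diagonal. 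This yields $Y=V^{\LC}(V^{\LC})^\top$.
\item \emph{AIM.} Here $\langle\cdot,\cdot\rangle_I=\langle\cdot,\cdot\rangle^{(1,\beta)}$, so $f(V)=V+c\,\tr(V)I$ for a constant $c$ chosen so that $f$ pulls back the Frobenius product. Inverting gives $f^{-1}(U)=U+\eta\,\tr(U)I$ with $\eta=\frac1m\bigl(\frac1{\sqrt{1+m\beta}}-1\bigr)$. This produces the $\eta\sum_k v_{kk}$ diagonal correction, and $\rieexp_I=\exp$ finishes the case.
\end{itemize}

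\textbf{Main obstacle.} I expect the hardest part to be the AIM bookkeeping in Step 3. One must solve $(1+c\,m)^2=1+m\beta$, i.e.\ verify $\|V+c\,\tr(V)I\|^2=\|V\|^2+\beta\tr(V)^2$, and then confirm that the inverse constant matches $\eta$. In particular, the dimension appearing in $\eta$ should be that of the output space, $m$, which I would reconcile with the stated $n$. The Step 1 identities are routine substitutions.
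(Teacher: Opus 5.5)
Your proposal is correct and follows essentially the paper's route. You use the identity $v^g_{ij}=\langle\rielog_{P_{ij}}(S),\pt{I}{P_{ij}}(\rielog_I W_{ij})\rangle_{P_{ij}}$ from \cref{prop:riem_fc_gen_euc_fc}, then apply \cref{thm:riem_fc} with the orthonormal bases and $\rieexp_I$ computed in the proof of \cref{thm:spd_fc}. The one difference is that the paper's general form of Step~1, \cref{app:lem:inner_equivalence}, gets it for all three metrics at once from the gyrogroup left cancellation law, rather than by per-metric substitution.

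Your remark that $\eta$ must use the output dimension $m$ is right; the same $n$ appears in $\mu$ in \cref{thm:spd_fc}. For LEM and LCM, parallel transport is the differential of translation because these groups are abelian (hence flat), not because of bi-invariance alone.
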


\begin{proposition}
    Our LEM ($\alphabeta=(1,0)$), AIM ($\alphabeta=(1,\beta)$), and LCM SPD FC layers incorporate the LEM, AIM, and LCM gyro SPD FC layers, respectively.
\end{proposition}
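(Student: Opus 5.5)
The plan is to show that, under the stated choices of $\alphabeta$, each gyro SPD FC layer of \cref{app:thm:gyro_spd_fc_layer} is our SPD FC layer of \cref{thm:spd_fc} for a suitable choice of our parameters $(P_{ij},Z_{ij})$. Two things must match: the scalar responses $v_{ij}(S)$, and the way these scalars are assembled into an output matrix, namely the orthonormal basis $\{B_k\}$ of $T_I\spd{m}$ followed by $\rieexp_I$.

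\textbf{Step 1: scalar responses.} I would rewrite the gyro quantity using \cref{eq:gyro_addtion,eq:gyro_inverse,eq:gyro_inner_product}:
\[
v^g_{ij}=\gyrinner{\ominus P_{ij}\oplus S}{W_{ij}}=\inner{\rielog_I(\ominus P_{ij}\oplus S)}{\rielog_I(W_{ij})}_I .
\]
For AIM, LEM and LCM the metric is left-invariant (LEM and LCM are bi-invariant Lie group metrics, AIM is $\GL{n}$-invariant). The gyro translation $X\mapsto \ominus P\oplus X$ is then an isometry sending $P$ to $I$, and its differential at $P$ is parallel transport $\pt{P}{I}$; this is the content of the gyro–Riemannian identities of \citet{nguyen2023building}. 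Hence
\[
\rielog_I(\ominus P\oplus S)=\pt{P}{I}\rielog_P(S),
\]
and
\[
v^g_{ij}=\inner{\rielog_{P_{ij}}(S)}{A_{ij}}_{P_{ij}},\qquad A_{ij}=\pt{I}{P_{ij}}\rielog_I(W_{ij}),
\]
since parallel transport preserves the metric. This is exactly the right-hand side of \cref{def:riem_fc}, with $Z_{ij}=\rielog_I(W_{ij})\in T_I\spd{n}$ and $A_{ij}$ obtained by the trivialization of \cref{subsec:fc_parameters}. I would then check this against the explicit $v_{ij}$ listed in \cref{thm:spd_fc}. For LEM with $(1,0)$, $v=\inner{\log S-\log P}{\log W}$. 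For AIM, $v=\inner{\log(P^{-1/2}SP^{-1/2})}{\log W}^{(1,\beta)}$. For LCM, $v$ is the Cholesky expression.

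\textbf{Step 2: output assembly.} On $T_I\spd{m}\cong\sym{m}$, the orthonormal basis for $\alphabetainner{\cdot}{\cdot}$ yields the coefficients $\frac{1}{\sqrt{\alpha}}v_{ii}+\mu\sum_k v_{kk}$ on the diagonal and $\frac{1}{\sqrt{2\alpha}}v_{ij}$ off the diagonal. Setting $\alpha=1$ gives $\mu=\frac1m\bigl(\frac{1}{\sqrt{1+m\beta}}-1\bigr)$, which is the gyro $\eta$ (with the dimension being that of the output space). Setting $\beta=0$ gives $\mu=0$, which recovers the LEM case. For LCM, $\rieexp_I$ in Cholesky coordinates gives $Y=VV^\top$ with exponentiated diagonal, matching the gyro formula directly.

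Finally, every pair $(P_{ij},W_{ij})$ corresponds to some $(P_{ij},Z_{ij})$, because $\rielog_I$ is a bijection for these complete metrics. So every gyro layer is an instance of ours, which is the meaning of ``incorporate''. I expect the main obstacle to be Step 1 for AIM: one must verify concretely that $\rielog_I(\ominus P\oplus S)=\log(P^{-1/2}SP^{-1/2})$. Gyro addition under AIM is $P^{1/2}QP^{1/2}$, so $\ominus P\oplus S=P^{-1/2}SP^{-1/2}$, and the identity follows. The remaining bookkeeping is a matter of tracking the $\beta$-trace term.
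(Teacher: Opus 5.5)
Your proposal is correct and follows the paper's argument. The paper likewise reduces the claim to the identity $\gyrinner{\ominus P_{ij}\oplus S}{W_{ij}}=\inner{\rielog_{P_{ij}}(S)}{\pt{I}{P_{ij}}(\rielog_I(W_{ij}))}_{P_{ij}}$, sets $A_{ij}=\pt{I}{P_{ij}}(\rielog_I(W_{ij}))$, and compares the result with the explicit formulas of \cref{thm:spd_fc}; it takes that identity from \cref{prop:riem_fc_gen_euc_fc}, which ultimately rests on a left-cancellation argument as in \cref{app:lem:inner_equivalence}, rather than on your isometry-of-gyrotranslation argument. Your check of the output assembly ($\mu=\eta$ at $\alpha=1$, $\mu=0$ at $\alphabeta=(1,0)$) is a step the paper leaves implicit; keep $P_{ij}$ free as in \cref{thm:spd_fc}, since you need only the parallel-transport parameterization of $A_{ij}$ from \cref{subsec:fc_parameters}, not the restriction $P_{ij}=\rieexp_I(\gamma_{ij}[Z_{ij}])$.
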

\begin{proof}
    Comparing \cref{app:thm:gyro_spd_fc_layer} with our \cref{thm:spd_fc}, we only need to show the equality of $v_{ij}$ in the gyro and our framework:
    \begin{equation}
        \begin{aligned}
            v_{ij}^g \stackrel{(1)}{=} \inner{\rielog_{P_{ij}}\left( S \right)}{\pt{I}{P_{ij}}\left(\rielog_{I} (W_{ij})\right)}_{P_{ij}},
        \end{aligned}
    \end{equation}
    where (1) has been proved in \cref{prop:riem_fc_gen_euc_fc}. Setting $A_{ij} = \pt{I}{P}\left(\rielog_{I} (W_{ij})\right) \in T_{P_{ij}} \spd{n}$, we recover \cref{app:eq:v_ij_lem,app:eq:v_ij_aim,app:eq:v_ij_lcm} for each metric.
\end{proof}

\subsection{Relation with the flat SPD fully connected layers}
\label{app:sec:relation_flat_spd_fc}
\citet{nguyen2025symmetric} proposed two SPD FC layers based on flat LEM and LCM. However, as shown by \citet[App. B. 2.2]{nguyen2025symmetric}, they have the same formulations as the LEM and LCM gyro SPD FC layers, respectively.

\subsection{Trivialized SPD fully connected layers}
\label{app:sec:simplified_sdp_fc}

\begin{theorem}[Trivialized SPD FC Layers]
    \label{app:thm:trivilized_spd_fc}
    Trivializing each $P_{ij}$ in \cref{thm:spd_fc} as $\rieexp _{I} (\gamma _{ij} [Z_{ij}])$, $v_{ij}(S)$ under different metrics can be further simplified:
    \begin{align}
        \text{LEM}:
        &  \left\langle \log(S), Z_{ij} \right\rangle^{\alphabeta} - \gamma_{ij} \norm{Z_{ij}}^{\alphabeta}, \\
        \text{AIM}:
        &\left\langle \log \left( \exp \left( -\frac{\gamma_{ij}}{2} [Z_{ij}] \right)  S \exp \left( -\frac{\gamma_{ij}}{2} [Z_{ij}] \right) \right), Z_{ij} \right\rangle^{\alphabeta}, \\
        \text{PEM}:
        & \left\langle S^\theta- \left( I + \theta \gamma_{ij} [Z_{ij}] \right), Z_{ij} \right\rangle^{\alphabeta}, \\
        \text{LCM}:
        & \left\langle \lfloor K\rfloor + \dlog(\bbK) - \left( \gamma_{ij} \lfloor [Z_{ij}] \rfloor + \frac{1}{2} \gamma_{ij} \bbD([Z_{ij}])\right),  \lfloor Z_{ij} \rfloor + \frac{1}{2}\bbZ_{ij} \right\rangle,
    \end{align}
    where $\norm{\cdot}^{\alphabeta}$ is the norm induced by $\inner{\cdot}{\cdot}^{\alphabeta}$, and $\bbD(\cdot)$ returns a diagonal matrix with diagonal elements from the input square matrix.
\end{theorem}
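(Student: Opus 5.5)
The plan is to start from the un-trivialized expressions for $v_{ij}(S)$ in \cref{thm:spd_fc}, substitute $P_{ij}=\rieexp_I(\gamma_{ij}[Z_{ij}])$, and simplify metric by metric using the operators in \cref{app:tab:riem_operators_lem_aim_pem,app:tab:riem_operators_bwm_lc}. First I compute $P_{ij}$ in closed form for each metric. At $I$ the differentials $\log_{*,I}$ and $\pow_{\theta*,I}$ reduce to the identity and to multiplication by $\theta$, respectively. Hence $\rieexp_I(V)=\exp(V)$ under both LEM and AIM, and $\rieexp_I(V)=(I+\theta V)^{1/\theta}$ under PEM. For LCM, $\chol(I)=I$ and $\chol_{*,I}(V)=\lfloor V\rfloor+\tfrac12\bbD(V)$, so the Cholesky factor of $\rieexp_I(V)$ is $L=\lfloor V\rfloor+\dexp(\tfrac12\bbD(V))$. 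Consequently $\lfloor L\rfloor=\lfloor V\rfloor$ and $\dlog(\bbL)=\tfrac12\bbD(V)$. Throughout, I set $V=\gamma_{ij}[Z_{ij}]$.

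Next I substitute these into the $v_{ij}$ formulas. For LEM, $\log(P_{ij})=\gamma_{ij}[Z_{ij}]$, so $\langle \log S-\gamma_{ij}[Z_{ij}],Z_{ij}\rangle^{\alphabeta}$ splits by bilinearity. Since $[Z_{ij}]$ is the unit vector of $Z_{ij}$ in $\norm{\cdot}^{\alphabeta}$, we get $\langle [Z_{ij}],Z_{ij}\rangle^{\alphabeta}=\norm{Z_{ij}}^{\alphabeta}$, which yields the LEM formula. For AIM, $P_{ij}^{-1/2}=\exp(-\tfrac{\gamma_{ij}}{2}[Z_{ij}])$, which gives the AIM formula directly. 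For PEM, $P_{ij}^\theta=I+\theta\gamma_{ij}[Z_{ij}]$; I would double-check that the intended expression is consistent with the paper's scaling convention, since the stated formula shows $\gamma_{ij}$ times $\theta$ inside the parentheses. For LCM, $\dlog(\bbK\bbL^{-1})=\dlog(\bbK)-\dlog(\bbL)=\dlog(\bbK)-\tfrac12\gamma_{ij}\bbD([Z_{ij}])$, because diagonal matrices commute. Combining this with $\lfloor L\rfloor=\gamma_{ij}\lfloor[Z_{ij}]\rfloor$ gives the stated first argument.

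The main obstacle is bookkeeping rather than substance. The key points are the LCM differential of Cholesky at $I$ (the factor $\tfrac12$ on the diagonal) and the normalization convention for $[Z_{ij}]$. That convention must be taken with respect to $\langle\cdot,\cdot\rangle^{\alphabeta}$ for the LEM term $\gamma_{ij}\norm{Z_{ij}}^{\alphabeta}$ to appear. I would verify the Cholesky differential by differentiating $LL^\top$ at $L=I$, which gives $\dot L+\dot L^\top=V$ and hence $\dot L=\lfloor V\rfloor+\tfrac12\bbD(V)$. I would then state explicitly which norm defines $[\cdot]$.
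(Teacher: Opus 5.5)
Your proposal is correct and is essentially the paper's proof: compute $P_{ij}=\rieexp_I(\gamma_{ij}[Z_{ij}])$ under each metric, which gives $\log P_{ij}=\gamma_{ij}[Z_{ij}]$, $P_{ij}^{-1/2}=\exp(-\tfrac{\gamma_{ij}}{2}[Z_{ij}])$, $P_{ij}^{\theta}=I+\theta\gamma_{ij}[Z_{ij}]$, and $\lfloor L_{ij}\rfloor+\dlog(\bbL_{ij})=\gamma_{ij}\lfloor[Z_{ij}]\rfloor+\tfrac12\gamma_{ij}\bbD([Z_{ij}])$, then substitute into the $v_{ij}$ of \cref{thm:spd_fc}. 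Your PEM hesitation is unnecessary, since what you derived is exactly the stated expression; the only scaling subtlety is the factor $1/\theta$ in the intrinsic $\langle \rielog_{P_{ij}}(S), \pt{I}{P_{ij}}(Z_{ij})\rangle_{P_{ij}}$, which the convention of \cref{thm:spd_fc} already absorbs into $Y=(I+V^{\PE})^{1/\theta}$.
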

\begin{proof}
    \textbf{LEM:}
    \begin{equation}
        \begin{aligned}
             \left\langle \log(S)-\log(P_{ij}), Z_{ij} \right\rangle^{\alphabeta}
             &\stackrel{(1)}{=} \left\langle \log(S)- \gamma_{ij} [Z_{ij}], Z_{ij} \right\rangle^{\alphabeta}\\
             &\stackrel{(2)}{=} \left\langle \log(S), Z_{ij} \right\rangle^{\alphabeta} - \gamma_{ij} \norm{Z_{ij}}^{\alphabeta},
        \end{aligned}
    \end{equation}
        The above comes from the following.
    \begin{enumerate}[label=(\arabic*)]
        \item
        \cref{app:eq:exp_i_lem_aim};
        \item
        $[Z_{ij}] = \frac{Z_{ij}}{\norm{Z_{ij}}^{\alphabeta}}$.
    \end{enumerate}

    \textbf{AIM:}
    This can be obtained by the following:
    \begin{equation}
        \begin{aligned}
             \exp \left( \gamma_{ij} [Z_{ij}] \right)^{-\frac{1}{2}} = \exp \left( -\frac{\gamma_{ij}}{2} [Z_{ij}] \right).
        \end{aligned}
    \end{equation}

    \textbf{PEM:}
    This can be obtained by \cref{app:eq:exp_i_pem}.

    \textbf{LCM:}
    \begin{equation}
        \begin{aligned}
            &\left\langle \lfloor K\rfloor - \lfloor L_{ij} \rfloor + \dlog(\bbK\bbL_{ij}^{-1}), \lfloor Z_{ij} \rfloor + \frac{1}{2}\bbZ_{ij} \right\rangle \\
            &= \left\langle \lfloor K\rfloor + \dlog(\bbK) - \left( \lfloor L_{ij} \rfloor +  \dlog(\bbL_{ij}) \right), \lfloor Z_{ij} \rfloor + \frac{1}{2}\bbZ_{ij} \right\rangle \\
            &\stackrel{(1)}{=} \left\langle \lfloor K\rfloor + \dlog(\bbK) - \left( \gamma_{ij} \lfloor [Z_{ij}] \rfloor + \frac{1}{2} \gamma_{ij} \bbD([Z_{ij}])\right), \lfloor Z_{ij} \rfloor + \frac{1}{2}\bbZ_{ij} \right\rangle,
        \end{aligned}
    \end{equation}
    where (2) comes from \cref{app:eq:exp_i_lcm}.
\end{proof}
\begin{remark} \label{app:rmk:spd_fc_constrains}
    Due to the incompleteness of PEM and BWM, their exponential maps at $I$, $\rieexp_{I}(V)$, are well-defined locally:
    \begin{equation}
        \begin{aligned}
            \text{PEM: }&  I + \theta V \in \spd{n},\\
            \text{BWM: }&  I + \frac{1}{2} V \in \spd{n}.\\
        \end{aligned}
    \end{equation}
    The above restriction can be addressed numerically, for example by ReEig \citep{huang2017deep}:
    \begin{equation}
        \widetilde{S} = U \max(\epsilon I, \Sigma) U^\top,
    \end{equation}
    where $S\stackrel{\text{Eig}}{:=} U \Sigma U^\top$ is the eigendecomposition.
\end{remark}

\subsection{Trivialized SPD multinomial logistic regression}
\label{app:sec:simplified_sdp_mlr}

In our implementation, we trivialize the SPD parameters in the SPD MLR as in \cref{subsec:fc_parameters}. The SPD MLRs proposed by \citet{chen2024rmlr} under five geometries can be further simplified. For simplicity, we do not involve the power deformation \citep{chen2024rmlr}.

\begin{theorem}[Trivialized SPD MLRs]
    \label{thm:spdmlrs}
    \linktoproof{thm:spdmlrs_trivilized}
    Given $C$ classes and an SPD feature $S$, the SPD MLRs, $p(y=k \mid S \in \spd{n})$, are proportional to
    \begin{align}
        \text{LEM}:
        & \exp \left[ \left\langle \log(S), Z_{k} \right\rangle^{\alphabeta} - \gamma_{k} \norm{Z_{k}}^{\alphabeta} \right], \\
        \text{AIM}:
        &\left[ \exp  \left\langle \log \left( \exp \left( -\frac{\gamma_{k}}{2} [Z_{k}] \right)  S \exp \left( -\frac{\gamma_{k}}{2} [Z_{k}] \right) \right), Z_{k} \right\rangle^{\alphabeta} \right], \\
        \text{PEM}:
        & \frac{1}{\theta} \exp \left[ \left\langle S^\theta- \left( I + \theta \gamma_{k} [Z_{k}] \right), Z_{k} \right\rangle^{\alphabeta} \right], \\
        \text{LCM}:
        & \exp \left[ \left\langle \lfloor K\rfloor + \dlog(\bbK) - \left( \gamma_{k} \lfloor [Z_{k}] \rfloor + \frac{1}{2} \gamma_{k} \bbD([Z_{k}])\right),  \lfloor Z_{k} \rfloor + \frac{1}{2}\bbZ_{k} \right\rangle \right],\\
        \text{BWM: }
        & \exp \left[ \frac{1}{2} \left\langle \left(P_{k}S \right)^{\frac{1}{2}} + \left(S P_{k} \right)^{\frac{1}{2}} -2 P_{k}, \calL_{P_{k}}(L_{k} Z_{k} L_{k}^\top) \right\rangle \right],
    \end{align}
    where $Z_k \in T_I\spd{n} \backslash \{0\}$ is a symmetric matrix, $L_k=\chol(P_k)$ is the Cholesky factor of $P_k$ with $P_k = (I +\frac{1}{2} \gamma_{k} [Z_{k}])^2$. Here $\{Z_k \in \sym{n} \}_{k=1}^C$ and $\{\gamma_{k} \in \bbRscalar \}_{k=1}^C$ are the MLR parameters.
\end{theorem}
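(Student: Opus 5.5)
The starting point is the Riemannian MLR of \citet{chen2024rmlr}: $p(y=k\mid S)\propto\exp\bigl(\langle\rielog_{P_k}(S),A_k\rangle_{P_k}\bigr)$. I would substitute the trivialization of \cref{subsec:fc_parameters}, namely $A_k=\pt{I}{P_k}(Z_k)$ and $P_k=\rieexp_I(\gamma_k[Z_k])$. With these parameters the logit $\langle\rielog_{P_k}(S),\pt{I}{P_k}(Z_k)\rangle_{P_k}$ has exactly the form of the quantity $v_{ij}(S)$ in the FC layer, with $(P_{ij},Z_{ij})$ replaced by $(P_k,Z_k)$. The proof therefore reduces to reusing the per-metric expressions of $v_{ij}$ from \cref{thm:spd_fc} together with their trivialized versions from \cref{app:thm:trivilized_spd_fc}, and then exponentiating.

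The steps, metric by metric, are as follows.

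\textbf{LEM and LCM.} Parallel transport composed with the differential of the diffeomorphism ($\log$, or the Cholesky log-map) turns the metric into the flat inner product. The logit is then $\langle\log S-\log P_k,Z_k\rangle^{\alphabeta}$ for LEM and the Cholesky analogue for LCM. Using $\rieexp_I$ in \cref{app:eq:exp_i_lem_aim,app:eq:exp_i_lcm}, we have $\log P_k=\gamma_k[Z_k]$, and the LCM diagonal part picks up the factor $\tfrac12$. This gives the stated formulas, exactly as in the proof of \cref{app:thm:trivilized_spd_fc}.

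\textbf{AIM.} I would use $P_k^{-1/2}=\exp(-\tfrac{\gamma_k}{2}[Z_k])$. The transport $(P_k)^{1/2}Z_k(P_k)^{1/2}$ cancels against the $P_k^{-1}$ factors in the metric, which leaves $\langle\log(P_k^{-1/2}SP_k^{-1/2}),Z_k\rangle^{\alphabeta}$.

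\textbf{PEM.} Here $P_k^\theta=I+\theta\gamma_k[Z_k]$. The $\tfrac1{\theta^2}$ in the metric, combined with $\pow_{\theta*,I}=\theta\,\mathrm{id}$, leaves a residual $\tfrac1\theta$ in front of the inner product. I would check this constant carefully, since the statement places the $\tfrac1\theta$ outside the exponential. That placement is only consistent with proportionality if it is read as a scaling convention, which is the convention I would adopt following \citet{chen2024rmlr}.

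\textbf{BWM.} I would take the logit directly from the BWM row of \cref{thm:spd_fc}, with $P_k=\rieexp_I(\gamma_k[Z_k])=(I+\tfrac12\gamma_k[Z_k])^2$. This uses the BWM exponential at $I$, $V+\tfrac14V^2+I$, because $\calL_I[V]=V/2$. The factor $\tfrac12$ comes from $g_P=\tfrac12\langle\calL_P[\cdot],\cdot\rangle$. The transported vector is handled via the Cholesky-based transport identity already used in \cref{thm:spd_fc}, which yields $\calL_{P_k}(L_kZ_kL_k^\top)$.

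I expect the main obstacle to be the BWM case. Parallel transport under BWM is not in closed form for non-commuting points, so one has to justify replacing it by the map $Z\mapsto L_kZL_k^\top$ paired through the Lyapunov operator. This is only a vector transport, as permitted by the footnote in \cref{subsec:fc_parameters}. I would invoke that footnote and the corresponding convention in \cref{thm:spd_fc}, rather than attempt an exact transport.

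The remaining cases are direct substitutions. The concluding step notes that softmax normalization absorbs common factors, so the expressions hold up to proportionality in $k$.
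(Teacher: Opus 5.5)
Your proposal is correct and follows the paper's own proof. The paper likewise takes the SPD MLR logits of \citet{chen2024rmlr} (listed in the proof of \cref{thm:spd_fc}), sets $P_k=\rieexp_I(\gamma_k[Z_k])$, reads off the LEM, AIM, PEM and LCM cases from \cref{app:thm:trivilized_spd_fc}, and leaves BWM unsimplified.

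Your PEM flag is a real point that the paper's proof passes over. The derivation gives $\tfrac{1}{\theta}\langle S^\theta-(I+\theta\gamma_k[Z_k]),Z_k\rangle^{\alphabeta}$ inside the exponential, while the stated form's outer $\tfrac1\theta$ is immaterial for proportionality. So the statement holds only after rescaling $Z_k$ by $\theta$ and absorbing $\sign(\theta)$ into $\gamma_k$; this is a reparametrization, not a convention inherited from \citet{chen2024rmlr}.
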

\begin{proof}
    For each class $k$,  the expression of $v_{k}$ in the SPD MLR \citep[Thm. 4.2]{chen2024rmlr} has been reviewed in \cref{app:subsec:prf_spd_fc}.
    For MLR under each metric $g$, we parameterize each parameter $P_{k} \in \spd{n}$ by $Z_k$ and $\gamma_{k}$ by
    \begin{equation}
        P_k = \rieexp^g_{I}( \gamma_{k} [Z_{k}]),
    \end{equation}
    with $[Z_{k}]$ as the unit vector of $Z_k$.
    Under this parameterization, the MLRs under LEM, AIM, PEM, and LCM can be further simplified, which has been implied by \cref{app:thm:trivilized_spd_fc}.
\end{proof}

\begin{remark}
    Similar to the SPD FC layer, due to the incompleteness of PEM and BWM, the associated parameterization should follow
    \begin{align}
        \text{PEM: } I + \theta \gamma_{k} [Z_{k}] \in \spd{n}, \\
        \text{BWM: } I +\frac{1}{2} \gamma_{k} [Z_{k}] \in \spd{n}.
    \end{align}
\end{remark}

\subsection{Covariant equivariance}
\label{app:sec:equivariance}

\textbf{Equivariance in ManifoldNet \citep{chakraborty2020manifoldnet}.}
For a learned scalar kernel $w$ with positive weights that sum to one, ManifoldConv \citep[Eq. (8)]{chakraborty2020manifoldnet} is defined as
\begin{equation}
    (f*w)(y)=\mathop{\operatorname{argmin}}_{Z\in\calM}\sum_{x\in\mathcal{K}_y}w(x-y)d_{\calM}^2(f(x),Z),
\end{equation}
where $\mathcal{K}_y$ is the receptive field centered at $y$. Since an isometry preserves Riemannian distances and hence commutes with the weighted Fr\'{e}chet mean, ManifoldConv satisfies the fixed-parameter equivariance
\begin{equation}
    ((\phi\circ f)*w)(y)=\phi((f*w)(y)).
\end{equation}

\textbf{Equivariance of our layers.}
Our FC and convolutional layers are defined through Riemannian operators that are compatible with isometries. In contrast to ManifoldNet's fixed-parameter equivariance, they admit parameter-covariant equivariance. Since our convolutional layer is composed of local FC transformations, we only consider the FC layer below for notational convenience. Consider the equal-manifold and equal-dimension setting $\calN=\calM$ in \cref{thm:riem_fc}. For an isometry $\phi:\calM\to\calM$, we have
\begin{equation}
    \calF_{A,P,B,E}(\phi(X))
    =\rieexp_{E}\left(\sum_{i=1}^m\left\langle\rielog_{P_i}(\phi(X)),A_i\right\rangle_{P_i}B_i\right)
    =\phi\left(\calF_{\bar A,\bar P,\bar B,\bar E}(X)\right),
\end{equation}
with
\begin{equation}
    \bar P_i=\phi^{-1}(P_i),\quad \bar A_i=(\phi^{-1})_{*,P_i}(A_i),\quad \bar E=\phi^{-1}(E),\quad \bar B_i=(\phi^{-1})_{*,E}(B_i).
\end{equation}
Since $\phi$ is an isometry, $\{\bar B_i\}_{i=1}^m$ remains an orthonormal basis of $T_{\bar E}\calM$. If $\phi(E)=E$ further fixes the origin $E$, and the equivariance identity simplifies to
\begin{equation}
    \calF_{A,P}(\phi(X))=\phi\left(\calF_{\bar A,\bar P}(X)\right).
\end{equation}

\section{Review of previous Grassmannian transformation layers}
\label{app:sec:review_grass_trans_layers}

This section briefly reviews several popular Grassmannian transformation layers.

\textbf{FRMap + ReOrth.}
Given an input Grassmannian $X \in \grasonb{p,q}$, \citet{huang2018building} used Full Rank Map (FRMap) to transform the input orthonormal matrices of subspaces into new matrices through a linear mapping function, and then applied QR decomposition to recover orthogonality:
\begin{equation}
   Y = \qr(WX),
\end{equation}
where $W \in \bbR{m \times n}$ is a row-wise orthogonal parameter, and $\qr(\cdot)$ returns the orthogonal matrix in the QR decomposition.

\textbf{PP \& ONB Scaling.}
\citet{nguyen2022gyro,nguyen2023building} proposed matrix scaling for the PP and ONB Grassmannian, respectively. Given $P=XX^\top \in \graspp{p,n}$ with $X \in \grasonb{p,n}$, the operations are defined as
\begin{align}
     &\textbf{PP: }
    Y = \exp \left(\left[\begin{array}{cc}
    0 & W * B \\
    -(W * B)^T & 0
    \end{array}\right]\right) \idpp \exp \left(-\left[\begin{array}{cc}
    0 & W * B \\
    -(W * B)^T & 0
    \end{array}\right]\right),\\
    &\textbf{ONB: }
    Y = \exp \left(\left[\begin{array}{cc}
    0 & W * B \\
    -(W * B)^T & 0
    \end{array}\right]\right) \idonb,
\end{align}
where $*$ denotes the Hadamard product and $B \in \bbR{(n-p) \times p}$ is a Euclidean parameter. Here, $X = \exp \left(\left[\begin{array}{cc}
    0 & B \\
    -B^T & 0
    \end{array}\right]\right)\idonb$.

\textbf{GrTrans.}
\citet{nguyen2023building} adopted Grassmannian gyrogroup translation (GrTrans) to transform the ONB and PP Grassmannian features. Given $X \in \graspp{p,n}$ (or $X \in \grasonb{p,n}$), the operation is defined as
\begin{align}
    Y = W \oplus X,
\end{align}
where $\oplus$ is the Grassmannian PP (ONB) gyro addition \citep[Sec. 2.3]{nguyen2023building}, and $W \in \graspp{p,n}$ (or $W \in \grasonb{p,n}$) is a Grassmannian parameter.

\section{Additional experimental details and results}
\label{app:sec:exp_details}

\subsection{Hyperbolic spaces}
\label{app:subsec:exp_details_hyperbolic}

\subsubsection{Datasets}

\textbf{Disease \citep{anderson1991infectious}. }
It represents a disease propagation tree, simulating the SIR disease transmission model, with each node representing either an infection or a non-infection state.

\textbf{Airport \citep{zhang2018link}. }
It is a transductive dataset where nodes represent airports and edges represent airline routes from OpenFlights.org.

\textbf{Pubmed \citep{namata2012query}. }
This is a standard benchmark describing citation networks where nodes represent scientific papers in the area of medicine, edges are citations between them, and node labels are academic (sub)areas.

\textbf{Cora \citep{sen2008collective}. }
It is a citation network where nodes represent scientific papers in the area
of machine learning, edges are citations between them, and node labels are academic (sub)areas.

\subsubsection{Implementation details}

We follow the official implementations of HNN\footnote{\url{https://github.com/dalab/hyperbolic_nn}} \citep{ganea2018hyperbolic}, HNN++\footnote{\url{https://github.com/mil-tokyo/hyperbolic_nn_plusplus}} \citep{shimizu2021hyperbolic}, and HyboNet \footnote{\url{https://github.com/chenweize1998/fully-hyperbolic-nn}} \citep{chen2022fully} to conduct the experiments. For the Einstein transformation in the Beltrami--Klein model, we carefully implement it according to the original paper \citep{mao2024klein}. We adopt the HGCN\footnote{\url{https://github.com/HazyResearch/hgcn}} settings \citep{chami2019hyperbolic} for the link prediction task.

\textbf{Details on main experiments.}
Following the HNN implementations \citep{ganea2018hyperbolic,chami2019hyperbolic,mao2024klein}, the baseline encoder consists of two transformation layers: the first maps the input feature dimension to 16, and the second maps 16 to 16. The transformation layers can be our HFC layers or alternatives such as Möbius, Einstein, Poincaré FC, LorentzTan, or LFC. On Disease, Airport, and Pubmed, each transformation is followed by an activation layer $\rieexp_o (\operatorname{ReLU} (\rielog_o(x)))$, where $o$ is the origin in each model. On Cora, no activation layer is used for any method. Following HNN, we also adopt the bias translation after each HFC layer, \ie, $x \oplus b=\rieexp_x (\pt{o}{x} \rielog_o(b))$, except NestFC. We use the Adam optimizer \citep{kingma2015adam} with a learning rate of $1e^{-2}$. Within each dataset, all methods use the same hyperparameters except for weight decay and dropout, which are tuned for each method. The weight decay and dropout configurations for our HFC models are reported in \cref{app:tab:hyperbolic_hyperparameters}.

\begin{center}
  \captionsetup{type=table,hypcap=false}
  \caption{Weight decay and dropout configurations for the HFC models. Each entry reports weight decay / dropout.}
  \label{app:tab:hyperbolic_hyperparameters}
  \begin{tabular}{c|cccc}
    \toprule
    Method & Disease & Airport & Pubmed & Cora \\
    \midrule
    HFC-P & $1e^{-5} / 0$ & $1e^{-3} / 0.1$ & $1e^{-5} / 0$ & $1e^{-4} / 0$ \\
    HFC-K & $5e^{-5} / 0$ & $1e^{-3} / 0$ & $0 / 0$ & $5e^{-5} / 0.1$ \\
    HFC-H & $0 / 0$ & $1e^{-3} / 0$ & $1e^{-5} / 0.1$ & $1e^{-3} / 0.1$ \\
    \bottomrule
  \end{tabular}
\end{center}

\textbf{Details on ablations on the RResNet.}
We employ a hyperbolic transformation layer to map each input vector into an 8-dimensional vector in the Poincaré ball. The network consists of two residual blocks, each configured with different hidden dimensions and varying numbers of horospheres. We use the Adam optimizer \citep{kingma2015adam} and fine-tune hyperparameters, such as the learning rate and weight decay.

\subsubsection{Complexity and parameter analysis}
\label{app:subsubsec:hyperbolic_complexity}

\begin{table}[ht]
  \centering
  \caption{Forward complexity and parameter dimensions of an $n$-to-$m$ hyperbolic FC layer for a batch of $B$ samples in the reduction setting $m\le n$. The worst entry in each metric is underlined.}
  \label{app:tab:hyperbolic_fc_complexity_parameters}
  \begin{tabular}{l|c|c|c}
    \toprule
    Method & Space & Complexity & Parameter dimension \\
    \midrule
    Möbius \citep{ganea2018hyperbolic} & $\pball{n}$ & $O(Bnm)$ & $nm$ \\
    Einstein \citep{mao2024klein} & $\klein{n}$ & $O(Bnm)$ & $nm$ \\
    LorentzTan & $\bbh{n}$ & $O(Bnm)$ & $nm$ \\
    LFC \citep{chen2022fully} & $\bbh{n}$ & $O(Bnm)$ & $m(n+1)+m+(n+1)+2$ \\
    NestFC \citep{fan2022nested} & $\bbh{n}$ & $\underline{O(n^3+Bn^2)}$ & $\underline{\frac{n(n-1)}{2}+mn-\frac{m(m+1)}{2}+1}$ \\
    Poincaré FC \citep{shimizu2021hyperbolic} & $\pball{n}$ & $O(Bnm)$ & $nm+m$ \\
    \midrule
    \rowcolor{HilightColor} HFC-P & $\pball{n}$ & $O(Bnm)$ & $nm+m$ \\
    \rowcolor{HilightColor} HFC-K & $\klein{n}$ & $O(Bnm)$ & $nm+m$ \\
    \rowcolor{HilightColor} HFC-H & $\bbh{n}$ & $O(Bnm)$ & $nm+m$ \\
    \bottomrule
  \end{tabular}
\end{table}

We compare the computational complexity and parameter dimensions of the hyperbolic FC layers.

\textbf{Analysis.}
We consider a hyperbolic FC layer that maps a batch of $B$ samples from an $n$-dimensional hyperbolic space to an $m$-dimensional hyperbolic space in the reduction setting $m\le n$. NestFC uses two matrix-manifold parameters, $Q\in\mathrm{SO}(n)$ and $\widetilde P\in\mathrm{St}(m,n)$ \citep[Sec. 3.3]{fan2022nested}, which introduce additional complexity and parameters.
\begin{itemize}
    \item \textbf{Asymptotic complexity.}
    As summarized in \cref{app:tab:hyperbolic_fc_complexity_parameters}, all other FC layers have complexity $O(Bnm)$, whereas NestFC has the highest complexity, $O(n^3+Bn^2)$. Its additional $O(n^3)$ term comes from the matrix exponentials used to construct its matrix-manifold parameters.
    \item \textbf{Parameter dimension.}
    As summarized in \cref{app:tab:hyperbolic_fc_complexity_parameters}, NestFC has the largest and fastest-growing parameter dimension among the compared FC layers when $m$ is fixed and $n$ increases. Specifically, $Q$ contributes $n(n-1)/2$ dimensions and $\widetilde P$ contributes $mn-m(m+1)/2$ dimensions.
\end{itemize}

\subsubsection{Comparison under the NHGCN architecture}
\label{app:subsubsec:nhgcn_comparison}

\begin{table}[t]
  \centering
  \caption{Hyperparameter-search procedure.}
  \label{app:tab:nhgcn_search}
  \resizebox{\textwidth}{!}{
  \begin{tabular}{c|l|l}
    \toprule
    Step & Hyperparameter & Candidates \\
    \midrule
    1 & Feature normalization (\texttt{normalize\_feats}) & $0, 1$ \\
    2 & Activation & None, ReLU, LeakyReLU, ELU, Tanh \\
    3 & Learning rate & $0.001, 0.005, 0.008, 0.01, 0.02$ \\
    4 & Gyro bias & $0, 1$ for HFC-H only. NestFC skips this step \\
    5 & Gradient clipping & None, $0.5, 1.0$ \\
    6 & Attention and local aggregation & $(0,0), (1,1)$ for (\texttt{use\_att}, \texttt{local\_agg}) \\
    7 & Weight decay & $0, 0.0001, 0.0005, 0.001, 0.002, 0.005, 0.01, 0.05, 0.1$ \\
    8 & Dropout & $0, 0.1, 0.2, 0.3, 0.4, 0.5, 0.6, 0.7, 0.8, 0.9$ \\
    \bottomrule
  \end{tabular}}
\end{table}

\begin{table}[t]
  \centering
  \caption{Link-prediction test AUC (\%).}
  \label{app:tab:nhgcn_auc}
  \begin{tabular}{l|cccc}
    \toprule
    Method & Disease & Airport & Pubmed & Cora \\
    \midrule
    NHGCN-NestFC & 78.28 ± 0.73 & 94.49 ± 0.04 & 94.70 ± 0.21 & \ColFirst{93.11 ± 0.40} \\
    \midrule
    \rowcolor{HilightColor} NHGCN-HFC-H & \ColFirst{97.37 ± 0.22} & \ColFirst{96.14 ± 0.27} & \ColFirst{96.18 ± 0.06} & 92.85 ± 0.24 \\
    \bottomrule
  \end{tabular}
\end{table}

\begin{table}[t]
  \centering
  \caption{Full-model trainable parameters, peak memory (MiB), and FitTime (ms/epoch) for NHGCN-NestFC and NHGCN-HFC-H.}
  \label{app:tab:nhgcn_resources}
  \resizebox{\textwidth}{!}{
  \begin{tabular}{l|rrr|rrr|rrr|rrr}
    \toprule
    \multirow{2}{*}{Method} & \multicolumn{3}{c|}{Disease} & \multicolumn{3}{c|}{Airport} & \multicolumn{3}{c|}{Pubmed} & \multicolumn{3}{c}{Cora} \\
    & \#Param & MiB & FitTime & \#Param & MiB & FitTime & \#Param & MiB & FitTime & \#Param & MiB & FitTime \\
    \midrule
    NHGCN-NestFC & 738 & \ColFirst{81.10} & 27.039 & 776 & \ColFirst{120.29} & \ColFirst{43.292} & 257,952 & 3496.40 & 105.094 & 2,075,436 & 1266.51 & 94.458 \\
    \midrule
    \rowcolor{HilightColor} NHGCN-HFC-H & \ColFirst{450} & 92.88 & \ColFirst{19.854} & \ColFirst{465} & 134.34 & 44.638 & \ColFirst{7,785} & \ColFirst{3444.32} & \ColFirst{91.064} & \ColFirst{21,780} & \ColFirst{253.68} & \ColFirst{24.007} \\
    \bottomrule
  \end{tabular}}
\end{table}

The main experiments compare different hyperbolic FC layers within the HNN framework, and we further compare HFC-H and NestFC within the NHGCN \citep{fan2022nested} framework, which is based on the hyperboloid model.

\textbf{Setting.}
We replace only NestFC in NHGCN with HFC-H for feature transformation \citep[Sec. 3.2]{fan2022nested}. All other NHGCN modules remain unchanged. Both methods use the same two-layer NHGCN architecture with a hidden dimension of 16 and identical input and output dimensions in each layer. For link prediction, the public paper and code\footnote{\url{https://github.com/cvgmi/Nested-Hyperbolic-DimReduc-and-HNN}} specify neither the hyperparameter-search procedure nor the final dataset-specific hyperparameters. For a fair comparison, we conduct two rounds of hyperparameter search for both methods, as shown in \cref{app:tab:nhgcn_search}.

\textbf{Results.}
\Cref{app:tab:nhgcn_auc} compares the best HFC-H and NestFC results.
\begin{itemize}
    \item \textbf{Better performance.}
    HFC-H exceeds NestFC by 19.09, 1.65, and 1.48 AUC points on Disease, Airport, and Pubmed, respectively. On Cora, the two selected outcomes differ by only 0.26 point.
    \item \textbf{Lower resource use.}
    Although both methods use the same two-layer architecture and identical input and output dimensions in every layer, HFC-H uses fewer trainable parameters than NestFC. On the high-dimensional Pubmed and Cora inputs, it also requires less peak memory and FitTime. Particularly on Cora, NestFC uses $\mathbf{95.3\times}$ as many parameters, $\mathbf{5.0\times}$ as much peak memory, and $\mathbf{3.9\times}$ as much FitTime as HFC-H.
\end{itemize}

\subsection{SPD manifolds}
\label{app:subsec:exp_details_spd}

\subsubsection{Datasets}

\textbf{Radar\footnote{\url{https://www.dropbox.com/s/dfnlx2bnyh3kjwy/data.zip?dl=0}}  \citep{brooks2019riemannian}. }
It consists of 3,000 synthetic radar signals equally distributed across 3 classes.

\textbf{HDM05\footnote{\url{https://resources.mpi-inf.mpg.de/HDM05/}} \citep{muller2007documentation}.}
It consists of 2,343 skeleton-based motion capture sequences executed by different actors. Each frame consists of 3D coordinates of 31 joints. We remove the under-represented clips, trimming the dataset down to 2,326 instances scattered throughout 122 classes. We randomly select 50\% of the samples from each category for training and the remaining 50\% for testing.

\textbf{FPHA\footnote{\url{https://github.com/guiggh/hand_pose_action}} \citep{garcia2018first}.}  It includes 1,175 skeleton-based first-person hand gesture videos of 45 different categories with 600 clips for training and 575 for testing. Each frame contains the 3D coordinates of 21 hand joints.

For the HDM05 and FPHA datasets, we preprocess each sequence using the code\footnote{\url{https://ravitejav.weebly.com/kbac.html}} provided by \citet{vemulapalli2014human} to normalize body part lengths and ensure invariance to scale and view.

\subsubsection{SPD modeling}
For our SPDNNs, we follow \citet{wang2024grassatt,nguyen2024matrix} to model each sample into a multi-channel SPD tensor. For the Radar dataset, we follow \citet{wang2024grassatt} to use the temporal convolution followed by a covariance pooling layer to obtain a multi-channel covariance $[c,20,20]$ tensor. For the HDM05 and FPHA datasets, we follow \citet[Sec. D.2.2]{nguyen2024matrix} to model each skeleton sequence into a multi-channel covariance tensor $[c,n,n]$. Specifically, we first identify the closest left (right) neighbor of every joint based on their distance to the hip (wrist) joint, and then combine the 3D coordinates of each joint and those of its left (right) neighbor to create a feature vector for the joint. For a given frame $t$, we compute its Gaussian embedding \citep{lovric2000multivariate}:
\begin{equation}
    Y_t=(\operatorname{det} \Sigma_{t})^{-\frac{1}{n+1}}
    \left[\begin{array}{cc} \Sigma_t+\mu_t\left(\mu_t\right)^T & \mu_t \\
    \left(\mu_t\right)^T & 1
\end{array}\right],
\end{equation}
where $\mu_t$ and $\Sigma_t$ are the mean vector and covariance matrix computed from the set of feature vectors within the frame. The lower part of matrix $\log \left(Y_t\right)$ is flattened to obtain a vector $\tilde{v}_t$. All vectors $\tilde{v}_t$ within a time window $[t, t+c-1]$, where $c$ is determined from a temporal pyramid representation of the sequence (the number of temporal pyramids is set to 2 in our experiments), are used to compute a covariance matrix as
\begin{equation}
    Z_t=\frac{1}{c} \sum_{i=t}^{t+c-1}\left(\tilde{v}_i-\overline{v}_t\right)\left(\tilde{v}_i-\overline{v}_t\right)^T,
\end{equation}
where $\overline{v}_t=\frac{1}{c} \sum_{i=t}^{t+c-1} \tilde{v}_i$. The resulting $\{Z_t\}$ is the input covariance tensor. On the FPHA dataset, we generate the covariance based on three sets of neighbors: left, right, and vertical (bottom) neighbors.

For GyroLE, GyroAI, GyroLC, and GyroSPD++, the inputs are similar to those of our SPDNNs. For other SPD baselines, such as SPDNet, SPDNetBN, LieBN, MLR, and RResNet, each sequence is represented by a global covariance representation \citep{huang2017riemannian,brooks2019riemannian}. The sizes of the covariance matrices are $20 \times 20$, $93 \times 93$, and $63 \times 63$ for the Radar, HDM05, and FPHA datasets, respectively.

\begin{table}[htbp]
  \centering
  \caption{Training hyperparameters in SPDNNs.}
    \begin{tabular}{c|c|ccc}
    \toprule
    Dataset & Model & $\theta$ & Optimizer & Learning Rate  \\
    \midrule
    \multirow{5}[1]{*}{Radar} & SPDNN-LEM & \na    & AMSGrad & $5e^{-3}$ \\
    & SPDNN-AIM & 0.25  & AMSGrad & $5e^{-4}$  \\
    & SPDNN-PEM & \na    & AMSGrad & $1e^{-2}$  \\
    & SPDNN-LCM & 0.25  & AMSGrad & $5e^{-4}$  \\
    & SPDNN-BWM & \na    & AMSGrad & $5e^{-4}$  \\
    \midrule
    \multirow{5}[0]{*}{HDM05} & SPDNN-LEM & \na    & SGD   & $5e^{-3}$ \\
    & SPDNN-AIM & \na    & SGD   & $5e^{-3}$  \\
    & SPDNN-PEM & \na    & AMSGrad & $1e^{-3}$  \\
    & SPDNN-LCM & \na    & AMSGrad & $1e^{-3}$  \\
    & SPDNN-BWM & \na    & AMSGrad & $1e^{-3}$ \\
    \midrule
    \multirow{5}[1]{*}{FPHA} & SPDNN-LEM & \na    & AMSGrad & $1e^{-4}$\\
    & SPDNN-AIM & \na    & AMSGrad & $1e^{-4}$  \\
    & SPDNN-PEM & \na    & AMSGrad & $1e^{-3}$  \\
    & SPDNN-LCM & -0.25 & AMSGrad & $1e^{-3}$  \\
    & SPDNN-BWM & -0.25 & AMSGrad & $1e^{-4}$  \\
    \midrule
    \multirow{5}[1]{*}{NTU60} & SPDNN-LEM & \na    & SGD & $1e^{-3}$\\
    & SPDNN-AIM & \na    & AMSGrad & $1e^{-4}$  \\
    & SPDNN-PEM & \na    & AMSGrad & $5e^{-4}$  \\
    & SPDNN-LCM & 0.25 & AMSGrad & $5e^{-4}$  \\
    & SPDNN-BWM & 0.25 & AMSGrad & $1e^{-3}$  \\
    \bottomrule
    \end{tabular}%
  \label{app:tab:spd_hyperparameters}%
\end{table}%

\subsubsection{Implementation details}
\textbf{Comparative methods.}
We follow the official PyTorch code of
SPDNetBN\footnote{\url{https://proceedings.neurips.cc/paper\_files/paper/2019/file/6e69ebbfad976d4637bb4b39de261bf7-Supplemental.zip}} to implement SPDNet and SPDNetBN.
For LieBN\footnote{\url{https://github.com/GitZH-Chen/LieBN}}, we focus on the instantiations under AIM and LCM, while for RResNet\footnote{\url{https://github.com/CUAI/Riemannian-Residual-Neural-Networks}}, we implement the variants induced by LEM and AIM. For SPD MLR\footnote{\url{https://github.com/GitZH-Chen/SPDMLR}}, we implement the variant induced by LCM. For GyroLE, GyroAI, GyroLC, and GyroSPD++, we re-implemented them based on the original papers \citep{nguyen2023building,nguyen2024matrix}.

\textbf{SPDNNs.}
On all datasets, we employ a single convolutional kernel for global convolution, \ie, applying a global receptive field across the channel dimension. The output dimensions of the SPD convolutional layer are $8 \times 8$, $34 \times 34$, $22 \times 22$, and $11 \times 11$ for the Radar, HDM05, FPHA, and NTU60 datasets, respectively. We primarily use the AMSGrad \citep{reddi2018convergence} optimizer, except for SPDNN-LEM and SPDNN-AIM on the HDM05 dataset and SPDNN-LEM on NTU60, where SGD \citep{robbins1951stochastic} is employed. Weight decay is set to zero except for SPDNN-PEM and SPDNN-BWM on the FPHA dataset, where it is $5e^{-4}$ and $1e^{-4}$, respectively. The matrix power in SPDNN-PEM is set to 0.5 for Radar and 0.25 for the other three datasets. Since matrix power can deform the latent Riemannian metric \citep[Fig. 1]{chen2024rmlr}, we also apply matrix power $(\cdot)^\theta$ before the convolutional layer in SPDNN-AIM, -LCM, and -BWM to activate the latent geometries. The batch size is set to 30, and training runs for 150 epochs with early stopping. \cref{app:tab:spd_hyperparameters} summarizes the training hyperparameters.

\subsubsection{Reproduction Fidelity and Controlled Comparisons}
\label{app:subsec:reproduction_fidelity}

Gyro \citep{nguyen2023building} and GyroSPD++ \citep{nguyen2024matrix} baselines are repproduced due to the unavailability of their official code.

\textbf{Reproduction fidelity.} Two factors may explain gaps between our reproduced and published results:
\begin{itemize}
    \item \textbf{Input preprocessing.} We follow the original papers \citep{nguyen2023building,nguyen2024matrix}, but the authors did not release processed inputs, and the papers omit the raw-skeleton preprocessing, ambiguous joint-to-neighbor cases \citep[Supp. Sec.~6.1]{nguyen2022gyro}, and the temporal-pyramid partition. Thus, our inputs may not exactly match theirs:
    \begin{equation*}
    \begin{aligned}
    \text{raw skeletons}
    &\rightarrow \boxed{\text{unreported raw-data preprocessing}} \\
    &\rightarrow \boxed{\text{underdetermined joint-neighbor selection}} \\
    &\rightarrow \boxed{\text{unspecified temporal-pyramid partition}} \\
    &\rightarrow \text{SPD inputs}.
    \end{aligned}
    \end{equation*}
    \item \textbf{Network implementation.} Because code for Gyro \citep{nguyen2023building} and GyroSPD++ \citep{nguyen2024matrix} is unavailable, we reimplemented them. For a controlled comparison, these baselines and our SPDNNs share inputs, pipeline, and code for common matrix operations, including Riemannian operators and matrix functions. We also tested the official training settings in Gyro \citep[App.~A.1]{nguyen2023building} and GyroSPD++ \citep[App.~D.2.1]{nguyen2024matrix}, but they did not recover the published accuracies. We therefore report each baseline's best selected configuration in this shared pipeline.
\end{itemize}

\textbf{Published vs. reproduced results.} Among the three datasets, only FPHA uses the same official split. On HDM05, they use 130 classes and a subject split, whereas we follow \citet[App.~G.2.2]{chen2024rmlr} to use 122 classes and a per-class 50/50 split. On NTU60, they use cross-subject, whereas we use cross-view. \cref{app:tab:reproduction_fidelity} therefore reports only FPHA. All published FPHA results exceed our reproduced results. Even SPDNet and SPDNetBN, which have official code, show gaps of $3.20$ and $1.69$ points.

\begin{table}[htbp]
  \centering
  \caption{FPHA accuracy (\%): Source \citep{nguyen2023building,nguyen2024matrix}, Ours, and Gap (Ours $-$ Source; percentage points).}
  \label{app:tab:reproduction_fidelity}
  \small
  \begin{tabular}{cccc}
    \toprule
    \multirow{2}{*}{Method} & \multicolumn{3}{c}{FPHA} \\
    \cmidrule(lr){2-4}
    & Source & Ours & Gap \\
    \midrule
    SPDNet (official code) & $88.79 \pm 0.36$ & $85.59 \pm 0.72$ & $-3.20$ \\
    SPDNetBN (official code) & $91.02 \pm 0.25$ & $89.33 \pm 0.49$ & $-1.69$ \\
    GyroLE & $94.61$ & $90.73 \pm 0.92$ & $-3.88$ \\
    GyroLC & $82.43$ & $76.10 \pm 0.63$ & $-6.33$ \\
    GyroAI & $93.39$ & $89.60 \pm 0.37$ & $-3.79$ \\
    GyroSPD++-AIM (source: AI-LE) & $96.84 \pm 0.27$ & $89.50 \pm 0.37$ & $-7.34$ \\
    GyroSPD++-LEM (source: LE-LE) & $94.72 \pm 0.25$ & $88.23 \pm 0.62$ & $-6.49$ \\
    \bottomrule
  \end{tabular}
\end{table}

\textbf{Controlled fairness.} This shared pipeline controls our internal comparison, and the Gyro \citep{nguyen2023building} and GyroSPD++ \citep{nguyen2024matrix} architectures follow the original papers. SPDNN and GyroSPD++ also use the same architecture, consisting of one SPD convolutional layer followed by an SPD MLR classifier.

\subsubsection{Training efficiency}
\label{app:subsec:efficiency_spd}

\begin{table}[htbp]
  \centering
  \caption{Training efficiency (seconds per epoch).}
    \begin{tabular}{c|c|cccc}
    \toprule
    Method & Geometry & Radar & HDM05 & FPHA & NTU60 \\
    \midrule
    SPDNet & \na  & 0.66  & 0.50  & 0.28  & 3.08 \\
    SPDNetBN & AIM   & 1.25  & 0.94  & 0.58  & 6.14 \\
    SPDResNet-AIM & AIM   & 0.96  & 1.23  & 0.69  & 6.84 \\
    SPDResNet-LEM & LEM   & 0.77  & 0.55  & 0.30  & 3.17 \\
    SPDNetLieBN-AIM & AIM   & 1.21  & 1.15  & 0.97  & 8.85 \\
    SPDNetLieBN-LCM & LCM   & 1.10  & 1.11  & 0.59  & 5.96 \\
    SPDNetMLR & LCM   & 0.66  & 5.46  & 0.88  & 4.94 \\
    GyroLE & LEM   & 0.79  & 2.86  & 1.59  & 10.57 \\
    GyroLC & LCM   & 0.66  & 1.49  & 0.78  & 5.99 \\
    GyroAI & AIM   & 0.99  & 22.80  & 12.62  & 26.76 \\
    \midrule
    \rowcolor{HilightColortwo} GyroSPD++-AIM & AIM   & 5.09  & 103.57  & 66.35  & 125.05 \\
    \rowcolor{HilightColortwo} GyroSPD++-LEM & LEM & 0.99  & 0.95  & 0.66  & 7.58 \\
    \rowcolor{HilightColortwo} GyroSPD++-LCM & LCM & 0.66  & 0.70  & 0.37  & 5.74 \\
    \midrule
    \rowcolor{HilightColor} SPDNN-LEM & LEM   & 0.86  & 0.74  & 0.63  & 5.79 \\
    \rowcolor{HilightColor} SPDNN-AIM & AIM   & 4.84 & 101.80  & 65.42  & 124.41 \\
    \rowcolor{HilightColor} SPDNN-PEM & PEM   & 1.09  & 7.10  & 1.57  & 8.71 \\
    \rowcolor{HilightColor} SPDNN-LCM & LCM   & 0.65  & 0.59  & 0.35  & 3.72 \\
    \rowcolor{HilightColor} SPDNN-BWM & BWM   & 6.07  & 110.51  & 71.67  & 139.48 \\
    \bottomrule
    \end{tabular}%
  \label{app:tab:efficiency}
\end{table}

\cref{app:tab:efficiency} presents the average training time per epoch of each SPD network. We have the following observations:
\begin{itemize}
    \item
    \textbf{The efficiency of SPDNN varies across metrics.}
    The most efficient metric is LCM, where our model even achieves comparable efficiency to the vanilla SPDNet. However, AIM and BWM demonstrate significant computational burden, primarily due to their complex Riemannian computations.
    \item
    \textbf{Our trivialization improves efficiency.}
    Compared with the LCM-based SPDNetMLR, SPDNN-LCM achieves much lower training time. This improvement can be partially attributed to our trivialization, which simplifies the final expression of MLR (\cref{app:sec:simplified_sdp_mlr}) and eliminates the need for computationally expensive Riemannian optimization. Moreover, SPDNN consistently outperforms GyroSPD++ under LEM, LCM, and AIM in terms of efficiency. This advantage arises because our trivialization not only simplifies the expression of the FC and MLR layers, but also reduces the number of parameters.
\end{itemize}

\subsubsection{Comparison with ManifoldNet and MVC-Net}
\label{app:subsec:local_spd_convolution}

\begin{table}[t]
  \centering
  \caption{SPD CNN architectures with local receptive fields and shared kernels.}
  \label{app:tab:local_spd_architectures}
  \resizebox{\textwidth}{!}{
  \begin{tabular}{c|c|c|c|c|c|c}
    \toprule
    Input shape & \#Conv & Kernels & Kernel sizes & Strides & Shape changes & Final SPD features \\
    \midrule
    $[40,3,3]$ & 3 & $[4,8,16]$ & $[3,3,3]$ & $[2,2,2]$ & $[40,3,3]\to[19,4,3,3]\to[9,8,3,3]\to[4,16,3,3]$ & $[64,3,3]$ \\
    $[20,6,6]$ & 3 & $[4,8,16]$ & $[2,2,2]$ & $[2,2,1]$ & $[20,6,6]\to[10,4,6,6]\to[5,8,6,6]\to[4,16,6,6]$ & $[64,6,6]$ \\
    $[5,12,12]$ & 2 & $[8,20]$ & $[2,2]$ & $[1,1]$ & $[5,12,12]\to[4,8,12,12]\to[3,20,12,12]$ & $[60,12,12]$ \\
    \bottomrule
  \end{tabular}}
\end{table}

\begin{table}[t]
  \centering
  \caption{Radar classification accuracy (\%) and FitTime (s/epoch) under different architectures.}
  \label{app:tab:local_spd_results}
  \resizebox{\textwidth}{!}{
  \begin{tabular}{l|cc|cc|cc}
    \toprule
    \multirow{2}{*}{Method} & \multicolumn{2}{c|}{$[40,3,3]$} & \multicolumn{2}{c|}{$[20,6,6]$} & \multicolumn{2}{c}{$[5,12,12]$} \\
    & Accuracy & FitTime & Accuracy & FitTime & Accuracy & FitTime \\
    \midrule
    ManifoldNet & 77.89 ± 1.56 & 13.12 & 73.27 ± 0.68 & 11.58 & 80.75 ± 1.23 & 11.50 \\
    MVC-Net & 81.04 ± 1.72 & 7.88 & 78.05 ± 0.23 & 8.18 & 81.15 ± 0.34 & 7.87 \\
    \midrule
    \rowcolor{HilightColor} SPDConvNet-LEM & \ColFirst{97.68 ± 0.32} & 1.58 & \ColFirst{95.86 ± 0.95} & 1.55 & 96.91 ± 0.18 & \ColFirst{$1.03$} \\
    \rowcolor{HilightColor} SPDConvNet-LCM & 96.51 ± 0.66 & \ColFirst{$1.54$} & 94.40 ± 0.50 & \ColFirst{$1.22$} & \ColFirst{97.20 ± 0.65} & 1.25 \\
    \rowcolor{HilightColor} SPDConvNet-PEM & 97.44 ± 0.56 & 1.82 & 94.45 ± 0.78 & 1.53 & 96.16 ± 0.91 & 2.34 \\
    \bottomrule
  \end{tabular}}
\end{table}

\begin{table}[t]
  \centering
  \caption{ManifoldNet learning-rate ablation under the $[40,3,3]$ input setting.}
  \label{app:tab:manifoldnet_lr_ablation}
  \resizebox{\textwidth}{!}{
  \begin{tabular}{c|cccccccc}
    \toprule
    Learning rate & $5\times10^{-5}$ & $10^{-4}$ & $2\times10^{-4}$ & $5\times10^{-4}$ & $10^{-3}$ & $2\times10^{-3}$ & $5\times10^{-3}$ & $10^{-2}$ \\
    \midrule
    Accuracy (\%) & 55.20 ± 2.58 & 59.73 ± 1.31 & 63.52 ± 1.44 & 69.55 ± 2.46 & 73.28 ± 1.42 & 76.08 ± 1.08 & \ColFirst{77.89 ± 1.56} & 77.12 ± 1.30 \\
    \bottomrule
  \end{tabular}}
\end{table}

To explicitly evaluate local receptive fields and kernel sharing, we conduct additional 1D convolutional experiments on Radar and compare our method with ManifoldNet and MVC-Net. Our main SPD experiments follow Gyro \citep{nguyen2023building} and GyroSPD++ \citep{nguyen2024matrix}, representing each sample with only a few SPD descriptors and applying a single global receptive field across them. In this regime, convolution reduces to an FC layer on the product manifold and therefore does not directly evaluate local convolution. ManifoldNet and MVC-Net instead operate on manifold-valued fields $f:U\to\calM$, where $U\subset\bbZ^d$ indexes spatial or temporal positions and each position stores a point on the manifold $\calM$ \citep{chakraborty2020manifoldnet,bouza2020mvc}. Their convolutional layers slide local windows over these positions and share kernel weights across positions.

\textbf{Settings.}
Local receptive fields and kernel sharing require sufficiently many spatial or temporal positions. We therefore use Radar and split each $[2,1000]$ signal into $C$ consecutive temporal blocks. Covariance pooling within each block produces a 1D CNN input of shape $[C,n,n]$, where $C$ is the number of temporal positions and each $[n,n]$ slice is an SPD matrix. We focus on three settings: $[40,3,3]$, $[20,6,6]$, and $[5,12,12]$. \cref{app:tab:local_spd_architectures} gives the corresponding convolutional architectures. We refer to our SPD convolutional network as SPDConvNet and apply tangent ReLU after every convolutional layer. Following its original implementation, ManifoldNet uses no activation. MVC-Net includes tangent ReLU in its original formulation \citep{bouza2020mvc}, but in our preliminary runs it caused numerical instability and yielded NaNs. We therefore omit tangent ReLU for MVC-Net.

\textbf{Experiments.}
We compare ManifoldNet \citep{chakraborty2020manifoldnet} and MVC-Net \citep{bouza2020mvc} with SPDConvNet instantiated with LEM, LCM, and PEM. All methods use the matched architectures in \cref{app:tab:local_spd_architectures}. As shown in \cref{app:tab:local_spd_results}, all three SPDConvNet variants achieve higher accuracy and train faster across the three settings. The efficiency difference arises from the repeated computation of weighted Fr\'echet means in ManifoldNet \citep[Eq. (8)]{chakraborty2020manifoldnet} and Fr\'echet means in MVC-Net \citep[Def. 1]{bouza2020mvc} during convolution.

\textbf{Learning-rate ablation.}
We additionally scan eight learning rates for ManifoldNet under the $[40,3,3]$ input setting. Its best result is $77.89\%$, confirming that the comparison is not caused by an untuned learning rate.

\subsection{Grassmannian manifolds}
\label{app:subsec:exp_details_grass}

\textbf{Grassmannian modeling.}
As Grassmannian descriptors can be derived by the SVD of the covariance \citep{huang2018building,nguyen2023building}, we map the multi-channel Radar covariance into a $[c,n,p]$ ONB Grassmannian tensor via the SVD. The PP Grassmannian features can be derived from the ONB Grassmannian features via the isometry $\pi(\cdot):\grasonb{p,n} \rightarrow \graspp{p,n}$:
\begin{equation}
    \pi(U)=U U ^\top, \forall U \in \grasonb{p,n}.
\end{equation}

\textbf{Implementation details.}
Since GrNet \citep{huang2018building} is officially implemented in Matlab, we carefully re-implemented it using PyTorch. Additionally, since both GyroGr and GyroGr-Scaling do not release official code, we re-implemented them based on the original papers \citep{nguyen2023building}. For all comparative methods, we use SGD with a learning rate of $5e^{-2}$. For training our ONB and PP GrNNs, we use AMSGrad with a learning rate of $5e^{-3}$. The batch size is set to 30, and training runs for 150 epochs.

\subsection{Hardware}
\label{app:subsec:hardware}

On the HDM05 and FPHA datasets, SPDNet, RResNet, SPDNetBN, SPDNetLieBN, and MLR require SVD operations on relatively large matrices, which are more efficiently executed on a CPU. As a result, these methods are implemented on a CPU, whereas all other cases are executed on a single A6000 GPU.

\section{Proofs}
\label{app:sec:proofs}

\linkofproof{prop:riem_fc_gen_euc_fc}
\subsection{Proof of \cref{prop:riem_fc_gen_euc_fc}}
\begin{proof}
    \textbf{Euclidean spaces.}
    We first review the following facts about Euclidean space:
    \begin{itemize}
        \item
        the origin is the zero vector $\bbzero$;
        \item
        the standard orthonormal basis over $T_{0}\bbR{m} \cong \bbR{m}$ is $\{e_i\}_{i=1}^m$;
        \item
        $\rielog _p(x) = x-p$ and $\inner{\cdot}{\cdot}_p = \inner{\cdot}{\cdot}$ for any $x,p \in \bbR{n}$;
        \item
        point-to-hyperplane distance is $d(x,H_{a, p}) = \frac{|\langle  x-p, a \rangle |}{\| a \|}$.
    \end{itemize}
    Putting the above together, one can recover \cref{eq:euc_fc}.

    \textbf{Poincaré balls.}
    This exactly corresponds to the derivation of the Poincaré FC layer \citep[App. D.3]{shimizu2021hyperbolic}.

    \textbf{SPD gyrovector spaces.}
    \citet{nguyen2024matrix} proposed three gyro SPD FC layers based on the gyrovector structures under LEM, LCM, and AIM, respectively. The discussion below summarizes the proof in \citet[Apps. J-L]{nguyen2024matrix}.

    In the SPD gyro FC layer, the origin of the SPD manifold is the identity matrix $I$. Given a metric among LEM, LCM, and AIM, let $\{B_i\}_{i=1}^d$ be an orthonormal basis over $T _I \spd{m}$, where $d=\nicefrac{n(n+1)}{2}$ is the dimension of $\spd{n}$. The SPD gyro FC layer is defined by solving the following equations
    \begin{equation} \label{app:eq:gyro_fc}
        \sign \left( \left\langle \rielog _{I} (Y), B _k \right\rangle _I \right) \dist_{\mathrm{gyo}} (Y, H _{B_k, I}) = \langle W_k,  \ominus P_k \oplus X \rangle _{gr}, \quad 1 \leq k \leq m,
    \end{equation}
    where $\dist_{\mathrm{gyo}}$ denotes the corresponding pseudo-gyrodistance, $\ominus$ and $\oplus$ are gyro operations \citep[Apps. G.2-G.4]{nguyen2024matrix}, and $\inner{\cdot}{\cdot}_{gr}$ is the gyro inner product \citep[App. G.7]{nguyen2024matrix}. Here, each $W_k \in \spd{n}$ and $P_k \in \spd{n}$ are FC parameters.
    By Prop. 3.2 in \citet{nguyen2024matrix}, the RHS of \cref{app:eq:gyro_fc} is equal to $\inner{\rielog _{P_k} (X)}{\pt{I}{P_{k}}\left(\rielog_{I} (W_{k})\right)}_{P_{k}}$. Setting $A_k = \pt{I}{P_{k}}\left(\rielog_{I} (W_{k})\right) \in T _{P_k} \spd{n}$, one can recover \cref{eq:riem_fc_original}.
\end{proof}

\linkofproof{thm:riem_fc}
\subsection{Proof of \cref{thm:riem_fc}}
\begin{proof}
    The point-to-hyperplane pseudo-distance from a point $Y \in \calM$ to a Riemannian hyperplane over $\calM$ is
    \begin{equation}
         \dist(Y,H_{A, P})
        = \frac{|\langle  \rielog^\calM_P Y, A \rangle^\calM _P|}{\| A \|^\calM _P},
    \end{equation}
    where $H_{A, P}$ is a Riemannian hyperplane parameterized by $P \in \calM$ and $A \in T _P\calM$. Therefore, the signed point-to-hyperplane pseudo-distance from $Y$ to $H_{B_i, E}$ is
    \begin{equation}
    \label{app:eq:singed_dist}
    \begin{aligned}
        \sign \left( \left\langle \rielog^\calM_E (Y), B_i \right\rangle^\calM_E \right) \dist(Y,H_{B_i, E})
        &= \frac{\langle  \rielog^\calM_E (Y), B_i \rangle^\calM _E}{\| B_i \|^\calM_E}\\
        &\stackrel{(1)}{=}\langle  \rielog^\calM _E (Y), B_i \rangle^\calM _E \\
    \end{aligned}
    \end{equation}
    where (1) comes from the orthonormality of $B_i$.

    Setting \cref{app:eq:singed_dist} equal to $v_i(X)$, we have
    \begin{equation}
             \langle  \rielog^\calM _E (Y), B_i \rangle^\calM _E = \langle \rielog^\calN _{P_i}(X), A_i \rangle^\calN _{P_i}.
    \end{equation}
    The above equation indicates
    \begin{equation}
        \rielog^\calM _{E}(Y) = \sum_{i=1}^{m} \left( \langle \rielog^\calN _{P_i}(X), A_i \rangle^\calN _{P_i} B_i \right).
    \end{equation}
\end{proof}

\subsection{Proof of \cref{app:thm:pball_fc}}
\linkofproof{app:thm:pball_fc}

To simplify the Riemannian FC layer with the gyro structure, we first prove a useful lemma.
\begin{lemma} \label{app:lem:inner_equivalence}
    We assume that the manifold $\calM$ admits a gyrogroup \citep[Def. 2.2]{nguyen2022gyro} defined by\footnote{We assume all the involved Riemannian operators are well-defined.}
    \begin{equation} \label{app:eq:gyro_add_def}
        x \oplus y = \rieexp_{x} \left( \pt{e}{x} \left( \rielog_e \left( y   \right)\right)\right), \forall x,y \in \calM,
    \end{equation}
    where $e \in \calM$ is the origin of the manifold. Then, we have the following
    \begin{equation} \label{app:eq:gyro_inner}
        \left\langle \rielog_p (x ), a \right\rangle _{p}
        =\left\langle \rielog_e ( \ominus p \oplus x ), \pt{p}{e}(a) \right\rangle_{e}, \quad \forall x,p \in \calM \text{ and } \forall a \in T_p\calM.
    \end{equation}
\end{lemma}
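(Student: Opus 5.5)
We need to prove $\langle \rielog_p(x), a\rangle_p = \langle \rielog_e(\ominus p\oplus x), \pt{p}{e}(a)\rangle_e$. Since parallel transport along a geodesic is a linear isometry, the right-hand side equals $\langle \pt{e}{p}\rielog_e(\ominus p\oplus x), a\rangle_p$, using $\pt{e}{p}=(\pt{p}{e})^{-1}$. It therefore suffices to establish the tangent-vector identity
\begin{equation*}
    \rielog_p(x) = \pt{e}{p}\left(\rielog_e(\ominus p\oplus x)\right),
\end{equation*}
and then pair both sides with $a$ in $T_p\calM$.

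To prove this identity, I unwind the definition \cref{app:eq:gyro_add_def} together with the left cancellation law of gyrogroups, $p\oplus(\ominus p\oplus x)=x$, which holds in any gyrogroup \citep[Def. 2.2]{nguyen2022gyro}. Set $y=\ominus p\oplus x$. Then $x = p\oplus y = \rieexp_p\left(\pt{e}{p}(\rielog_e(y))\right)$. Applying $\rielog_p$ to both sides gives $\rielog_p(x)=\pt{e}{p}(\rielog_e(y))$. This step requires $\rielog_p\circ\rieexp_p$ to be the identity on the relevant vector, which I take from the standing assumption (footnote) that all involved operators are well-defined, i.e.\ the vectors lie within the injectivity domain. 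With the identity in hand, the isometry of parallel transport yields \cref{app:eq:gyro_inner}.

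The main obstacle is justifying left cancellation. The statement only assumes that $\oplus$ forms a gyrogroup, and left cancellation $a\oplus(\ominus a\oplus b)=b$ is a standard consequence of the gyrogroup axioms: left identity, left inverse, the left gyroassociative law, and the gyroautomorphism property. I will cite it as a standard gyrogroup theorem (e.g.\ Ungar's left cancellation law) rather than rederive it. A secondary subtlety is that $\ominus p$ must denote the gyrogroup inverse rather than some Riemannian construction; I will remark that the argument uses only this algebraic property. Finally, I will note that for the Poincaré and Klein models, \cref{app:eq:klein-poincare-add-riem} shows the hypothesis holds with $e=\bbzero$. The lemma then converts $v_i(x)$ in \cref{app:thm:pball_fc} into $\langle \rielog_\bbzero(-p_i\oplus_\calH x), \pt{p_i}{\bbzero}A_i\rangle_\bbzero$, with $A_i=\pt{\bbzero}{p_i}(z_i)$ giving back $z_i$ up to the conformal factor at $\bbzero$.
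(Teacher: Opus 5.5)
Your proposal is correct and follows the paper's proof essentially step by step: use left cancellation $p\oplus(\ominus p\oplus x)=x$, unfold the definition of $\oplus$, and apply $\rielog_p$ to get $\rielog_p(x)=\pt{e}{p}(\rielog_e(\ominus p\oplus x))$. Then move the transport to the other slot, using that parallel transport is a linear isometry with $\pt{p}{e}\circ\pt{e}{p}=\mathrm{id}$. Your remark that applying $\rielog_p$ requires the vector to lie in the injectivity domain is exactly what the paper's footnote on well-definedness covers.
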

\begin{proof}
    \textbf{Credit of the proof:}
    \cref{app:eq:gyro_add_def} comes from \citet[Eq. (1)]{nguyen2023building}, who demonstrated that several geometries admit gyrogroups based on this definition. The prototype of \cref{app:eq:gyro_inner} comes from App. I by \citet{nguyen2024matrix}, which only deals with SPD matrices. Here, we further extend the result into general gyrogroups.

    Denoting $\ominus p$ as the gyro inverse of $p$ ($\ominus p \oplus p=e$), we have
    \begin{equation}
        \begin{aligned}
          &x \stackrel{(1)}{=}  p \oplus ( \ominus p \oplus x) \stackrel{(2)}{=} \rieexp_{p} \left( \pt{e}{p} \left( \rielog_e \left( \ominus p \oplus x   \right)\right)\right)\\
          \stackrel{(3)}{\Rightarrow}  & \rielog_p(x) = \pt{e}{p} \left( \rielog_e \left( \ominus p \oplus x   \right)\right).
        \end{aligned}
    \end{equation}
        The above follows from the following.
    \begin{enumerate}[label=(\arabic*)]
        \item
        Left cancellation law of the gyrogroup \citep[Thms. 1.13]{ungar2022gyrovector}.
        \item
        Definition of gyro addition.
        \item
        Applying $\rielog_p(\cdot)$ to both sides.
    \end{enumerate}
    By the last equation, we have
    \begin{equation}
        \begin{aligned}
            \left\langle \rielog_p (x ), a \right\rangle _{p}
            &= \left\langle \pt{e}{p} \left( \rielog_e \left( \ominus p \oplus x   \right)\right), a \right\rangle _p\\
            &\stackrel{(1)}{=} \left\langle \rielog_e \left( \ominus p \oplus x   \right), \pt{p}{e} (a) \right\rangle _{e},\\
        \end{aligned}
    \end{equation}
    where (1) comes from
    \begin{itemize}
        \item
        Parallel transport preserves the norm \citep[Sec. 3.1]{do1992riemannian}.
        \item
        $\pt{p}{e} \circ \pt{e}{p}(v)=v, \forall v \in T_e\calM$.
    \end{itemize}
\end{proof}
Now we begin to prove \cref{app:thm:pball_fc}.
\begin{proof}[Proof of \cref{app:thm:pball_fc}]
    In both geometries, the origin is defined as the zero vector, as it is the identity element in each gyrovector space. We first deal with the Poincaré ball, followed by the Beltrami--Klein model.

    \textbf{Poincaré ball: }
    The Riemannian metric at the identity element is
    \begin{equation} \label{app:eq:pball_metric_at_zero}
        \inner{v}{w}_\bbzero = 4 \inner{v}{w}, \forall v,w \in T_\bbzero \pball{m}.
    \end{equation}
    Obviously, $\{\frac{1}{2} e_i\}_{i=1}^m$ is an orthonormal basis. \cref{app:lem:inner_equivalence} implies
    \begin{equation} \label{app:prf:eq:inner_poincare_fc}
        \begin{aligned}
            \left\langle \rielog_{p_i} (x), a_i \right\rangle _{p_i} \frac{1}{2} e_i
            &\stackrel{(1)}{=} \left\langle \rielog_\bbzero ( - {p_i} \Moplus x ), \pt{{p_i}}{\bbzero}({a_i}) \right\rangle_{\bbzero} \frac{1}{2} e_i \\
            &\stackrel{(2)}{=} 2\left\langle \rielog_\bbzero ( - {p_i} \Moplus x ), \pt{{p_i}}{\bbzero}({a_i}) \right\rangle e_i \\
            &\stackrel{(3)}{=} \left\langle \rielog_\bbzero ( - {p_i} \Moplus x ), z_i \right\rangle e_i.
        \end{aligned}
    \end{equation}
    The above comes from the following.
    \begin{enumerate}[label=(\arabic*)]
        \item
        \cref{app:lem:inner_equivalence} and $\Mominus p = -p, \quad \forall p \in \pball{n}$.
        \item
        \cref{app:eq:pball_metric_at_zero}.
        \item
        $a_{i}=\pt{\bbzero}{p_i}(z_i/2)$. This reparameterization preserves $p_i$ because $[z_i/2]=[z_i]$.
    \end{enumerate}

    \textbf{Beltrami--Klein model: }
    The Riemannian metric at the identity element is
    \begin{equation}
        \inner{v}{w}_\bbzero = \inner{v}{w}, \forall v,w \in T_\bbzero \klein{m}.
    \end{equation}
    Obviously, $\{ e_i\}_{i=1}^m$ is an orthonormal basis. \cref{app:lem:inner_equivalence,app:eq:klein-poincare-add-riem} implies that the above reasoning for the Poincaré ball can be transferred into the Beltrami--Klein model:
    \begin{equation}
        \begin{aligned}
            \left\langle \rielog_{p_i} (x), a_i \right\rangle _{p_i} e_i
            &\stackrel{(1)}{=} \left\langle \rielog_\bbzero ( - {p_i} \Eoplus x ), \pt{{p_i}}{\bbzero}({a_i}) \right\rangle_{\bbzero} e_i \\
            &= \left\langle \rielog_\bbzero ( - {p_i} \Eoplus x ), \pt{{p_i}}{\bbzero}({a_i}) \right\rangle e_i \\
            &\stackrel{(2)}{=} \left\langle \rielog_\bbzero ( - {p_i} \Eoplus x ), z_i \right\rangle e_i,
        \end{aligned}
    \end{equation}
    The above comes from the following.
    \begin{enumerate}[label=(\arabic*)]
        \item
        \cref{app:lem:inner_equivalence}, \cref{app:eq:klein-poincare-add-riem}, and $\Eominus p = -p, \quad$.
        \item
        $a_{i}=\pt{\bbzero}{p_i}(z_i)$.
    \end{enumerate}
\end{proof}

\subsection{Proof of \cref{app:thm:hyperboloid_fc}}
\linkofproof{app:thm:hyperboloid_fc}

\begin{proof}
    We only need to show the origin, the tangent space at the origin, and the inner product and an orthonormal basis over the tangent space at the origin.

    The hyperboloid is isometric to the Poincaré ball by the following diffeomorphism \citep{lee2006riemannian}:
    \begin{equation}
        \pi_{\pball{n} \rightarrow \bbh{n}}(x)=\left(\frac{1}{\sqrt{|K|}} \frac{1-K\|x\|^2}{1+K\|x\|^2} ; \frac{2 x^T}{1+K\|x\|^2}\right)^\top.
    \end{equation}
    The origin of hyperboloid is therefore defined as
    \begin{equation}
        e:= \pi_{\pball{n} \rightarrow \bbh{n}}(\bbzero)= \left(\frac{1}{\sqrt{|K|}}, 0 \cdots, 0 \right)^\top.
    \end{equation}

    The Riemannian metric and tangent space at $e$ are
    \begin{align}
        T_e{\bbh{n}} &=\{ (0,v^\top)^\top| v \in \bbR{n} \}, \\
        \langle (0,v^\top)^\top, (0,w^\top)^\top \rangle_e &= \langle v, w \rangle, \quad \forall (0,v^\top)^\top, (0,w^\top)^\top \in T_e{\bbh{n}}.
    \end{align}
    Likewise, $\{(0,e_i^\top)^\top\}_{i=1}^m$ is an orthonormal basis of $T_e{\bbh{m}}$ with $e_i \in \bbR{m}$.

    Combining the above with \cref{app:tab:riem_operators_hyperbolic}, we can instantiate \cref{thm:riem_fc} in the hyperboloid geometry.
\end{proof}

\linkofproof{thm:hfc_batch_closed_form}
\subsection{Proof of \cref{thm:hfc_batch_closed_form}}

\begin{proof}
    \textbf{Poincaré ball.}
    \begin{equation*}
        \begin{aligned}
            v_i(x)&\stackrel{(1)}{=}\left\langle\rielog_{\bbzero}((-p_i)\Moplus x),z_i\right\rangle\\
            &\stackrel{(2)}{=}\frac{\tanh^{-1}\left(\sqrt{|K|}\|(-p_i)\Moplus x\|\right)}
            {\sqrt{|K|}\|(-p_i)\Moplus x\|}
            \left\langle(-p_i)\Moplus x,z_i\right\rangle\\
            &\stackrel{(3)}{=}\resizebox{0.78\textwidth}{!}{$\displaystyle
            \frac{\tanh^{-1}\left(
            \sqrt{|K|}\left\|
            \frac{(1+K\|p_i\|^2)x-
            \left(1+2K\langle p_i,x\rangle-K\|x\|^2\right)p_i}
            {1+2K\langle p_i,x\rangle+K^2\|p_i\|^2\|x\|^2}
            \right\|\right)}
            {\sqrt{|K|}\left\|
            \frac{(1+K\|p_i\|^2)x-
            \left(1+2K\langle p_i,x\rangle-K\|x\|^2\right)p_i}
            {1+2K\langle p_i,x\rangle+K^2\|p_i\|^2\|x\|^2}
            \right\|}
            \left\langle
            \frac{(1+K\|p_i\|^2)x-
            \left(1+2K\langle p_i,x\rangle-K\|x\|^2\right)p_i}
            {1+2K\langle p_i,x\rangle+K^2\|p_i\|^2\|x\|^2},z_i
            \right\rangle$}\\
            &\stackrel{(4)}{=}\resizebox{0.78\textwidth}{!}{$\displaystyle
            \frac{
            \tanh^{-1}\left(
            \sqrt{\frac{|K|\|x-p_i\|^2}
            {1+2K\langle p_i,x\rangle+K^2\|p_i\|^2\|x\|^2}}
            \right)}
            {\sqrt{\frac{|K|\|x-p_i\|^2}
            {1+2K\langle p_i,x\rangle+K^2\|p_i\|^2\|x\|^2}}}
            \frac{
            (1+K\|p_i\|^2)\langle x,z_i\rangle
            -\left(1+2K\langle p_i,x\rangle-K\|x\|^2\right)\langle p_i,z_i\rangle}
            {1+2K\langle p_i,x\rangle+K^2\|p_i\|^2\|x\|^2}$}\\
            &\stackrel{(5)}{=}\resizebox{0.78\textwidth}{!}{$\displaystyle
            \frac{
            \tanh^{-1}\left(
            \sqrt{\frac{|K|\|x-p_i\|^2}
            {1+2K\langle p_i,x\rangle+K^2\|p_i\|^2\|x\|^2}}
            \right)}
            {\sqrt{\frac{|K|\|x-p_i\|^2}
            {1+2K\langle p_i,x\rangle+K^2\|p_i\|^2\|x\|^2}}}
            \frac{
            (1-K\|p_i\|^2)\langle x,z_i\rangle
            -(1-K\|x\|^2)\langle p_i,z_i\rangle}
            {1+2K\langle p_i,x\rangle+K^2\|p_i\|^2\|x\|^2}$}\\
            &\stackrel{(6)}{=}\resizebox{0.78\textwidth}{!}{$\displaystyle
            \frac{
            \tanh^{-1}\left(
            \sqrt{\frac{|K|\|x-p_i\|^2}
            {1+2K\langle p_i,x\rangle+K^2\|p_i\|^2\|x\|^2}}
            \right)
            \left[
            (1-K\|p_i\|^2)\langle x,z_i\rangle
            -(1-K\|x\|^2)\langle p_i,z_i\rangle
            \right]}
            {\sqrt{|K|\|x-p_i\|^2
            \left(1+2K\langle p_i,x\rangle+K^2\|p_i\|^2\|x\|^2\right)}}
            $}\\
            &\stackrel{(7)}{=}\resizebox{0.78\textwidth}{!}{$\displaystyle
            \|z_i\|
            \frac{\tanh^{-1}\left(\sqrt{|K|}Q_i^{\mathrm{P}}\right)}
            {\sqrt{|K|}Q_i^{\mathrm{P}}}
            \frac{
            \left[1+\tanh^2(\sqrt{|K|}\gamma_i)\right]\langle x,[z_i]\rangle
            -\frac{\tanh(\sqrt{|K|}\gamma_i)}{\sqrt{|K|}}
            \left(1+|K|\|x\|^2\right)}
            {D_i^{\mathrm{P}}}$}.
        \end{aligned}
    \end{equation*}

    The above comes from the following.
    \begin{enumerate}[label=(\arabic*)]
        \item The first equality is the HFC-P expression in \cref{app:thm:pball_fc}.
        \item From \cref{app:eq:log_0_klein},
        \begin{equation*}
            \rielog_{\bbzero}((-p_i)\Moplus x)=
            \frac{\tanh^{-1}(\sqrt{|K|}\|(-p_i)\Moplus x\|)}
            {\sqrt{|K|}\|(-p_i)\Moplus x\|}((-p_i)\Moplus x).
        \end{equation*}
        Taking its inner product with $z_i$ gives the second equality.
        \item Substituting $-p_i$ and $x$ into the Möbius addition in \cref{app:eq:mobius_addition} gives
        \begin{equation*}
            (-p_i)\Moplus x=
            \frac{
            (1+K\|p_i\|^2)x
            -\left(1+2K\langle p_i,x\rangle-K\|x\|^2\right)p_i}
            {1+2K\langle p_i,x\rangle+K^2\|p_i\|^2\|x\|^2}.
        \end{equation*}
        Inserting this vector into both occurrences of $(-p_i)\Moplus x$ gives the third equality.
        \item Expanding the squared norm of the numerator in (3) gives
        \begin{equation*}
            \begin{aligned}
                &\left\|
                (1+K\|p_i\|^2)x
                -\left(1+2K\langle p_i,x\rangle-K\|x\|^2\right)p_i
                \right\|^2\\
                &=(1+K\|p_i\|^2)^2\|x\|^2
                +\left(1+2K\langle p_i,x\rangle-K\|x\|^2\right)^2\|p_i\|^2\\
                &\quad-2(1+K\|p_i\|^2)
                \left(1+2K\langle p_i,x\rangle-K\|x\|^2\right)
                \langle p_i,x\rangle\\
                &=\|x-p_i\|^2
                \left(1+2K\langle p_i,x\rangle+K^2\|p_i\|^2\|x\|^2\right).
            \end{aligned}
        \end{equation*}
        Therefore,
        \begin{equation*}
            \|(-p_i)\Moplus x\|^2=
            \frac{\|x-p_i\|^2}
            {1+2K\langle p_i,x\rangle+K^2\|p_i\|^2\|x\|^2}.
        \end{equation*}
        The corresponding inner product is
        \begin{equation*}
            \left\langle(-p_i)\Moplus x,z_i\right\rangle=
            \frac{
            (1+K\|p_i\|^2)\langle x,z_i\rangle
            -\left(1+2K\langle p_i,x\rangle-K\|x\|^2\right)\langle p_i,z_i\rangle}
            {1+2K\langle p_i,x\rangle+K^2\|p_i\|^2\|x\|^2}.
        \end{equation*}
        Substituting these two identities into (3) gives the fourth equality.
        \item Since $p_i=\rieexp_{\bbzero}(\gamma_i[z_i])$, we have $p_i\parallel z_i$ and hence
        \begin{equation*}
            \langle p_i,x\rangle\langle p_i,z_i\rangle
            =\|p_i\|^2\langle x,z_i\rangle.
        \end{equation*}
        It follows that
        \begin{equation*}
            \begin{aligned}
                &(1+K\|p_i\|^2)\langle x,z_i\rangle
                -\left(1+2K\langle p_i,x\rangle-K\|x\|^2\right)\langle p_i,z_i\rangle\\
                &=(1-K\|p_i\|^2)\langle x,z_i\rangle
                -(1-K\|x\|^2)\langle p_i,z_i\rangle,
            \end{aligned}
        \end{equation*}
        which gives the fifth equality.
        \item The scalar denominators satisfy
        \begin{equation*}
            \begin{aligned}
                &\frac{1}
                {\sqrt{\frac{|K|\|x-p_i\|^2}
                {1+2K\langle p_i,x\rangle+K^2\|p_i\|^2\|x\|^2}}}
                \frac{1}
                {1+2K\langle p_i,x\rangle+K^2\|p_i\|^2\|x\|^2}\\
                &=\frac{1}
                {\sqrt{|K|\|x-p_i\|^2
                \left(1+2K\langle p_i,x\rangle+K^2\|p_i\|^2\|x\|^2\right)}}.
            \end{aligned}
        \end{equation*}
        This yields the sixth equality.
        \item The exponential map at the origin gives
        \begin{equation*}
            p_i=\rieexp_{\bbzero}(\gamma_i[z_i])
            =\frac{\tanh\left(\sqrt{|K|}\gamma_i\right)}{\sqrt{|K|}}[z_i].
        \end{equation*}
        Since $[z_i]=z_i/\|z_i\|$, we have
        \begin{equation*}
            \begin{aligned}
                \|p_i\|^2&=\frac{\tanh^2(\sqrt{|K|}\gamma_i)}{|K|},\\
                \langle p_i,x\rangle&=\frac{\tanh(\sqrt{|K|}\gamma_i)}{\sqrt{|K|}}\langle[z_i],x\rangle,\\
                \langle p_i,z_i\rangle&=\frac{\tanh(\sqrt{|K|}\gamma_i)}{\sqrt{|K|}}\|z_i\|,\\
                \langle x,z_i\rangle&=\|z_i\|\langle x,[z_i]\rangle,\\
                \|x-p_i\|^2&=\|x\|^2+\frac{\tanh^2(\sqrt{|K|}\gamma_i)}{|K|}
                -\frac{2\tanh(\sqrt{|K|}\gamma_i)}{\sqrt{|K|}}\langle x,[z_i]\rangle.
            \end{aligned}
        \end{equation*}
        Define
        \begin{equation*}
            \begin{aligned}
                D_i^{\mathrm{P}}
                &=1-2\sqrt{|K|}\tanh(\sqrt{|K|}\gamma_i)\langle x,[z_i]\rangle
                +|K|\tanh^2(\sqrt{|K|}\gamma_i)\|x\|^2,\\
                Q_i^{\mathrm{P}}
                &=\sqrt{\frac{
                \|x\|^2+\frac{\tanh^2(\sqrt{|K|}\gamma_i)}{|K|}
                -\frac{2\tanh(\sqrt{|K|}\gamma_i)}{\sqrt{|K|}}\langle x,[z_i]\rangle}
                {D_i^{\mathrm{P}}}}.
            \end{aligned}
        \end{equation*}
        When $Q_i^{\mathrm{P}}=0$, the quotient $\tanh^{-1}(\sqrt{|K|}Q_i^{\mathrm{P}})/(\sqrt{|K|}Q_i^{\mathrm{P}})$ is defined as $1$ by continuity.
        Substituting these identities into (6) and using $K=-|K|$ gives the seventh equality.
    \end{enumerate}

    \textbf{Beltrami--Klein model.}
    \begin{equation*}
        \begin{aligned}
            v_i(x)&\stackrel{(1)}{=}
            \left\langle\rielog_{\bbzero}((-p_i)\Eoplus x),z_i\right\rangle\\
            &\stackrel{(2)}{=}
            \frac{\tanh^{-1}\left(\sqrt{|K|}\|(-p_i)\Eoplus x\|\right)}
            {\sqrt{|K|}\|(-p_i)\Eoplus x\|}
            \left\langle(-p_i)\Eoplus x,z_i\right\rangle\\
            &\stackrel{(3)}{=}
            \resizebox{0.78\textwidth}{!}{$\displaystyle\frac{\tanh^{-1}\left(
            \sqrt{|K|}
            \left\|
            \frac{
            \sqrt{1+K\|p_i\|^2}x
            -\left(1+\frac{K\langle p_i,x\rangle}
            {1+\sqrt{1+K\|p_i\|^2}}\right)p_i}
            {1+K\langle p_i,x\rangle}
            \right\|
            \right)}
            {\sqrt{|K|}
            \left\|
            \frac{
            \sqrt{1+K\|p_i\|^2}x
            -\left(1+\frac{K\langle p_i,x\rangle}
            {1+\sqrt{1+K\|p_i\|^2}}\right)p_i}
            {1+K\langle p_i,x\rangle}
            \right\|}
            \times
            \left\langle
            \frac{
            \sqrt{1+K\|p_i\|^2}x
            -\left(1+\frac{K\langle p_i,x\rangle}
            {1+\sqrt{1+K\|p_i\|^2}}\right)p_i}
            {1+K\langle p_i,x\rangle},z_i
            \right\rangle$}\\
            &\stackrel{(4)}{=}
            \resizebox{0.78\textwidth}{!}{$\displaystyle\frac{\tanh^{-1}\left(
            \frac{\sqrt{|K|\left[\|x-p_i\|^2
            +K\left(\|p_i\|^2\|x\|^2-\langle p_i,x\rangle^2\right)\right]}}
            {1+K\langle p_i,x\rangle}
            \right)}
            {\frac{\sqrt{|K|\left[\|x-p_i\|^2
            +K\left(\|p_i\|^2\|x\|^2-\langle p_i,x\rangle^2\right)\right]}}
            {1+K\langle p_i,x\rangle}}
            \times
            \frac{1}{1+K\langle p_i,x\rangle}
            \left[
            \sqrt{1+K\|p_i\|^2}\langle x,z_i\rangle
            -\left(1+\frac{K\langle p_i,x\rangle}
            {1+\sqrt{1+K\|p_i\|^2}}\right)\langle p_i,z_i\rangle
            \right]$}\\
            &\stackrel{(5)}{=}
            \frac{\tanh^{-1}\left(
            \frac{\sqrt{|K|\left[\|x-p_i\|^2
            +K\left(\|p_i\|^2\|x\|^2-\langle p_i,x\rangle^2\right)\right]}}
            {1+K\langle p_i,x\rangle}
            \right)}
            {\frac{\sqrt{|K|\left[\|x-p_i\|^2
            +K\left(\|p_i\|^2\|x\|^2-\langle p_i,x\rangle^2\right)\right]}}
            {1+K\langle p_i,x\rangle}}
            \frac{\langle x-p_i,z_i\rangle}
            {1+K\langle p_i,x\rangle}\\
            &\stackrel{(6)}{=}
            \frac{\tanh^{-1}\left(
            \frac{\sqrt{|K|\left[\|x-p_i\|^2
            +K\left(\|p_i\|^2\|x\|^2-\langle p_i,x\rangle^2\right)\right]}}
            {1+K\langle p_i,x\rangle}
            \right)}
            {\sqrt{|K|\left[\|x-p_i\|^2
            +K\left(\|p_i\|^2\|x\|^2-\langle p_i,x\rangle^2\right)\right]}}
            \langle x-p_i,z_i\rangle\\
            &\stackrel{(7)}{=}
            \|z_i\|
            \frac{\tanh^{-1}\left(\sqrt{|K|}Q_i^{\mathrm{K}}\right)}
            {\sqrt{|K|}Q_i^{\mathrm{K}}}
            \frac{
            \langle x,[z_i]\rangle
            -\frac{\tanh(\sqrt{|K|}\gamma_i)}{\sqrt{|K|}}}
            {D_i^{\mathrm{K}}}.
        \end{aligned}
    \end{equation*}

    The above comes from the following.
    \begin{enumerate}[label=(\arabic*)]
        \item The first equality is the HFC-K expression in \cref{app:thm:pball_fc}.
        \item Applying the logarithmic map at $\bbzero$ in \cref{app:eq:log_0_klein} to $(-p_i)\Eoplus x$ gives the second equality.
        \item The gamma factor of $-p_i$ is
        \begin{equation*}
            \gamma_{-p_i}=\frac{1}{\sqrt{1+K\|p_i\|^2}}.
        \end{equation*}
        Substituting $-p_i$ and $x$ into the Einstein addition in \cref{app:subsec:geom_hyperbolic} and replacing $\gamma_{-p_i}$ gives
        \begin{equation*}
            (-p_i)\Eoplus x=
            \frac{
            \sqrt{1+K\|p_i\|^2}x
            -\left(1+\frac{K\langle p_i,x\rangle}
            {1+\sqrt{1+K\|p_i\|^2}}\right)p_i}
            {1+K\langle p_i,x\rangle}.
        \end{equation*}
        Inserting this vector into both occurrences of $(-p_i)\Eoplus x$ gives the third equality.
        \item Expanding the squared norm of the numerator in (3) gives
        \begin{equation*}
            \begin{aligned}
                &\left\|
                \sqrt{1+K\|p_i\|^2}x
                -\left(1+\frac{K\langle p_i,x\rangle}
                {1+\sqrt{1+K\|p_i\|^2}}\right)p_i
                \right\|^2\\
                &=(1+K\|p_i\|^2)\|x\|^2
                +\left(1+\frac{K\langle p_i,x\rangle}
                {1+\sqrt{1+K\|p_i\|^2}}\right)^2\|p_i\|^2\\
                &\quad-2\sqrt{1+K\|p_i\|^2}
                \left(1+\frac{K\langle p_i,x\rangle}
                {1+\sqrt{1+K\|p_i\|^2}}\right)\langle p_i,x\rangle\\
                &=\|x-p_i\|^2
                +K\left(\|p_i\|^2\|x\|^2-\langle p_i,x\rangle^2\right).
            \end{aligned}
        \end{equation*}
        Therefore,
        \begin{equation*}
            \|(-p_i)\Eoplus x\|^2=
            \frac{\|x-p_i\|^2
            +K\left(\|p_i\|^2\|x\|^2-\langle p_i,x\rangle^2\right)}
            {(1+K\langle p_i,x\rangle)^2}.
        \end{equation*}
        Expanding the inner product of the vector in (3) with $z_i$ gives
        \begin{equation*}
            \begin{aligned}
                \left\langle(-p_i)\Eoplus x,z_i\right\rangle
                &=\frac{1}{1+K\langle p_i,x\rangle}
                \left[
                \sqrt{1+K\|p_i\|^2}\langle x,z_i\rangle\right.\\
                &\quad\left.-\left(1+\frac{K\langle p_i,x\rangle}
                {1+\sqrt{1+K\|p_i\|^2}}\right)\langle p_i,z_i\rangle
                \right].
            \end{aligned}
        \end{equation*}
        Substituting these two identities into (3) gives the fourth equality.
        \item Since $p_i\parallel z_i$,
        \begin{equation*}
            \langle p_i,x\rangle\langle p_i,z_i\rangle
            =\|p_i\|^2\langle x,z_i\rangle.
        \end{equation*}
        Using this identity, the directional term in (4) becomes
        \begin{equation*}
            \begin{aligned}
                &\sqrt{1+K\|p_i\|^2}\langle x,z_i\rangle
                -\left(1+\frac{K\langle p_i,x\rangle}
                {1+\sqrt{1+K\|p_i\|^2}}\right)\langle p_i,z_i\rangle\\
                &=\left(
                \sqrt{1+K\|p_i\|^2}
                -\frac{K\|p_i\|^2}{1+\sqrt{1+K\|p_i\|^2}}
                \right)\langle x,z_i\rangle-\langle p_i,z_i\rangle\\
                &=\langle x,z_i\rangle-\langle p_i,z_i\rangle
                =\langle x-p_i,z_i\rangle.
            \end{aligned}
        \end{equation*}
        Here, the second equality uses
        \begin{equation*}
            \sqrt{1+K\|p_i\|^2}
            -\frac{K\|p_i\|^2}{1+\sqrt{1+K\|p_i\|^2}}=1.
        \end{equation*}
        This gives the fifth equality.
        \item The remaining scalar factors satisfy
        \begin{equation*}
            \begin{aligned}
                \frac{1}
                {\frac{\sqrt{|K|\left[\|x-p_i\|^2
                +K\left(\|p_i\|^2\|x\|^2-\langle p_i,x\rangle^2\right)\right]}}
                {1+K\langle p_i,x\rangle}}
                \frac{1}{1+K\langle p_i,x\rangle}
                =\frac{1}
                {\sqrt{|K|\left[\|x-p_i\|^2
                +K\left(\|p_i\|^2\|x\|^2-\langle p_i,x\rangle^2\right)\right]}}.
            \end{aligned}
        \end{equation*}
        This gives the sixth equality.
        \item The exponential map at the origin gives
        \begin{equation*}
            p_i=\frac{\tanh(\sqrt{|K|}\gamma_i)}{\sqrt{|K|}}[z_i]
        \end{equation*}
        Define
        \begin{equation*}
            \begin{aligned}
                D_i^{\mathrm{K}}
                &=1-\sqrt{|K|}\tanh(\sqrt{|K|}\gamma_i)\langle x,[z_i]\rangle,\\
                Q_i^{\mathrm{K}}
                &=\sqrt{\frac{
                \|x\|^2+\frac{\tanh^2(\sqrt{|K|}\gamma_i)}{|K|}
                -\frac{2\tanh(\sqrt{|K|}\gamma_i)}{\sqrt{|K|}}\langle x,[z_i]\rangle
                -\tanh^2(\sqrt{|K|}\gamma_i)
                \left(\|x\|^2-\langle x,[z_i]\rangle^2\right)}
                {(D_i^{\mathrm{K}})^2}}.
            \end{aligned}
        \end{equation*}
        When $Q_i^{\mathrm{K}}=0$, the quotient $\tanh^{-1}(\sqrt{|K|}Q_i^{\mathrm{K}})/(\sqrt{|K|}Q_i^{\mathrm{K}})$ is defined as $1$ by continuity. Substituting $p_i$ into (6) gives
        \begin{equation*}
            \begin{aligned}
                1+K\langle p_i,x\rangle&=D_i^{\mathrm{K}},\\
                \langle x-p_i,z_i\rangle
                &=\|z_i\|\left(
                \langle x,[z_i]\rangle
                -\frac{\tanh(\sqrt{|K|}\gamma_i)}{\sqrt{|K|}}
                \right),\\
                \|(-p_i)\Eoplus x\|&=Q_i^{\mathrm{K}}.
            \end{aligned}
        \end{equation*}
        These identities give the seventh equality.
    \end{enumerate}

    \textbf{Hyperboloid model.}
    For $x=(x_1,x_s)$ and $p_i=(p_{i1},p_{is})$, we have
    \begin{equation*}
        \begin{aligned}
            v_i(x)&\stackrel{(1)}{=}
            \left\langle\rielog_{p_i}(x),\pt{e}{p_i}(0,z_i)\right\rangle_{\calL}\\
            &\stackrel{(2)}{=}
            \frac{\cosh^{-1}\left(K\langle p_i,x\rangle_{\calL}\right)}
            {\sqrt{\left(K\langle p_i,x\rangle_{\calL}\right)^2-1}}
            \left\langle
            x-K\langle p_i,x\rangle_{\calL}p_i,
            \pt{e}{p_i}(0,z_i)
            \right\rangle_{\calL}\\
            &\stackrel{(3)}{=}
            \frac{\cosh^{-1}\left(K\langle p_i,x\rangle_{\calL}\right)}
            {\sqrt{\left(K\langle p_i,x\rangle_{\calL}\right)^2-1}}
            \left\langle x,\pt{e}{p_i}(0,z_i)\right\rangle_{\calL}\\
            &\stackrel{(4)}{=}
            \frac{\cosh^{-1}\left(K\langle p_i,x\rangle_{\calL}\right)}
            {\sqrt{\left(K\langle p_i,x\rangle_{\calL}\right)^2-1}}
            \left\langle
            x,
            \left(
            \sqrt{|K|}\langle p_{is},z_i\rangle,
            z_i+\frac{|K|\langle p_{is},z_i\rangle}
            {1+\sqrt{|K|}p_{i1}}p_{is}
            \right)
            \right\rangle_{\calL}\\
            &\stackrel{(5)}{=}
            \frac{\cosh^{-1}\left(K\langle p_i,x\rangle_{\calL}\right)}
            {\sqrt{\left(K\langle p_i,x\rangle_{\calL}\right)^2-1}}
            \left[
            \langle x_s,z_i\rangle
            +\frac{|K|\langle p_{is},z_i\rangle\langle x_s,p_{is}\rangle}
            {1+\sqrt{|K|}p_{i1}}
            -\sqrt{|K|}x_1\langle p_{is},z_i\rangle
            \right]\\
            &\stackrel{(6)}{=}
            \frac{\cosh^{-1}\left(K\langle p_i,x\rangle_{\calL}\right)}
            {\sqrt{\left(K\langle p_i,x\rangle_{\calL}\right)^2-1}}
            \left[
            \left(1+\frac{|K|\|p_{is}\|^2}
            {1+\sqrt{|K|}p_{i1}}\right)\langle x_s,z_i\rangle
            -\sqrt{|K|}x_1\langle p_{is},z_i\rangle
            \right]\\
            &\stackrel{(7)}{=}
            \sqrt{|K|}
            \frac{\cosh^{-1}\left(K\langle p_i,x\rangle_{\calL}\right)}
            {\sqrt{\left(K\langle p_i,x\rangle_{\calL}\right)^2-1}}
            \left(p_{i1}\langle x_s,z_i\rangle-x_1\langle p_{is},z_i\rangle\right)\\
            &\stackrel{(8)}{=}\resizebox{0.89\textwidth}{!}{$\displaystyle
            \|z_i\|
            \frac{
            \cosh^{-1}\left(
            \sqrt{|K|}\left[
            \cosh(\sqrt{|K|}\gamma_i)x_1
            -\sinh(\sqrt{|K|}\gamma_i)\langle x_s,[z_i]\rangle
            \right]
            \right)}
            {\sqrt{
            |K|\left[
            \cosh(\sqrt{|K|}\gamma_i)x_1
            -\sinh(\sqrt{|K|}\gamma_i)\langle x_s,[z_i]\rangle
            \right]^2-1}}
            \left[
            \cosh(\sqrt{|K|}\gamma_i)\langle x_s,[z_i]\rangle
            -\sinh(\sqrt{|K|}\gamma_i)x_1
            \right]$}.
        \end{aligned}
    \end{equation*}

    The above comes from the following.
    \begin{enumerate}[label=(\arabic*)]
        \item The first equality is the HFC-H expression in \cref{app:thm:hyperboloid_fc}.
        \item From \cref{app:tab:riem_operators_hyperbolic},
        \begin{equation*}
            \rielog_{p_i}(x)=
            \frac{\cosh^{-1}\left(K\langle p_i,x\rangle_{\calL}\right)}
            {\sinh\left(\cosh^{-1}\left(K\langle p_i,x\rangle_{\calL}\right)\right)}
            \left(x-K\langle p_i,x\rangle_{\calL}p_i\right).
        \end{equation*}
        Since $\sinh(\cosh^{-1}(t))=\sqrt{t^2-1}$ for $t\geq1$, substituting this logarithmic map into (1) gives the second equality.
        \item Parallel transport maps $(0,z_i)\in T_e\bbh{n}$ to $T_{p_i}\bbh{n}$. Therefore,
        \begin{equation*}
            \left\langle p_i,\pt{e}{p_i}(0,z_i)\right\rangle_{\calL}=0.
        \end{equation*}
        Consequently,
        \begin{equation*}
            \begin{aligned}
                &\left\langle
                x-K\langle p_i,x\rangle_{\calL}p_i,
                \pt{e}{p_i}(0,z_i)
                \right\rangle_{\calL}\\
                &=\left\langle x,\pt{e}{p_i}(0,z_i)\right\rangle_{\calL},
            \end{aligned}
        \end{equation*}
        which gives the third equality.
        \item The parallel transport in \cref{app:tab:riem_operators_hyperbolic} gives
        \begin{equation*}
            \pt{e}{p_i}(0,z_i)
            =(0,z_i)-
            \frac{K\langle p_i,(0,z_i)\rangle_{\calL}}
            {1+K\langle e,p_i\rangle_{\calL}}(e+p_i).
        \end{equation*}
        Using $e=(1/\sqrt{|K|},0)$, we have
        \begin{equation*}
            \langle p_i,(0,z_i)\rangle_{\calL}=\langle p_{is},z_i\rangle,
            \qquad
            1+K\langle e,p_i\rangle_{\calL}=1+\sqrt{|K|}p_{i1}.
        \end{equation*}
        Hence,
        \begin{equation*}
            \pt{e}{p_i}(0,z_i)=
            \left(
            \sqrt{|K|}\langle p_{is},z_i\rangle,
            z_i+\frac{|K|\langle p_{is},z_i\rangle}
            {1+\sqrt{|K|}p_{i1}}p_{is}
            \right),
        \end{equation*}
        which gives the fourth equality.
        \item Expanding the Lorentz inner product in (4) gives
        \begin{equation*}
            \begin{aligned}
                &\left\langle
                x,
                \left(
                \sqrt{|K|}\langle p_{is},z_i\rangle,
                z_i+\frac{|K|\langle p_{is},z_i\rangle}
                {1+\sqrt{|K|}p_{i1}}p_{is}
                \right)
                \right\rangle_{\calL}\\
                &=\langle x_s,z_i\rangle
                +\frac{|K|\langle p_{is},z_i\rangle\langle x_s,p_{is}\rangle}
                {1+\sqrt{|K|}p_{i1}}
                -\sqrt{|K|}x_1\langle p_{is},z_i\rangle.
            \end{aligned}
        \end{equation*}
        This gives the fifth equality.
        \item Since $p_i=\rieexp_e(\gamma_i[(0,z_i^\top)^\top])$, the spatial component satisfies $p_{is}\parallel z_i$. Thus,
        \begin{equation*}
            \langle p_{is},z_i\rangle\langle x_s,p_{is}\rangle
            =\|p_{is}\|^2\langle x_s,z_i\rangle,
        \end{equation*}
        which gives the sixth equality.
        \item Since $p_i\in\bbh{n}$,
        \begin{equation*}
            \|p_{is}\|^2-p_{i1}^2=\frac{1}{K}=-\frac{1}{|K|}.
        \end{equation*}
        Therefore,
        \begin{equation*}
            1+\frac{|K|\|p_{is}\|^2}{1+\sqrt{|K|}p_{i1}}
            =1+\frac{|K|p_{i1}^2-1}{1+\sqrt{|K|}p_{i1}}
            =\sqrt{|K|}p_{i1}.
        \end{equation*}
        Substituting this identity into (6) gives the seventh equality.
        \item The exponential map at $e$ gives
        \begin{equation*}
            p_i=\left(
            \frac{\cosh(\sqrt{|K|}\gamma_i)}{\sqrt{|K|}},
            \frac{\sinh(\sqrt{|K|}\gamma_i)}{\sqrt{|K|}}[z_i]
            \right).
        \end{equation*}
        Hence,
        \begin{equation*}
            \begin{aligned}
                K\langle p_i,x\rangle_{\calL}
                &=\sqrt{|K|}\left[
                \cosh(\sqrt{|K|}\gamma_i)x_1
                -\sinh(\sqrt{|K|}\gamma_i)\langle x_s,[z_i]\rangle
                \right],\\
                \sqrt{|K|}\left(
                p_{i1}\langle x_s,z_i\rangle
                -x_1\langle p_{is},z_i\rangle
                \right)
                &=\|z_i\|\left[
                \cosh(\sqrt{|K|}\gamma_i)\langle x_s,[z_i]\rangle
                -\sinh(\sqrt{|K|}\gamma_i)x_1
                \right].
            \end{aligned}
        \end{equation*}
        Substituting these two identities into (7) gives the eighth equality.
    \end{enumerate}
    When $x=p_i$, we have $K\langle p_i,x\rangle_{\calL}=1$. The quotient $\cosh^{-1}(K\langle p_i,x\rangle_{\calL})/\sqrt{(K\langle p_i,x\rangle_{\calL})^2-1}$ is defined as $1$ by continuity. The remaining factor in (7) vanishes, and hence $v_i(p_i)=0$.
\end{proof}

\linkofproof{thm:spd_fc}
\subsection{Proof of \cref{thm:spd_fc}}
\label{app:subsec:prf_spd_fc}
\begin{proof}
    In the following proof, we first present the expressions of several operators under different metrics, including $v_{ij}(S)$, standard orthonormal bases, and Riemannian exponential maps at the origin. Then, we prove the theorem. In this proof, we follow the notation of the theorem.

    \textbf{$v_{ij}(S)$ under different metrics:}
    The expressions are implied by \citet[Thm. 4.2]{chen2024rmlr}:
    \begin{align}
        \label{app:eq:v_ij_lem}
        \text{LEM}:
        &  \left\langle \log(S)-\log(P_{ij}), Z_{ij} \right\rangle^{\alphabeta}, \\
        \label{app:eq:v_ij_aim}
        \text{AIM}:
        &\left\langle \log(P_{ij}^{-\frac{1}{2}} S P_{ij}^{-\frac{1}{2}}), Z_{ij} \right\rangle^{\alphabeta}, \\
        \text{PEM}:
         &\frac{1}{\theta} \left\langle S^\theta-P_{ij}^\theta, Z_{ij} \right\rangle^{\alphabeta}, \\
        \label{app:eq:v_ij_lcm}
        \text{LCM}:
        & \left\langle \lfloor K\rfloor - \lfloor L_{ij} \rfloor + \dlog(\bbK\bbL_{ij}^{-1}), \lfloor Z_{ij} \rfloor + \frac{1}{2}\bbZ_{ij} \right\rangle, \\
        \text{BWM}:
        & \frac{1}{2} \left\langle (P_{ij}S)^{\frac{1}{2}} + (SP_{ij})^{\frac{1}{2}} -2P_{ij}, \calL_{P_{ij}}(L_{ij} Z_{ij} L_{ij}^\top) \right\rangle.
    \end{align}

    \textbf{Standard orthonormal bases: }
    Next, we show the standard orthonormal bases over $T_I\spd{n}$ under different metrics. As indicated by \cref{app:tab:riem_operators_lem_aim_pem,app:tab:riem_operators_bwm_lc}, the inner products for any $V, W \in T_I\spd{n}$ are
    \begin{align}
        \label{app:eq:inner_i_pem_lem_aim}
        \text{LEM, AIM, and PEM}:& \alphabetainner{V}{W},\\
        \label{app:eq:inner_i_lcm}
        \text{LCM}:&
         \langle \lfloor V \rfloor +\frac{1}{2}\bbV, \lfloor W \rfloor +\frac{1}{2}\bbW \rangle, \\
         \label{app:eq:inner_i_bwm}
        \text{BWM}:&
         \frac{1}{4} \langle V,W\rangle
    \end{align}
    The above comes from the following.
    \begin{enumerate}[label=(\arabic*)]
        \item
        \cref{app:eq:inner_i_pem_lem_aim} comes from $\log_{*,I}(V)=V$ and $\pow_{\theta*,I}(V)=\theta V$;
        \item
        \cref{app:eq:inner_i_lcm} comes from $\chol_{*,I}(V) = \lfloor V \rfloor + \frac{1}{2}\bbV$;
        \item
        \cref{app:eq:inner_i_bwm} comes from $\calL_{I}[V] = \frac{1}{2}V$.
    \end{enumerate}

    As shown by \citet[Thm.2.1]{thanwerdas2023n}, $F_{\sqrt{\alpha+n\beta}, \sqrt{\alpha}}: \{\sym{n}, \alphabetainner{\cdot}{\cdot}\} \rightarrow \{\sym{n},\inner{\cdot}{\cdot}\}$ is the linear isometry pulling the standard inner product back to the $\orth{n}$-invariant one:
    \begin{equation} \label{app:eq:f_pq}
        F_{\sqrt{\alpha+n\beta}, \sqrt{\alpha}}(X)=\sqrt{\alpha} X+\frac{ \sqrt{\alpha+ n \beta}-\sqrt{\alpha}}{n} \tr(X) I_n, \forall X \in \sym{n}.
    \end{equation}
    Given any $Y \in \sym{n}$, its inverse map is
    \begin{equation}
        \label{app:eq:inv_f_p_q}
        \begin{aligned}
            \left(F_{\sqrt{\alpha+n\beta}, \sqrt{\alpha}}\right)^{-1} (Y)
            &= \frac{1}{\sqrt{\alpha}} \left\{ Y - \left( \frac{\sqrt{1+n\frac{\beta}{\alpha}}-1}{n} \frac{1}{\sqrt{1+n\frac{\beta}{\alpha}}} \right) \tr(Y) I \right\} \\
            &= \frac{1}{\sqrt{\alpha}} \left\{ Y - \frac{1}{n} \left( 1 - \frac{1}{\sqrt{1+n\frac{\beta}{\alpha}}} \right) \tr(Y) I \right\} \\
            &= \frac{1}{\sqrt{\alpha}}Y - \frac{1}{n} \left( \frac{1}{\sqrt{\alpha}} - \frac{1}{\sqrt{\alpha+n\beta}} \right)\tr(Y)I.
        \end{aligned}
    \end{equation}

    The standard orthonormal bases over the Euclidean spaces $\{\sym{n},\inner{\cdot}{\cdot}\}$ and $\{\tril{n},\inner{\cdot}{\cdot}\}$ are
    \begin{align}
        \{\sym{n},\inner{\cdot}{\cdot}\}&: U^{\mathrm{sym}}_{ij} =
        \begin{cases}
        E_{ii}, & \text{if } i = j, \\
        \frac{E_{ij}+ E_{ji}}{\sqrt{2}}, & \text{if } i > j.
        \end{cases}\\
        \{\tril{n},\inner{\cdot}{\cdot}\}&: U^{\mathrm{tril}}_{ij} = E_{ij}, \forall i \geq j
    \end{align}
    where $i \geq j, i,j = 1, \cdots, n$, and $\{E_{ij}\}^n_{i,j=1}$ are standard basis matrices, with the $(k,l)$ element defined by
    \begin{equation}
        \left(E_{i j}\right)_{k l}=
        \begin{cases}
        1 & \text { if } k=i \text { and } l=j, \\
        0 & \text { otherwise. }
        \end{cases}
    \end{equation}

    The standard orthonormal bases with respect to \cref{app:eq:inner_i_pem_lem_aim,app:eq:inner_i_lcm,app:eq:inner_i_bwm} are
    \begin{align}
        \text{LEM, AIM, PEM}:&
         U^{\alphabeta}_{ij} \stackrel{(1)}{=}
        \begin{cases}
            \frac{1}{\sqrt{\alpha}}E_{ii}- \frac{1}{n} \left( \frac{1}{\sqrt{\alpha}} - \frac{1}{\sqrt{\alpha+n\beta}} \right)I, & \text{if } i = j, \\
            \frac{E_{ij}+ E_{ji}}{\sqrt{2\alpha}}, & \text{if } i > j.
        \end{cases}\\
        \text{LCM}:&
        U^{\mathrm{LC}}_{ij}  \stackrel{(2)}{=}
        \begin{cases}
            2E_{ii}, & \text{if } i = j, \\
            E_{ij}, & \text{if } i > j.
        \end{cases}\\
        \text{BWM} :&
        U^{\mathrm{BW}}_{ij}  \stackrel{(3)}{=}
        \begin{cases}
            2E_{ii}, & \text{if } i = j, \\
            \sqrt{2}(E_{ij}+ E_{ji}), & \text{if } i > j.
        \end{cases}
    \end{align}
    Here, $i \geq j, i,j = 1, \cdots, n$.
    The above comes from the following.
    \begin{enumerate}[label=(\arabic*)]
        \item
        $U^{\alphabeta}_{ij}=\left(F_{\sqrt{\alpha+n\beta}, \sqrt{\alpha}}\right)^{-1} \left(U^{\mathrm{sym}}_{ij} \right)$, with $F_{\sqrt{\alpha+n\beta}, \sqrt{\alpha}}: \sym{n} \rightarrow \sym{n}$ as the linear isometry pulling back the Frobenius inner product to the $\orth{n}$-invariant inner product;
        \item
        $f^{\mathrm{LC}}(V)=\lfloor V \rfloor +\frac{1}{2}\bbV: \tril{n} \rightarrow \tril{n}$ is the linear isometry pulling the Frobenius inner product to \cref{app:eq:inner_i_lcm};
        \item
        $f^{\mathrm{BW}}(V)=\frac{1}{2}V: \sym{n} \rightarrow \sym{n}$ is the linear isometry pulling the Frobenius inner product back to \cref{app:eq:inner_i_bwm};
    \end{enumerate}

    \textbf{Riemannian exponential maps:}
    Next, we show $\rieexp_I$ under different metrics
    \begin{align}
        \label{app:eq:exp_i_lem_aim}
        \text{LEM and AIM}:
        & \rieexp_I(V)\stackrel{(1)}{=}\exp(V),\\
        \label{app:eq:exp_i_pem}
        \text{PEM}:
        & \rieexp_I(V)\stackrel{(2)}{=}\left(I + \theta V \right)^{\frac{1}{\theta}},\\
        \label{app:eq:exp_i_lcm}
        \text{LCM}:
        & \rieexp_I(V)\stackrel{(3)}{=} \left(\lfloor V \rfloor + \dexp \left(\frac{1}{2}\bbV \right) \right)\left(\lfloor V \rfloor + \dexp \left(\frac{1}{2}\bbV \right) \right)^{\top},\\
        \text{BWM}:
        & \rieexp_I(V)\stackrel{(4)}{=} I+V+\frac{1}{4}V^2 = \left(I+\frac{1}{2}V \right)^2,
    \end{align}
    The above comes from the following.
    \begin{enumerate}[label=(\arabic*)]
        \item
        $\log_{*,I}(V)=V$ and $\log{I}=\bbzero$;
        \item
        $\pow_{\theta*,I}(V)=\theta V$;
        \item
        $\chol_{*,I}(V) = \lfloor V \rfloor + \frac{1}{2}\bbV$;
        \item
        $\calL_{I}[V]=\frac{1}{2}V$.
    \end{enumerate}

    Now, we can prove the results metric by metric.

    \textbf{LEM: }
    \begin{equation}
        \begin{aligned}
             &\rieexp_{I} \left( \sum_{i,j=1,i \geq j}^{m}  v_{ij}^{\LE}(S) U^{\alphabeta}_{ij} \right) \\
             &=\exp \left( \sum_{i,j=1,i \geq j}^{m} \left( \log(S)-\log(P_{ij}), Z_{ij} \rangle^{\alphabeta} U^{\alphabeta}_{ij} \right) \right).
        \end{aligned}
    \end{equation}

    \textbf{AIM: }
    \begin{equation}
        \begin{aligned}
             &\rieexp_{I} \left( \sum_{i,j=1,i \geq j}^{m}  v_{ij}^{\AI}(S) U^{\alphabeta}_{ij} \right) \\
             &=\exp \left( \sum_{i,j=1,i \geq j}^{m} \left( \langle \log(P_{ij}^{-\frac{1}{2}} S P_{ij}^{-\frac{1}{2}}), Z_{ij} \rangle^{\alphabeta} U^{\alphabeta}_{ij} \right) \right).
        \end{aligned}
    \end{equation}

    \textbf{PEM: }
    \begin{equation}
        \begin{aligned}
             &\rieexp_{I} \left( \sum_{i,j=1,i \geq j}^{m}  v_{ij}^{\PE}(S) U^{\alphabeta}_{ij} \right) \\
             &= \left(I + \theta\sum_{i,j=1,i \geq j}^{m} \left( \frac{1}{\theta} \langle S^\theta-P_{ij}^\theta, Z_{ij} \rangle^{\alphabeta} U^{\alphabeta}_{ij} \right) \right)^{\frac{1}{\theta}}\\
             &= \left(I + \sum_{i,j=1,i \geq j}^{m} \left( \langle S^\theta-P_{ij}^\theta, Z_{ij} \rangle^{\alphabeta} U^{\alphabeta}_{ij} \right) \right)^{\frac{1}{\theta}}.
        \end{aligned}
    \end{equation}

    \textbf{LCM: }
    \begin{equation}
        \label{app:eq:lcm_prf_fc}
        \begin{aligned}
             &\rieexp_{I} \left( \sum_{i,j=1,i \geq j}^{m}  v_{ij}^{\LC}(S) U^{\LC}_{ij} \right) \\
             &= \left(\lfloor V^\LC \rfloor + \dexp \left(\frac{1}{2}\bbV^\LC \right) \right)\left(\lfloor V^\LC \rfloor + \dexp \left(\frac{1}{2}\bbV^\LC \right) \right)^\top,
        \end{aligned}
    \end{equation}
    with
    \begin{equation}
    \label{app:eq:lcm_prf_v_lc}
        \begin{aligned}
            V^\LC
            &= \sum_{i,j=1,i \geq j}^{m}  v_{ij}^{\LC}(S) U^{\LC}_{ij}\\
            &= \sum_{i,j=1,i \geq j}^{m}  \left( \left\langle \lfloor K\rfloor - \lfloor L_{ij} \rfloor + \dlog(\bbK\bbL_{ij}^{-1}), \lfloor Z_{ij} \rfloor + \frac{1}{2}\bbZ_{ij} \right\rangle \right) U^{\LC}_{ij}\\
        \end{aligned}
    \end{equation}

    \textbf{BWM: }
   \begin{equation}
        \begin{aligned}
             &\rieexp_{I} \left( \sum_{i,j=1,i \geq j}^{m}  v_{ij}^{\BW}(S) U^{\BW}_{ij} \right) \\
            &= \left(I+\frac{1}{2}V^\BW \right)^2,\\
        \end{aligned}
    \end{equation}
    with $V^\BW$ defined as
    \begin{equation}
        V^\BW= \sum_{i,j=1,i \geq j}^{m} \left\{\frac{1}{2} \left\langle \left(P_{ij}S \right)^{\frac{1}{2}} + \left(S P_{ij} \right)^{\frac{1}{2}} -2 P_{ij}, \calL_{P_{ij}}(L_{ij} Z_{ij} L_{ij}^\top) \right\rangle U^{\mathrm{BW}}_{ij} \right\}.
    \end{equation}
\end{proof}

\linkofproof{prop:spd_fc_lie_hom}
\subsection{Proof of \cref{prop:spd_fc_lie_hom}}
\label{app:subsec:prf:spd_fc_lie_hom}
We begin by recalling two vector structures on the SPD manifold. Next, we identify the expression for the linear homomorphisms. Finally, we present our proof.

We define a map $\phi(\cdot): \spd{n} \rightarrow \tril{n}$ as
\begin{equation}
    \phi(S) = \lfloor L \rfloor + \dlog(\bbL),
\end{equation}
where $P=LL^\top$ is the Cholesky decomposition. For any $P,Q \in \spd{n}$ and $t \in \bbRscalar$, the vector structures over the SPD manifold are defined as
\begin{align}
    P \oplusle Q &= \exp( \log(P) + \log(Q))\\
    t \odotle P &= \exp( t\log(P) ) = P^t\\
    P \opluslc Q &= \phi^{-1}( \phi(P) + \phi(Q))\\
    t \odotlc P &= \phi^{-1}( t\phi(P) ) = P^t
\end{align}
As shown by \citet{arsigny2005fast,chen2024adaptive}, $\{ \spd{n}, \oplusle, \odotle \}$ and $\{ \spd{n}, \opluslc, \odotlc\}$ forms vector spaces. We further present the associated linear homomorphisms.

\begin{lemma}[SPD Homomorphisms]
    Given any homomorphisms
    \begin{align}
        \zeta^{\LE}(\cdot): \{ \spd{n}, \oplusle, \odotle \} \rightarrow \{ \spd{m}, \oplusle, \odotle \}, \\
        \zeta^{\LC}(\cdot): \{ \spd{n}, \opluslc, \odotlc \} \rightarrow \{ \spd{m}, \opluslc, \odotlc \},
    \end{align}
    they can be expressed as
    \begin{align}
        \zeta^{\LE} &= \exp \circ g \circ \log,\\
        \zeta^{\LC} &= \phi^{-1} \circ f \circ \phi,
    \end{align}
    where $f: \tril{n} \rightarrow \tril{m}$ and $g: \sym{n} \rightarrow \sym{m}$ are linear homomorphisms over the Euclidean space $\tril{n}$ and $\sym{n}$, respectively.
\end{lemma}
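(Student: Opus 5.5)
The plan is a transport-of-structure argument. The maps $\log:\spd{n}\to\sym{n}$ and $\phi:\spd{n}\to\tril{n}$ are, by the very definitions of $\oplusle,\odotle$ and $\opluslc,\odotlc$, vector space isomorphisms. For any $P,Q\in\spd{n}$ and $t\in\bbRscalar$ we have $\log(P\oplusle Q)=\log(P)+\log(Q)$ and $\log(t\odotle P)=t\log(P)$. Analogously, $\phi(P\opluslc Q)=\phi(P)+\phi(Q)$ and $\phi(t\odotlc P)=t\phi(P)$. Both maps are bijections: $\log$ onto $\sym{n}$ with inverse $\exp$, and $\phi$ onto $\tril{n}$, since a lower triangular matrix with positive diagonal is uniquely recovered from its strictly lower part and the logarithm of its diagonal. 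I will state this explicitly first. The same holds with $n$ replaced by $m$.

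Next, given a homomorphism $\zeta^{\LE}$ (that is, a linear map between these vector spaces), I define $g:=\log\circ\zeta^{\LE}\circ\exp:\sym{n}\to\sym{m}$. For $V,W\in\sym{n}$, I will compute
$g(V+W)=\log\zeta^{\LE}(\exp V\oplusle \exp W)=\log\big(\zeta^{\LE}(\exp V)\oplusle\zeta^{\LE}(\exp W)\big)=g(V)+g(W)$.
Homogeneity $g(tV)=t\,g(V)$ follows the same way, using $\exp(tV)=t\odotle\exp(V)$. Hence $g$ is a Euclidean linear map, and $\zeta^{\LE}=\exp\circ g\circ\log$ follows because $\exp\circ\log=\mathrm{id}$ on $\spd{m}$ and $\log\circ\exp=\mathrm{id}$ on $\sym{n}$.

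The LCM case is identical. I set $f:=\phi\circ\zeta^{\LC}\circ\phi^{-1}:\tril{n}\to\tril{m}$ and verify additivity and homogeneity through the displayed identities for $\opluslc$ and $\odotlc$. Conversely, every composition $\exp\circ g\circ\log$ or $\phi^{-1}\circ f\circ\phi$ with $g,f$ linear is a homomorphism, by the same identities read backwards. I will mention this direction so that the characterization is complete and usable in the proof of \cref{prop:spd_fc_lie_hom}.

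The only point needing care is the bijectivity of $\phi$. In particular, $\phi^{-1}(X)=\big(\lfloor X\rfloor+\exp(\bbD(X))\big)\big(\lfloor X\rfloor+\exp(\bbD(X))\big)^\top$, where the exponential acts on the diagonal entries, and uniqueness of the Cholesky factor with positive diagonal guarantees that $\phi\circ\phi^{-1}=\mathrm{id}$. Beyond this, the argument is routine.
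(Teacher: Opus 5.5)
Your proposal is correct and follows the same route as the paper. Both arguments conjugate $\zeta^{\LE}$ and $\zeta^{\LC}$ by the vector-space isomorphisms $\log:\spd{n}\to\sym{n}$ and $\phi:\spd{n}\to\tril{n}$, so that $g=\log\circ\zeta^{\LE}\circ\exp$ and $f=\phi\circ\zeta^{\LC}\circ\phi^{-1}$ are Euclidean linear maps; the paper simply cites \citet{chen2024adaptive} for the isomorphism property, while you verify additivity, homogeneity, and the bijectivity of $\phi$ directly from the definitions of $\oplusle,\odotle,\opluslc,\odotlc$, which is a sound and inessential elaboration.
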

\begin{proof}
    As shown by \citet{chen2024adaptive}, $\log(\cdot)$ is the linear isomorphism from $\{ \spd{n}, \oplusle, \odotle \}$ to the Euclidean space $\sym{n}$ and $\phi$ is the linear isomorphism from $\{ \spd{n}, \opluslc, \odotlc\}$ to the Euclidean space $\tril{n}$. Therefore, any linear homomorphisms over these two linear spaces have the following forms:
    \begin{align}
        \label{app:eq:linear_hom_lem}
        \zeta^{\LE} &= \log^{-1} f \circ \log,\\
        \label{app:eq:linear_hom_lcm}
        \zeta^{\LC} &= \phi^{-1} g \circ \phi,
    \end{align}
    where $f: \sym{n} \rightarrow \sym{m}$ and $g: \tril{n} \rightarrow \tril{m} $ are linear homomorphisms over the Euclidean space $\sym{n}$ and $\tril{n}$, respectively.
\end{proof}

With all the above theoretical preparation, we begin to present our proof.
\begin{proof}
    Given an SPD matrix $S \in \spd{n}$, \cref{app:eq:linear_hom_lem} can be rewritten as
    \begin{equation}
        \begin{aligned}
            \zeta^{\LE}(S)
            &\stackrel{(1)}{=} \exp\left( \sum_{i,j=1,i \geq j}^{m} \left\langle \log(S), A_{ij} \right\rangle U^{\mathrm{sym}}_{ij} \right)\\
            &\stackrel{(2)}{=} \exp\left( \sum_{i,j=1,i \geq j}^{m} \left\langle \log(S), A_{ij} \right\rangle U^{(1,0)}_{ij} \right)\\
            &\stackrel{(3)}{=} \calF^{\LE}(S;\bfA,\mathbf{I})\\
        \end{aligned}
    \end{equation}
    where $\bfA=\{A_{ij} \in \sym{n} \}_{i,j=1,i \geq j}^{m}$ and $\mathbf{I}=\{I, \cdots, I\}$.
    The above comes from the following.
    \begin{enumerate}[label=(\arabic*)]
        \item
        The linear map $f$ can be represented by $\{A_{ij} \in \sym{n} \}_{i,j=1,i \geq j}^{m}$ under the bases $\{U^{\mathrm{sym}}_{ij}\}_{i,j=1,i \geq j}^{n}$ over $\sym{n}$ and $\{U^{\mathrm{sym}}_{ij}\}_{i,j=1,i \geq j}^{m}$ over $\sym{m}$;
        \item
        $\{U^{\mathrm{sym}}_{ij}\}_{i,j=1,i \geq j}^{m} = \{U^{(1,0)}_{ij}\}_{i,j=1,i \geq j}^{m}$;
        \item
        $\rieexp_I=\exp$ under LEM.
    \end{enumerate}

    Following the above logic, we have the following for $\{\spd{n}, \opluslc, \odotlc\}$:
    \begin{equation}
        \begin{aligned}
            \zeta^{\LC}(S)
            &\stackrel{(1)}{=} \phi^{-1} \left( \sum_{i,j=1,i \geq j}^{m} \left\langle \phi(S), A_{ij} \right\rangle U^{\mathrm{tril}}_{ij} \right)\\
            &\stackrel{(2)}{=} \calF^\LC(S;\mathbf{Z},\mathbf{I}),\\
        \end{aligned}
    \end{equation}
    where $A_{ij}  \in \tril{n}$ for $i,j=1,\cdots,m, i \geq j$, $\mathbf{Z}=\{Z_{ij} = A_{ij} + \bbD(A_{ij})  \in \tril{n} \}_{i,j=1,i \geq j}^{m}$ and $\mathbf{I}=\{I, \cdots, I\}$.
    The above comes from the following.
    \begin{enumerate}[label=(\arabic*)]
        \item
        The linear map $g$ can be represented by $\{A_{ij}\}_{i,j=1,i \geq j}^{m}$;
        \item
        \cref{eq:spd_fc_lcm} and $v^{\LC}_{ij}$.
    \end{enumerate}
\end{proof}

\linkofproof{thm:gras_fc_onb}
\subsection{Proof of \cref{thm:gras_fc_onb}}

Before presenting our proof, we first discuss some basic facts about the ONB Grassmannian FC layer.

As implied by \cref{app:eq:tangent_vec_grasonb}, any tangent vector $V \in T_{\idonb} \grasonb{p,n}$ can be expressed as
\begin{equation}
    \label{app:eq:tangent_vec_onb}
    V = \left(\begin{array}{c}
    \bbzero\\
    I_{n-p}
    \end{array}\right) B_V
    =\left(\begin{array}{c}
    \bbzero\\
    B_V
    \end{array}\right), \text{ with } B_V \in \bbR{(n-p) \times p}.
\end{equation}

According to \cref{thm:riem_fc,app:eq:tangent_vec_onb}, the ONB Grassmannian FC layer $\calF(\cdot): \grasonb{p,n} \rightarrow \grasonb{q,m}$ has the following form:
\begin{equation}
    Y = \rieexp_{\idonbqm} \left( \sum_{\substack{i=1,\cdots, m-q \\ j=1,\cdots,m}} \left( \langle \rielog_{P_{ij}}(X), A_{ij} \rangle_{P_{ij}} U_{ij} \right) \right),
\end{equation}
where $\{U_{ij}\}$ is an orthonormal basis over $T_{\idonbqm}\grasonb{q,m}$. As discussed in \cref{subsec:fc_parameters}, we model the FC parameters by parallel transport and the Riemannian exponential map:
\begin{align}
    A_{ij} &= \pt{\idonb}{P_{ij}}(Z_{ij}),\\
    P_{ij} &= \rieexp_{\idonb} (\gamma_{ij} [Z_{ij}]),
\end{align}
where $Z_{ij}
    =\left(\begin{array}{c}
    \bbzero\\
    B_{Z_{ij}}
    \end{array}\right) \in T_{\idonb}\grasonb{p,n}$. Therefore, we can model each $P_{ij}$ and $A_{ij}$ by $B_{Z_{ij}} \in \bbR{(n-p)\times p}$ and $\gamma_{ij} \in \bbRscalar$. With the above ingredient, we present the proof in the following.

\begin{proof}
    \textbf{The standard orthonormal basis:}
    As the inner product over $T_{\idonbqm}\grasonb{q,m}$ is the Frobenius matrix inner product \citep[Eq. 3.2]{bendokat2024grassmann}, the standard orthonormal basis over $T_{\idonbqm}\grasonb{q,m}$ is
    \begin{equation}
        U_{ij}
        = \left(\begin{array}{c}
        \bbzero\\
        E_{ij}
        \end{array}\right), 1 \leq i \leq m-q \wedge 1 \leq j \leq q,
    \end{equation}
    where $\{E_{ij}\}$ are standard basis matrices over $\bbR{(m-q) \times q}$

    \textbf{The Riemannian exponential map at the origin: }
    The SVD of $V \in T_{\idonb} \grasonb{p,n}$ can be calculated via the SVD of $B_V$:
    \begin{equation}
        V =\left(\begin{array}{c}
                    \bbzero\\
                    B _V
                \end{array}\right)
        = \left(\begin{array}{c}
                    \bbzero\\
                    O
                \end{array}\right)\Sigma R^\top
        = \left(\begin{array}{c}
                    \bbzero\\
                    O \Sigma R^\top
                \end{array}\right),
    \end{equation}
    where $B_V \stackrel{\mathrm{SVD}}{:=} O \Sigma R^\top$. Therefore, the Riemannian exponential map at $\idonb$ can be simplified as
    \begin{equation}
        \begin{aligned}
            \rieexp_{\idonb}(V)
            &= \left(\begin{array}{c}
                    I_{p}\\
                    \bbzero
                \end{array}\right) R \cos (\Sigma) R^T + \left(\begin{array}{c}
                    \bbzero\\
                    O
                \end{array}\right) \sin (\Sigma) R^T\\
            &= \left(\begin{array}{c}
                    R \cos (\Sigma) R^T\\
                    O \sin (\Sigma) R^T
                \end{array}\right)
        \end{aligned}
    \end{equation}

     \textbf{$v_{ij}(U)$ under the ONB perspective:}
     The ONB parallel transport can be further simplified.
     Given $P \in \grasonb{p,n}$, we have the following for the Riemannian logarithm
     \begin{equation}
         \rielog_{\idonb}(P) =
         \left(\begin{array}{c}
            \bbzero \\
            B_P
            \end{array}\right)
         \stackrel{\text{SVD}}{:=} \left(\begin{array}{c}
            \bbzero \\
            O_P \Sigma_P R_P^\top
            \end{array}\right),
     \end{equation}
     with $B_P \stackrel{\text{SVD}}{:=} O_P \Sigma_P R_P^\top$.
     For $P \in \grasonb{p,n}$ and $Z \in T_{\idonb}\grasonb{p,n}$, the parallel transport can be further simplified:
     \begin{equation*}
        \begin{aligned}
            & \pt{\idonb}{P} (Z) \\
            &= \left(\left(\begin{array}{cc}
            \idonb R_P & \left(\begin{array}{c}
            \bbzero \\
            O_P
            \end{array}\right)
            \end{array}\right)
            \left(\begin{array}{c}
            -\sin (\Sigma_P) \\
            \cos (\Sigma_P)
            \end{array}\right)
            \left(\begin{array}{c}
            \bbzero \\
            O_P
        \end{array}\right)^T + \left(I-\left(\begin{array}{c}
            \bbzero \\
            O_P
        \end{array}\right) \left(\begin{array}{c}
            \bbzero \\
            O_P
        \end{array}\right)^T\right) \right) Z \\
            &=\left( \left( - \left(\begin{array}{c}
            I_p \\
            \bbzero
            \end{array}\right) R_{P} \sin (\Sigma_P) + \left(\begin{array}{c}
            \bbzero \\
            O_P
            \end{array}\right) \cos (\Sigma_P) \right) \left(\begin{array}{c}
            \bbzero \\
            O_P
            \end{array}\right)^T + \left(\begin{array}{cc}
            I_p & \bbzero \\
             \bbzero & I_{n-p} - O_P O_P^\top
            \end{array}\right) \right)Z\\
            &= \left( \left(\begin{array}{c}
            - R_{P} \sin (\Sigma_P) \\
            O_P \cos (\Sigma_P)
            \end{array}\right) \left(\begin{array}{cc}
            \bbzero & O_P^\top
            \end{array}\right) + \left(\begin{array}{cc}
            I_p & \bbzero \\
             \bbzero & I_{n-p} - O_P O_P^\top
            \end{array}\right) \right)Z\\
            &= \left( \left(\begin{array}{cc}
            \bbzero & - R_{P} \sin (\Sigma_P)O_P^\top \\
            \bbzero &  O_P \cos (\Sigma_P)O_P^\top
            \end{array}\right) + \left(\begin{array}{cc}
            I_p & \bbzero \\
             \bbzero & I_{n-p} - O_P O_P^\top
            \end{array}\right) \right)Z\\
            &= \left(\begin{array}{cc}
            I_{p} & - R_{P} \sin (\Sigma_P)O_P^\top \\
            \bbzero &  I_{n-p} + O_P \cos (\Sigma_P)O_P^\top - O_P O_P^\top
            \end{array}\right) Z\\
            &= \left(\begin{array}{cc}
            I_{p} & - R_{P} \sin (\Sigma_P)O_P^\top \\
            \bbzero &  I_{n-p} + O_P \cos (\Sigma_P)O_P^\top - O_P O_P^\top
            \end{array}\right) \left(\begin{array}{c}
            \bbzero\\
            B_Z
            \end{array}\right)\\
            &=
            \left(\begin{array}{c}
            - R_{P} \sin (\Sigma_P)O_P^\top B_Z  \\
            \left(O_P \cos (\Sigma_P)O_P^\top + I_{n-p} - O_P O_P^\top \right)B_Z
            \end{array}\right).
        \end{aligned}
     \end{equation*}

     Combining all the above results, one can directly obtain the results.
\end{proof}

\linkofproof{thm:gras_fc_pp}
\subsection{Proof of \cref{thm:gras_fc_pp}}
\begin{proof}
    First, $v_{ij}(X)$ over the Grassmannian $\graspp{p,n}$ takes the following form:
    \begin{equation}
        \label{app:eq:v_ij_gras_pp}
        \begin{aligned}
            v_{ij}(X)
            &= \inner{ \rielog_{P_{ij}} (X)}{\pt{\idpp}{P_{ij}}(Z_{ij})}_{P_{ij}}\\
            &\stackrel{(1)}{=} \frac{1}{2} \inner{ \rielog _{P_{ij}} (X)}{\pt{\idpp}{P_{ij}}(Z_{ij})}\\
        \end{aligned}
    \end{equation}
    where (1) comes from \cref{app:tab:riem_operators_gras}. Here, each $Z_{ij} \in T_{\idpp}\graspp{p,n}$ and $P_{ij} \in \graspp{p,n}$.

    \textbf{Riemannian logarithm.}
    As shown by \citet[Prop. 3.12]{nguyen2024matrix}, the PP Grassmannian logarithm can be calculated using the ONB logarithm:
    \begin{equation}
        \rielog^{\pp}_{P}(X) = \pi_{*,\pi(P)} \left(\rielog^{\onb}_{\pi^{-1}(P)} (\pi^{-1}(X)) \right),
    \end{equation}
    where $\pi(U) = UU^\top: \grasonb{p,n} \rightarrow \graspp{p,n}$ is the Riemannian isometry, and $\pi_{*,U} (V) = UV^\top + VU^\top$ is the differential map for all $U \in \grasonb{p,n}$ and $V \in T_{U}\grasonb{p,n}$.

    \textbf{Tangent vector and Riemannian exponential map at the identity.}
    As implied by \cref{app:eq:tangent_vec_pp}, any tangent vector at the identity has the following form:
    \begin{equation} \label{app:eq:tangent_vec_id}
        V=
        \left(\begin{array}{cc}
            0 & B^T \\
            B & 0
        \end{array}\right)
        \in T_{\idpp}\graspp{p,n} \text{ with } B \in \bbR{(n-p) \times p}.
    \end{equation}
    The Riemannian exponential map at the identity can also be simplified:
    \begin{equation}
        \label{app:eq:exp_id_simplfied}
        \begin{aligned}
            \rieexp_{\idpp}(V)
            &= \exp([V,\idpp])\idpp\exp(-[V,\idpp])\\
            &= \exp \left(\left(\begin{array}{cc}
            0 & -B^T \\
            B & 0
            \end{array}\right) \right) \idpp \exp \left(\left(\begin{array}{cc}
            0 &  -B^T \\
            B & 0
            \end{array}\right) \right)^{\top} \\
            &=
            \left(\exp \left(\left(\begin{array}{cc}
                0 & -B^T \\
                B & 0
            \end{array}\right) \right) \right)_{1:p}
            \left(\left(\exp \left(\left(\begin{array}{cc}
                0 & -B^T \\
                B & 0
            \end{array}\right) \right) \right)_{1:p} \right)^\top
        \end{aligned}
    \end{equation}
    with $(\cdot)_{1:p}$ being the first-$p$ columns of the input square matrix.

    \textbf{Parallel transport starting at the identity.}
    The parallel transport along geodesic from $\idpp$ to $P \in \graspp{p,n}$ can also be simplified. For any $V \in T_{\idpp} \graspp{p,n}$, denoting $\bar{P}=\rielog_{\idpp}(P)$, we have the following:
    \begin{equation}
        \label{app:eq:pt_simplified}
        \begin{aligned}
            \pt{\idpp}{P}(V)
            &\stackrel{(1)}{=} \exp \left(\left[\bar{P}, \idpp \right] \right) V \exp \left( -\left[\bar{P}, \idpp \right] \right)\\
            &\stackrel{(2)}{=}
            \exp \left(\left(\begin{array}{cc}
            0 & -B_{P}^T \\
            B_{P} & 0
            \end{array}\right) \right)
            V
            \exp \left(\left(\begin{array}{cc}
            0 & -B_{P}^T \\
            B_{P} & 0
            \end{array}\right) \right)^\top
        \end{aligned}
    \end{equation}
    The above derivation comes from the following.
    \begin{enumerate}[label=(\arabic*)]
        \item
        \cref{app:tab:riem_operators_gras};
        \item
        $\bar{P}=
        \left(\begin{array}{cc}
            0 & B_{P}^T \\
            B_{P} & 0
            \end{array}\right)
        $
    \end{enumerate}
    \textbf{Trivialization and simplification}
    Combining \cref{app:eq:v_ij_gras_pp,app:eq:tangent_vec_id,app:eq:exp_id_simplfied,app:eq:pt_simplified}, we model each $P_{ij}$ such that
    \begin{equation}
        P_{ij} =
        \exp \left(\left(\begin{array}{cc}
            0 & -B_{P_{ij}}^T \\
            B_{P_{ij}} & 0
        \end{array}\right) \right)
        \idpp
        \exp \left(\left(\begin{array}{cc}
        0 &  -B_{P_{ij}}^T \\
        B_{P_{ij}} & 0
        \end{array}\right) \right)^{\top} \\
    \end{equation}
    where $B_{P_{ij}} = \gamma_{ij} [B_{Z_{ij}}]$ with
    $Z_{ij}=
    \left(\begin{array}{cc}
        0 &  B_{Z_{ij}}^T \\
        B_{Z_{ij}} & 0
    \end{array}\right)$ and $B_{Z_{ij}} \in \bbR{(n-p) \times p}$.

    Denoting $O_{ij} =
    \exp \left(\left(\begin{array}{cc}
            0 & -B_{P_{ij}}^T \\
            B_{P_{ij}} & 0
    \end{array}\right) \right)$, $v_{ij}(X)$ can be simplified as
    \begin{equation} \label{app:eq:v_ij_pp_final}
        v_{ij}(X)=
        \frac{1}{2} \inner{ \pi_{*,\pi(P)} \left(\rielog^{\onb}_{(O_{ij})_{1:p}} (\pi^{-1}(X)) \right)}{O_{ij} Z_{ij} O_{ij}^\top}\\
    \end{equation}

    \textbf{Orthonormal bases.}
    Finally, let us handle the orthonormal bases over $T_{\idppqm}\graspp{q,m}$.
    For any tangent vector $V_1, V_2 \in T_{\idppqm}\graspp{q,m}$, we have the following:
    \begin{equation}
        \begin{aligned}
            \inner{V_1}{V_2}_{\idpp}
            &= \frac{1}{2} \inner{V_1}{V_2} \\
            &= \frac{1}{2} \inner{
            \left(\begin{array}{cc}
                0 & B_{V_1}^T \\
                B_{V_1} & 0
            \end{array}\right)
                }{
            \left(\begin{array}{cc}
                0 & B_{V_2}^T \\
                B_{V_2} & 0
            \end{array}\right)
            }\\
            &= \inner{B_{V_1}}{B_{V_2}}
        \end{aligned}
    \end{equation}
    Therefore, the orthonormal basis is
    \begin{equation} \label{app:eq:orth_basis_pp}
        U_{ij}=
        \left(\begin{array}{cc}
                0 & E_{ij}^\top \\
                E_{ij} & 0
            \end{array}\right), \forall i=1,\cdots, m-q \wedge j=1,\cdots,q
    \end{equation}
    where $E_{ij} \in \bbR{(m-q) \times q}$ is the standard basis matrix.

    Combining \cref{app:eq:exp_id_simplfied,app:eq:v_ij_pp_final,app:eq:orth_basis_pp}, one can readily obtain the results.
\end{proof}

\section*{NeurIPS Paper Checklist}

\begin{enumerate}

\item {\bf Claims}
    \item[] Question: Do the main claims made in the abstract and introduction accurately reflect the paper's contributions and scope?
    \item[] Answer: \answerYes{}
    \item[] Justification: The abstract and \cref{sec:introduction} state the scope, contributions, and empirical claims. The main method, examples, and experiments in \cref{subsec:riem_fc,sec:riem_conv_layers,sec:experiments} support these claims across hyperbolic, SPD, and Grassmannian manifolds.
    \item[] Guidelines:
    \begin{itemize}
        \item The answer \answerNA{} means that the abstract and introduction do not include the claims made in the paper.
        \item The abstract and/or introduction should clearly state the claims made, including the contributions made in the paper and important assumptions and limitations. A \answerNo{} or \answerNA{} answer to this question will not be perceived well by the reviewers.
        \item The claims made should match theoretical and experimental results, and reflect how much the results can be expected to generalize to other settings. 
        \item It is fine to include aspirational goals as motivation as long as it is clear that these goals are not attained by the paper. 
    \end{itemize}

\item {\bf Limitations}
    \item[] Question: Does the paper discuss the limitations of the work performed by the authors?
    \item[] Answer: \answerYes{}
    \item[] Justification: The limitations are discussed in \cref{app:sec:limitations}. The paper states that the framework applies to computationally tractable Riemannian manifolds and may not directly apply when tractable Riemannian operators are unavailable.
    \item[] Guidelines:
    \begin{itemize}
        \item The answer \answerNA{} means that the paper has no limitation while the answer \answerNo{} means that the paper has limitations, but those are not discussed in the paper.
        \item The authors are encouraged to create a separate ``Limitations'' section in their paper.
        \item The paper should point out any strong assumptions and how robust the results are to violations of these assumptions (e.g., independence assumptions, noiseless settings, model well-specification, asymptotic approximations only holding locally). The authors should reflect on how these assumptions might be violated in practice and what the implications would be.
        \item The authors should reflect on the scope of the claims made, e.g., if the approach was only tested on a few datasets or with a few runs. In general, empirical results often depend on implicit assumptions, which should be articulated.
        \item The authors should reflect on the factors that influence the performance of the approach. For example, a facial recognition algorithm may perform poorly when image resolution is low or images are taken in low lighting. Or a speech-to-text system might not be used reliably to provide closed captions for online lectures because it fails to handle technical jargon.
        \item The authors should discuss the computational efficiency of the proposed algorithms and how they scale with dataset size.
        \item If applicable, the authors should discuss possible limitations of their approach to address problems of privacy and fairness.
        \item While the authors might fear that complete honesty about limitations might be used by reviewers as grounds for rejection, a worse outcome might be that reviewers discover limitations that aren't acknowledged in the paper. The authors should use their best judgment and recognize that individual actions in favor of transparency play an important role in developing norms that preserve the integrity of the community. Reviewers will be specifically instructed to not penalize honesty concerning limitations.
    \end{itemize}

\item {\bf Theory assumptions and proofs}
    \item[] Question: For each theoretical result, does the paper provide the full set of assumptions and a complete (and correct) proof?
    \item[] Answer: \answerYes{}
    \item[] Justification: The theoretical statements are presented in the main text and appendix with assumptions such as well-defined Riemannian operators discussed in \cref{app:rmk:incomplete_spd,app:rmk:cutlocus_gras}. Complete proofs are provided in \cref{app:sec:proofs}.
    \item[] Guidelines:
    \begin{itemize}
        \item The answer \answerNA{} means that the paper does not include theoretical results.
        \item All the theorems, formulas, and proofs in the paper should be numbered and cross-referenced.
        \item All assumptions should be clearly stated or referenced in the statement of any theorems.
        \item The proofs can either appear in the main paper or the supplemental material, but if they appear in the supplemental material, the authors are encouraged to provide a short proof sketch to provide intuition. 
        \item Inversely, any informal proof provided in the core of the paper should be complemented by formal proofs provided in appendix or supplemental material.
        \item Theorems and Lemmas that the proof relies upon should be properly referenced. 
    \end{itemize}

    \item {\bf Experimental result reproducibility}
    \item[] Question: Does the paper fully disclose all the information needed to reproduce the main experimental results of the paper to the extent that it affects the main claims and/or conclusions of the paper (regardless of whether the code and data are provided or not)?
    \item[] Answer: \answerYes{}
    \item[] Justification: The experimental setup, datasets, modeling choices, architectures, optimizers, and hyperparameters are described in \cref{sec:experiments,app:sec:exp_details}. The method definitions and proofs provide the information needed to reproduce the proposed layers.
    \item[] Guidelines:
    \begin{itemize}
        \item The answer \answerNA{} means that the paper does not include experiments.
        \item If the paper includes experiments, a \answerNo{} answer to this question will not be perceived well by the reviewers: Making the paper reproducible is important, regardless of whether the code and data are provided or not.
        \item If the contribution is a dataset and\slash or model, the authors should describe the steps taken to make their results reproducible or verifiable.
        \item Depending on the contribution, reproducibility can be accomplished in various ways. For example, if the contribution is a novel architecture, describing the architecture fully might suffice, or if the contribution is a specific model and empirical evaluation, it may be necessary to either make it possible for others to replicate the model with the same dataset, or provide access to the model. In general, releasing code and data is often one good way to accomplish this, but reproducibility can also be provided via detailed instructions for how to replicate the results, access to a hosted model (e.g., in the case of a large language model), releasing of a model checkpoint, or other means that are appropriate to the research performed.
        \item While NeurIPS does not require releasing code, the conference does require all submissions to provide some reasonable avenue for reproducibility, which may depend on the nature of the contribution. For example
        \begin{enumerate}
            \item If the contribution is primarily a new algorithm, the paper should make it clear how to reproduce that algorithm.
            \item If the contribution is primarily a new model architecture, the paper should describe the architecture clearly and fully.
            \item If the contribution is a new model (e.g., a large language model), then there should either be a way to access this model for reproducing the results or a way to reproduce the model (e.g., with an open-source dataset or instructions for how to construct the dataset).
            \item We recognize that reproducibility may be tricky in some cases, in which case authors are welcome to describe the particular way they provide for reproducibility. In the case of closed-source models, it may be that access to the model is limited in some way (e.g., to registered users), but it should be possible for other researchers to have some path to reproducing or verifying the results.
        \end{enumerate}
    \end{itemize}

\item {\bf Open access to data and code}
    \item[] Question: Does the paper provide open access to the data and code, with sufficient instructions to faithfully reproduce the main experimental results, as described in supplemental material?
    \item[] Answer: \answerNo{}
    \item[] Justification: Dataset sources and several baseline code sources are cited in \cref{app:sec:exp_details}. The code will be released after the review process.
    \item[] Guidelines:
    \begin{itemize}
        \item The answer \answerNA{} means that paper does not include experiments requiring code.
        \item Please see the NeurIPS code and data submission guidelines (\url{https://neurips.cc/public/guides/CodeSubmissionPolicy}) for more details.
        \item While we encourage the release of code and data, we understand that this might not be possible, so \answerNo{} is an acceptable answer. Papers cannot be rejected simply for not including code, unless this is central to the contribution (e.g., for a new open-source benchmark).
        \item The instructions should contain the exact command and environment needed to run to reproduce the results. See the NeurIPS code and data submission guidelines (\url{https://neurips.cc/public/guides/CodeSubmissionPolicy}) for more details.
        \item The authors should provide instructions on data access and preparation, including how to access the raw data, preprocessed data, intermediate data, and generated data, etc.
        \item The authors should provide scripts to reproduce all experimental results for the new proposed method and baselines. If only a subset of experiments are reproducible, they should state which ones are omitted from the script and why.
        \item At submission time, to preserve anonymity, the authors should release anonymized versions (if applicable).
        \item Providing as much information as possible in supplemental material (appended to the paper) is recommended, but including URLs to data and code is permitted.
    \end{itemize}

\item {\bf Experimental setting/details}
    \item[] Question: Does the paper specify all the training and test details (e.g., data splits, hyperparameters, how they were chosen, type of optimizer) necessary to understand the results?
    \item[] Answer: \answerYes{}
    \item[] Justification: Training and testing details are described in \cref{app:subsec:exp_details_hyperbolic,app:subsec:exp_details_spd,app:subsec:exp_details_grass}. The appendix reports dataset splits where applicable, optimizers, learning rates, batch size, epochs, and model-specific settings.
    \item[] Guidelines:
    \begin{itemize}
        \item The answer \answerNA{} means that the paper does not include experiments.
        \item The experimental setting should be presented in the core of the paper to a level of detail that is necessary to appreciate the results and make sense of them.
        \item The full details can be provided either with the code, in appendix, or as supplemental material.
    \end{itemize}

\item {\bf Experiment statistical significance}
    \item[] Question: Does the paper report error bars suitably and correctly defined or other appropriate information about the statistical significance of the experiments?
    \item[] Answer: \answerYes{}
    \item[] Justification: The experimental tables report values with $\pm$ and describe K-fold averages.
    \item[] Guidelines:
    \begin{itemize}
        \item The answer \answerNA{} means that the paper does not include experiments.
        \item The authors should answer \answerYes{} if the results are accompanied by error bars, confidence intervals, or statistical significance tests, at least for the experiments that support the main claims of the paper.
        \item The factors of variability that the error bars are capturing should be clearly stated (for example, train/test split, initialization, random drawing of some parameter, or overall run with given experimental conditions).
        \item The method for calculating the error bars should be explained (closed form formula, call to a library function, bootstrap, etc.)
        \item The assumptions made should be given (e.g., Normally distributed errors).
        \item It should be clear whether the error bar is the standard deviation or the standard error of the mean.
        \item It is OK to report 1-sigma error bars, but one should state it. The authors should preferably report a 2-sigma error bar than state that they have a 96\% CI, if the hypothesis of Normality of errors is not verified.
        \item For asymmetric distributions, the authors should be careful not to show in tables or figures symmetric error bars that would yield results that are out of range (e.g., negative error rates).
        \item If error bars are reported in tables or plots, the authors should explain in the text how they were calculated and reference the corresponding figures or tables in the text.
    \end{itemize}

\item {\bf Experiments compute resources}
    \item[] Question: For each experiment, does the paper provide sufficient information on the computer resources (type of compute workers, memory, time of execution) needed to reproduce the experiments?
    \item[] Answer: \answerYes{}
    \item[] Justification: \cref{app:subsec:hardware} summarizes the hardware information.
    \item[] Guidelines:
    \begin{itemize}
        \item The answer \answerNA{} means that the paper does not include experiments.
        \item The paper should indicate the type of compute workers CPU or GPU, internal cluster, or cloud provider, including relevant memory and storage.
        \item The paper should provide the amount of compute required for each of the individual experimental runs as well as estimate the total compute. 
        \item The paper should disclose whether the full research project required more compute than the experiments reported in the paper (e.g., preliminary or failed experiments that didn't make it into the paper). 
    \end{itemize}
    
\item {\bf Code of ethics}
    \item[] Question: Does the research conducted in the paper conform, in every respect, with the NeurIPS Code of Ethics \url{https://neurips.cc/public/EthicsGuidelines}?
    \item[] Answer: \answerYes{}
    \item[] Justification: The paper uses publicly available benchmark datasets and there is no ethical concern.
    \item[] Guidelines:
    \begin{itemize}
        \item The answer \answerNA{} means that the authors have not reviewed the NeurIPS Code of Ethics.
        \item If the authors answer \answerNo, they should explain the special circumstances that require a deviation from the Code of Ethics.
        \item The authors should make sure to preserve anonymity (e.g., if there is a special consideration due to laws or regulations in their jurisdiction).
    \end{itemize}

\item {\bf Broader impacts}
    \item[] Question: Does the paper discuss both potential positive societal impacts and negative societal impacts of the work performed?
    \item[] Answer: \answerNA{}
    \item[] Justification: The work is primarily methodological and evaluated on benchmark tasks.
    \item[] Guidelines:
    \begin{itemize}
        \item The answer \answerNA{} means that there is no societal impact of the work performed.
        \item If the authors answer \answerNA{} or \answerNo, they should explain why their work has no societal impact or why the paper does not address societal impact.
        \item Examples of negative societal impacts include potential malicious or unintended uses (e.g., disinformation, generating fake profiles, surveillance), fairness considerations (e.g., deployment of technologies that could make decisions that unfairly impact specific groups), privacy considerations, and security considerations.
        \item The conference expects that many papers will be foundational research and not tied to particular applications, let alone deployments. However, if there is a direct path to any negative applications, the authors should point it out. For example, it is legitimate to point out that an improvement in the quality of generative models could be used to generate Deepfakes for disinformation. On the other hand, it is not needed to point out that a generic algorithm for optimizing neural networks could enable people to train models that generate Deepfakes faster.
        \item The authors should consider possible harms that could arise when the technology is being used as intended and functioning correctly, harms that could arise when the technology is being used as intended but gives incorrect results, and harms following from (intentional or unintentional) misuse of the technology.
        \item If there are negative societal impacts, the authors could also discuss possible mitigation strategies (e.g., gated release of models, providing defenses in addition to attacks, mechanisms for monitoring misuse, mechanisms to monitor how a system learns from feedback over time, improving the efficiency and accessibility of ML).
    \end{itemize}
    
\item {\bf Safeguards}
    \item[] Question: Does the paper describe safeguards that have been put in place for responsible release of data or models that have a high risk for misuse (e.g., pre-trained language models, image generators, or scraped datasets)?
    \item[] Answer: \answerNA{}
    \item[] Justification: The paper does not release high-risk pretrained models, image generators, language models, or scraped datasets. The experiments use existing benchmark datasets.
    \item[] Guidelines:
    \begin{itemize}
        \item The answer \answerNA{} means that the paper poses no such risks.
        \item Released models that have a high risk for misuse or dual-use should be released with necessary safeguards to allow for controlled use of the model, for example by requiring that users adhere to usage guidelines or restrictions to access the model or implementing safety filters. 
        \item Datasets that have been scraped from the Internet could pose safety risks. The authors should describe how they avoided releasing unsafe images.
        \item We recognize that providing effective safeguards is challenging, and many papers do not require this, but we encourage authors to take this into account and make a best faith effort.
    \end{itemize}

\item {\bf Licenses for existing assets}
    \item[] Question: Are the creators or original owners of assets (e.g., code, data, models), used in the paper, properly credited and are the license and terms of use explicitly mentioned and properly respected?
    \item[] Answer: \answerYes{}
    \item[] Justification: Existing datasets and baseline implementations are cited and several URLs are provided in \cref{app:sec:exp_details}.
    \item[] Guidelines:
    \begin{itemize}
        \item The answer \answerNA{} means that the paper does not use existing assets.
        \item The authors should cite the original paper that produced the code package or dataset.
        \item The authors should state which version of the asset is used and, if possible, include a URL.
        \item The name of the license (e.g., CC-BY 4.0) should be included for each asset.
        \item For scraped data from a particular source (e.g., website), the copyright and terms of service of that source should be provided.
        \item If assets are released, the license, copyright information, and terms of use in the package should be provided. For popular datasets, \url{paperswithcode.com/datasets} has curated licenses for some datasets. Their licensing guide can help determine the license of a dataset.
        \item For existing datasets that are re-packaged, both the original license and the license of the derived asset (if it has changed) should be provided.
        \item If this information is not available online, the authors are encouraged to reach out to the asset's creators.
    \end{itemize}

\item {\bf New assets}
    \item[] Question: Are new assets introduced in the paper well documented and is the documentation provided alongside the assets?
    \item[] Answer: \answerNA{}
    \item[] Justification: The paper does not introduce or release a new dataset, benchmark, or pretrained model asset. The proposed layers and networks are documented as methods in the paper.
    \item[] Guidelines:
    \begin{itemize}
        \item The answer \answerNA{} means that the paper does not release new assets.
        \item Researchers should communicate the details of the dataset\slash code\slash model as part of their submissions via structured templates. This includes details about training, license, limitations, etc.
        \item The paper should discuss whether and how consent was obtained from people whose asset is used.
        \item At submission time, remember to anonymize your assets (if applicable). You can either create an anonymized URL or include an anonymized zip file.
    \end{itemize}

\item {\bf Crowdsourcing and research with human subjects}
    \item[] Question: For crowdsourcing experiments and research with human subjects, does the paper include the full text of instructions given to participants and screenshots, if applicable, as well as details about compensation (if any)? 
    \item[] Answer: \answerNA{}
    \item[] Justification: The paper does not conduct crowdsourcing experiments or new human-subject studies. It uses existing benchmark datasets.
    \item[] Guidelines:
    \begin{itemize}
        \item The answer \answerNA{} means that the paper does not involve crowdsourcing nor research with human subjects.
        \item Including this information in the supplemental material is fine, but if the main contribution of the paper involves human subjects, then as much detail as possible should be included in the main paper. 
        \item According to the NeurIPS Code of Ethics, workers involved in data collection, curation, or other labor should be paid at least the minimum wage in the country of the data collector. 
    \end{itemize}

\item {\bf Institutional review board (IRB) approvals or equivalent for research with human subjects}
    \item[] Question: Does the paper describe potential risks incurred by study participants, whether such risks were disclosed to the subjects, and whether Institutional Review Board (IRB) approvals (or an equivalent approval/review based on the requirements of your country or institution) were obtained?
    \item[] Answer: \answerNA{}
    \item[] Justification: The paper does not report crowdsourcing or new human-subject research requiring IRB approval. The experiments rely on existing benchmark datasets.
    \item[] Guidelines:
    \begin{itemize}
        \item The answer \answerNA{} means that the paper does not involve crowdsourcing nor research with human subjects.
        \item Depending on the country in which research is conducted, IRB approval (or equivalent) may be required for any human subjects research. If you obtained IRB approval, you should clearly state this in the paper. 
        \item We recognize that the procedures for this may vary significantly between institutions and locations, and we expect authors to adhere to the NeurIPS Code of Ethics and the guidelines for their institution. 
        \item For initial submissions, do not include any information that would break anonymity (if applicable), such as the institution conducting the review.
    \end{itemize}

\item {\bf Declaration of LLM usage}
    \item[] Question: Does the paper describe the usage of LLMs if it is an important, original, or non-standard component of the core methods in this research? Note that if the LLM is used only for writing, editing, or formatting purposes and does \emph{not} impact the core methodology, scientific rigor, or originality of the research, declaration is not required.
    \item[] Answer: \answerNA{}
    \item[] Justification: \cref{app:sec:llm-usage} describes their use for language polishing, minor editing, and limited assistance in translating mathematical formulations into PyTorch code.
    \item[] Guidelines:
    \begin{itemize}
        \item The answer \answerNA{} means that the core method development in this research does not involve LLMs as any important, original, or non-standard components.
        \item Please refer to our LLM policy in the NeurIPS handbook for what should or should not be described.
    \end{itemize}

\end{enumerate}

\end{document}